\documentclass{article}

    \PassOptionsToPackage{numbers, compress}{natbib}

 \usepackage[preprint]{neurips_2026}

\usepackage[utf8]{inputenc} 
\usepackage[T1]{fontenc}    
\usepackage{url}            
\usepackage{booktabs}       
\usepackage{amsfonts}       
\usepackage{nicefrac}       
\usepackage{xcolor}         
\usepackage{caption}
\usepackage{footmisc}
\usepackage{array}
\usepackage{enumitem}
\usepackage{algorithmic}
\usepackage{algorithm}
\usepackage{graphicx}
\usepackage{textcomp}
\usepackage{subcaption}
\usepackage{minitoc}
\usepackage{tablefootnote}
\newcommand{\myheight}{3.6cm}
\newcommand{\fourheight}{3.0cm}
\newcommand{\mywidth}{0.95\linewidth}

\usepackage{times}
\usepackage{graphicx}
\usepackage{latexsym}
\usepackage{amsmath,amsthm,amssymb,stmaryrd,pifont,wasysym}
\usepackage{dsfont}
\usepackage{hyperref}
\usepackage{tikz}
\usetikzlibrary{positioning}

\newtheorem{definition}{Definition}
\newtheorem{lemma}{Lemma}

\newtheorem{theorem}{Theorem}
\newtheorem{corollary}{Corollary}
\newtheorem{example}{Example}
\newtheorem{remark}{Remark}

\definecolor{amber}{rgb}{1.0, 0.75, 0.0}
\definecolor{airforceblue}{rgb}{0.36, 0.54, 0.66}

\DeclareMathOperator*{\argmin}{argmin}

\newcommand{\n}{\llbracket n \rrbracket}
\newcommand{\N}{\llbracket N \rrbracket}
\newcommand{\Sn}{\mathfrak{S}_n}

\newcommand{\trinorm}[1]{\left\vert\kern-0.25ex\vert\kern-0.25ex\vert#1 \vert\kern-0.25ex\vert\kern-0.25ex\vert\right.}
\def\comments#1
{
\textcolor{red}{#1}
}

\title{Decentralized Ranking Aggregation via Gossip: Convergence and Robustness}
\author{
  Kerrian Le Caillec\quad Anna van Elst\quad Igor Colin\quad Stephan Clémençon \\
  LTCI, Télécom-Paris, Institut Polytechnique de Paris\\
  19 Place Marguerite Perey, 91120 Palaiseau \\
  \texttt{name.surname@telecom-paris.fr}
}

\begin{document}

\maketitle

\begin{abstract}
The concept of ranking aggregation plays a central role in preference analysis, and numerous algorithms for calculating median rankings, often originating in social choice theory, have been documented in the literature, offering theoretical guarantees in a centralized setting, \textit{i.e.}, when all the ranking data to be aggregated can be brought together in a single computing unit. For many technologies (\textit{e.g.} peer-to-peer networks, IoT, multi-agent systems), extending the ability to calculate consensus rankings with guarantees of convergence and resilience to potential contamination in a decentralized setting, when preference data is initially distributed across a communicating network, remains a major methodological challenge. Indeed, in recent years, the literature on decentralized computation has mainly focused on computing or optimizing statistics such as arithmetic means using gossip algorithms. The purpose of this article is precisely to study how to achieve reliable and resilient consensus on collective rankings in a decentralized setting, thereby raising new questions, robustness to corrupted nodes, and scalability through reduced communication costs in particular. The approach proposed and analyzed here relies on the robustness guarantees offered by random gossip communication, which allows autonomous agents to compute a global ranking consensus using local interactions only, without coordination or a central authority. 

\end{abstract}

\section{Introduction}
 
\textit{Ranking aggregation} aims to summarize several rankings relating to a common set of alternatives, obtained from different sources, into a single representative or \textit{median} ranking. This problem, also known as \textit{consensus ranking} has its origin in social choice theory \cite{borda1781m, condorcet1785essai, Copeland1951, Kemeny59} and voting systems \cite{brandt2012computational}. More recently, rank aggregation has emerged as a core primitive in machine learning. In learning to rank \cite{liu_learning_2009}, the goal is to train models from ranked supervision signals arising from multiple annotators or queries, aggregating those signals consistently is a rank aggregation problem. In preference learning \cite{furnkranz_preference_2011}, models learn utility functions from pairwise comparisons, and consolidating preferences across distributed data sources requires principled aggregation. Most prominently, Reinforcement Learning from Human Feedback  \cite{christiano_deep_2017, ziegler_fine-tuning_2020}, especially in Natural Language Processing tasks, relies on aggregating ranked preferences from multiple human annotators to shape reward models, yet standard pipelines collapse this aggregation into a single centralized step, ignoring the social choice structure of the problem and its vulnerabilities \cite{siththaranjan_distributional_2024, himmi-etal-2024-towards}.
Other applications include crowdsourcing platforms \cite{chatterjee2018weighted, caragiannis2019optimizing}, and recommender systems \cite{baltrunas2010group}. A wide variety of aggregation methods has been developed for the centralized setting, where computation is performed by a single processing unit \cite{akritidis2023flagr}. While the theoretical and empirical properties of centralized rank aggregation methods are now well established \cite{korba2017learning, dwork2001rank}, \textit{decentralized} ranking aggregation has received much less attention in the literature. 

To our knowledge, no prior work establishes convergence guarantees for fully decentralized rank aggregation beyond federated \cite{mcmahan_communication-efficient_2017, sima2024federated} settings which require a central aggregator. Indeed, existing work has mainly focused on configurations which are inherently centralized: they require a single trusted entity to collect and process all individual preferences, introducing a single point of failure and communication bottlenecks. In this paper, we consider a fully decentralized framework in which voters are distributed across a communication network, have limited computing resources, and can only communicate with their immediate neighbors, with no access to a central server \cite{kempe_gossip-based_2003, boyd2006randomized}. Unlike, real-valued quantities, rankings are elements of the symmetric group $\Sn$, and the coordinate-wise arithmetic means does not preserve their permutation structure. This model is standard in \textit{e.g.}, peer-to-peer systems \cite{badis2021p2pcf}, wireless sensor networks \cite{shah2009gossip}, and IoT networks \cite{shah2009gossip}. Our goal is to study how a network of autonomous agents, each with its own preference ranking, can reach a consensus ranking without a central authority. We aim to pioneer work related to adapting gossip protocols to ranking-based problems.

We study a general class of score-based ranking consensus methods, subsuming many classical voting rules such as Borda~\cite{borda1781m} and Copeland~\cite{Copeland1951} as canonical instances~\cite{Brandt_Conitzer_Endriss_Lang_Procaccia_2016}. Our approach applies a single decentralized gossip algorithm~\cite{boyd2006randomized} for score estimation across any rule in this class, followed by a local ranking projection onto the symmetric group $\Sn$, yielding a unified and communication-efficient framework for distributed consensus. A key consideration is robustness against corrupt nodes and vote manipulation, a topic explored through the notions of spam resistance~\cite{dwork2001rank} and breakdown points~\cite{goibert2023robust}.

\paragraph{Contributions.} Our main contributions are as follows. \textit{(i)} We propose a unified gossip algorithm for decentralized score-based consensus applicable to any voting rule following a specific decomposition. We show that the convergence rates decay exponentially fast for the average Kendall-$\tau$ error under any locally ranking-stable aggregation map and the rate depends on a stability radius. \textit{(ii)} We provide an empirical robustness analysis via the breakdown function and then we relate contamination levels to the gossip process. \textit{(iii)} We present experiments on various network topologies and ranking datasets, showing how Borda and Copeland consensus exhibit robustness under data contamination (see Figure~\ref{fig:gossip_desc} for illustration of the protocols).

\paragraph{Other related work.}
Data ranking over gossip aggregation has recently attracted attention from a statistical perspective~\cite{elst_robust_2025, van2026robust}, establishing connections between gossip-based averaging and robust estimation for rankings and trimmed means. Byzantine robustness in decentralized learning has been studied extensively~\cite{he_byzantine-robust_2023, gaucher_unified_2025}, with particular attention to characterizing the influence of corrupted nodes on convergence; the dual-approach framework of~\cite{gaucher_byzantine-robust_2025} provides tight clipping-based guarantees for average consensus that complement our breakdown function analysis in Section~\ref{sec:robustness}. Regarding centralized approximations of the Kemeny consensus, \cite{jiao_controlling_2016} introduces an embedding system (see Appendix~\ref{sec:gossip-other}). 

\section{Background and preliminaries}\label{sec:background}

We briefly review the fundamental concepts of consensus ranking theory and gossip-based decentralized computing. Throughout this article, preferences over $n$ items indexed by $i \in \n:=\{1, \ldots, n\}$ are represented as permutations ${\sigma} \in \Sn$, where $\sigma(i)$ denotes the rank assigned to item $i$. $\mathbb{I}\{\mathcal{E}\}$ refers to the indicator function of any event $\mathcal{E}$. We denote scalars by lowercase letters \(x \in \mathbb{R}\), vectors by boldface lowercase letters \(\mathbf{x} \in \mathbb{R}^n\), and matrices by boldface uppercase letters \(\mathbf{X} \in \mathbb{R}^{n \times m}\). We write $\{\mathbf{e}_k\mid k\in \n\}$ for $\mathbb{R}^n$ canonical basis, \(\mathbf{M}^{\top}\) for the transpose of any matrix \(\mathbf{M}\), \(|F|\) for the cardinality of any finite set $F$, \(\mathbf{I}_n\) for the identity matrix in \(\mathbb{R}^{n \times n}\), \(\mathbf{1}_n\) for the vector in \(\mathbb{R}^n\) whose coordinates are all equal to $1$, and \(\|\cdot\|\) for the usual \(\ell_2\)-norm.

\paragraph{Ranking aggregation/consensus.}
\label{sec:aggregation}
Let $P$ be a probability distribution over $\Sn$, written as $P = \sum_{\sigma \in \Sn} P(\sigma)\,\delta_\sigma,$ where $\delta_\sigma$ denotes the Dirac measure at $\sigma$. We denote by $\Delta_{\Sn}$ the probability simplex over $\Sn$. Given a measurable map $g : \Sn \times \Delta_{\Sn} \to \mathbb{R}$, the ranking aggregation problem \cite{korba2017learning} consists in finding a consensus permutation ${\sigma}_\star \in \Sn$ such that
\[{\sigma}_\star \in \arg\min_{\sigma \in \Sn} g(\sigma, P).\]
Let $\{\sigma_1, \ldots, \sigma_N\}$ be a dataset of $N \ge n$ permutations in $\mathfrak{S}_n$ (in practical cases, we can consider $n\ll N)$, representing the rankings provided by $N$ voters, with repetitions allowed. The underlying distribution $P$ is then approximated by the empirical measure $\widehat{P}_N = N^{-1} \sum_{i=1}^N \delta_{\sigma_i},$ whose support is given by the previous dataset. Equivalently, the data can be viewed as an element of the symmetric product $\mathrm{Sym}^N(\Sn) := (\Sn)^N / \mathfrak{S}_N,$. We both consider the quotiented group and empirical distribution representations when deemed necessary. This notion of central ranking can be interpreted in different ways. It can correspond to the mode of a probability distribution on $\Sn$ (typically the Mallows distribution \cite{Mallows57}, Appendix~\ref{sec:prob_model}), considering the rankings ${ \sigma}_v$ as independent realizations of it. Alternatively, consensus can be considered as a barycentric ranking according to the metric approach, which is very popular in practice. When $g$ is affine in $P$, the empirical problem is equivalent to equipping $\Sn$ with a (pseudo-) metric $d: \Sn^2 \to \mathbb R$, ranking medians are then defined as the (not necessarily unique) solutions of the (discrete) optimization problem
\begin{equation}\label{eq:metric_median}
\min _{{ \sigma} \in \Sn} L_{ P}(\sigma) :=\mathbb E_{{\Sigma} \sim P}\, d({\sigma}, {\Sigma}).
\end{equation}
 Given a measurable map $f:\Sn \to \mathcal X$ where $\mathcal X$ is a Hilbert space, when no ambiguity exists for $\mathbb E_{\Sigma\sim P} f(\Sigma)$, we just write $\mathbb E_P f$ for clarity sake. To evaluate the similarity between two elements of the symmetric group $\Sn$, we need to consider a metric. Among the many possible choices (see \textit{e.g.} \cite{Deza}), a very popular metric is the Kendall-$\tau$ distance $d_\tau$, defined as the number of discording pairs: for any ${ \sigma}, { \sigma}' \in \Sn$ as
\[d_\tau\left({ \sigma}, { \sigma}^{\prime}\right)=\sum_{1\le i<j\le n} \mathbb{I}\left\{({ \sigma}(i)-{ \sigma}(j))\left({ \sigma}^{\prime}(i)-{ \sigma}^{\prime}(j)\right)<0\right\}.\]
In this case, solutions of Equation~\eqref{eq:metric_median} are then referred to as \textit{Kemeny medians} \cite{Kemeny59} and, remarkably, the two approaches, probabilistic and metric, coincide insofar as Kendall-$\tau$ medians correspond to maximum likelihood estimators under the assumption that the ${ \sigma}_v$ are independent realizations of a Mallows distribution, see \cite{young1988condorcet}. From a social choice perspective, it satisfies the Condorcet criterion: any candidate $i\in \n$ who would win against all others in head-to-head majority votes is guaranteed to be ranked first, \textit{cf} \cite{Brandt_Conitzer_Endriss_Lang_Procaccia_2016}. The Kemeny consensus further satisfies the extended Condorcet property, which ensures robustness to adversarial inputs; see \cite{dwork2001rank}.  

However, these properties are achieved at a significant computational cost, as computing a Kemeny consensus is NP-hard in general (see, \textit{e.g.}, \cite{Hudry08}). This motivates the consideration of score-based ranking aggregation methods. The principle is to define a score function $S : \mathrm{Sym}^N(\Sn) \to \mathbb{R}^n$ that maps a sample of permutations to a vector of candidate scores $\mathbf{s} = (s_1, \dots, s_n)^\top$. The score vectors are then ranked in ascending/descending order depending on the score counting method, and the consensus is straightforwardly derived in $\Sn$ when no ties are present \textit{i.e.} when all score coordinates are distinct (see Appendix~\ref{sec:practical-tie} for details). For a given distribution $P$, we also denote the \emph{pairwise probabilities} for a pair $(i,j)$ as $p_{ij}= \mathbb P(\sigma(i) < \sigma(j)) $.   We now present examples of score-based voting rules.
\begin{example}[Borda method, \cite{borda1781m}]
Each permutation ${ \sigma}_v\in\Sn$ is embedded as its rank vector $({\sigma}_v(1), \ldots, {\sigma}_v(n))^\top \in \mathbb{R}^n$, the Borda score function $S_B$ returns the Borda score by averaging the $N$ vectors:
\[S_B\left(\widehat P_N\right) = \frac{1}{N}\sum_{v=1}^N ({\sigma}_v(1), \ldots, {\sigma}_v(n))^\top.\]
The Borda consensus ${ \sigma}^B_\star \in \Sn$ ranks candidates in ascending order of their average ranks $[S_B(\widehat P_N)]_i$. This method exploits the positional information in each ranking and, when Borda scores are distinct, yields the unique minimizer of the sum of Spearman-$\rho$ distance \cite{dwork2001rank} (for more details see Appendix \ref{sec:further_borda}).
\end{example}
 
\begin{example}[Copeland method, \cite{Copeland1951}]
Each permutation $\sigma_v$ is embedded into its pairwise preference matrix, where each pair $(i,j)$ with $i < j$ records $\mathbb{I}\{{\sigma}_v(i) < {\sigma}_v(j)\}$. Averaging yields the pairwise preference matrix with entries $\widehat{p}_{ij} =N^{-1}\sum_{v=1}^N \mathbb{I}\{{\sigma}_v(i) < {\sigma}_v(j)\}$. The Copeland score functions $S_C$ returns a vector in $\mathbb R^n$ whose $i$-th coordinate is:
\begin{equation}
    \label{eq:copeland_scores}
    \left[S_C\left(\widehat P_N \right)\right]_i = \sum_{j \neq i} \left[\mathbb{I}\{\widehat{p}_{ij} > \tfrac{1}{2}\} - \mathbb{I}\{\widehat{p}_{ij} < \tfrac{1}{2}\}\right],
\end{equation}

counting pairwise wins minus losses. The Copeland consensus ${ \sigma}^C_\star$ ranks candidates in descending order of $[S_C(\widehat P_N )]_i$ (for more details, see Appendix \ref{sec:further_cope}).
\end{example}
The distribution $P$ is \emph{weak stochastically transitive} if $p_{ij}\geq 1/2$ and $p_{jk}\geq 1/2$ imply $p_{ik}\geq 1/2$, and \emph{strongly stochastically transitive} if furthermore $p_{ik}\geq \max\{p_{ij}, p_{jk}\}$, for all distinct $(i,j,k)\in\n^3$. Under these conditions, Borda's, Copeland's, and Kemeny's methods yield identical unique solutions (see \cite{korba2017learning}). However, Arrow's impossibility theorem establishes that no rank aggregation method can simultaneously satisfy all desirable criteria \cite{arrow1951social}.

\paragraph{Decentralized estimation via gossip algorithms.}
\label{sec:background_gossip}
Let $\mathcal{G} = (V, E)$ be a connected, non-bipartite graph with $V = \llbracket N \rrbracket$. Each agent $v \in V$ holds an initial local state $\mathbf x_v(0)$ lying in a Hilbert space, and the collective goal is to compute the global average $\bar{\mathbf x} = N^{-1}\sum_{v=1}^N \mathbf x_v(0)$ through purely local interactions. Two canonical protocols achieve this. In \emph{synchronous gossip}, all nodes update simultaneously via a doubly stochastic \emph{gossip matrix} $\mathbf W$ ($W_{uv}>0$ only if $(u,v)\in E$ or $u=v$ and $\mathbf W\mathbf 1_N = \mathbf W^\top \mathbf 1_N = \mathbf 1_N$), $\mathbf X(t+1) = \mathbf W\mathbf X(t)$ \cite{tsitsiklis_problems_1984, kempe_gossip-based_2003} with $\mathbf X(t) := [\mathbf x_1 (t)^\top, \dots, \mathbf x_N(t)^\top]^\top $, converging to $\bar{\mathbf x}\mathbf{1}_N^\top$. The rate of convergence depends on the spectral gap of $\mathbf W$ denoted $\lambda_{\mathbf W}$ (see Appendix~\ref{sec:graph_topology} for more details). Larger spectral gaps, achieved by denser or expander-like graphs, yields faster convergence \cite{kempe_gossip-based_2003}. In other words, each node $v\in\N$ receives simultaneously the values of the nodes in its neighborhood denoted $\mathcal N_v$ and updates its state accordingly. In \emph{randomized gossip} \cite{boyd2006randomized}, a single edge $e=(u,v)$ is sampled from $p:=(p_e)_{e\in E}$ at each step, and the two incident nodes average their states while all others remain unchanged. The update of state is thus given by the gossip matrix $\mathbf W_e = \mathbf I_N - (1/2) (\mathbf e_u - \mathbf e_v)(\mathbf e_u - \mathbf e_v)^\top$. The expected update matrix is then $\mathbb{E}[\mathbf W_e] = \mathbf I_N - (1/2)\mathbf L(p)$, where $\mathbf L(p) = \sum_{e\in E} p_e (\mathbf e_u-\mathbf e_v)(\mathbf e_u-\mathbf e_v)^\top$ is the weighted graph Laplacian. The spectral gap of $\mathbb{E}[\mathbf W_e]$ is denoted $\lambda(p)/2$ and corresponds to the second smallest eigenvalue of $(1/2)\,\mathbf L(p)$. We refer to \cite{boyd2006randomized, shah2009gossip} for further details (see Appendix~\ref{sec:graph_topology} for more details about graph constructions). In the next section, we make the bridge between raking-based approach and gossip estimation.
\begin{figure}
    \centering
    \begin{subfigure}[b]{0.30\textwidth}
        \centering
              \begin{tikzpicture}[scale=0.85,
          node/.style={circle,draw=black,fill=airforceblue,text=white,
                       font=\small\bfseries,inner sep=3pt,minimum size=20pt},
          lbl/.style={font=\tiny}]
        \node[node] (v1) at (1.8,2.8)  {$v_1$};
        \node[node] (v2) at (0,1.6)    {$v_2$};
        \node[node] (v3) at (0.4,0)    {$v_3$};
        \node[node] (v4) at (2.2,0)    {$v_4$};
        \node[node] (v5) at (3.2,1.4)  {$v_5$};
        \draw[thick,black!50] (v1)--(v2) (v1)--(v5) (v2)--(v3)
                                  (v3)--(v4) (v4)--(v5) (v1)--(v4);
        \node[lbl,above=2pt of v1] {$\phi(\sigma_1)$};
        \node[lbl,above=2pt of v2] {$\phi(\sigma_2)$};
        \node[lbl,below=2pt of v3] {$\phi(\sigma_3)$};
        \node[lbl,below=2pt of v4] {$\phi(\sigma_4)$};
        \node[lbl,below right=2pt of v5] {$\phi(\sigma_5)$};
      \end{tikzpicture}
        \caption{At iteration $t=0$, each node $v$ initializes a ranking $\sigma_v\in\Sn$ and they subsequently embed it using $\phi$.}
    \end{subfigure}
    \hfill
    \begin{subfigure}[b]{0.29\textwidth}
        \centering
              \begin{tikzpicture}[scale=0.85,
          node/.style={circle,draw=black,fill=airforceblue,text=white,
                       font=\small\bfseries,inner sep=3pt,minimum size=20pt},
          lbl/.style={font=\tiny}]
        \node[node] (v1) at (1.8,2.8)  {$v_1$};
        \node[node] (v2) at (0,1.6)    {$v_2$};
        \node[node] (v3) at (0.4,0)    {$v_3$};
        \node[node] (v4) at (2.2,0)    {$v_4$};
        \node[node] (v5) at (3.2,1.4)  {$v_5$};
        \draw[thick,black!50] (v1)--(v2) (v1)--(v5) (v2)--(v3)
                                  (v3)--(v4) (v4)--(v5) (v1)--(v4);
        \node[lbl,above=2pt of v1] {$\frac 1 2 (\phi(\sigma_1) + \phi(\sigma_4))$};
        \node[lbl,above=2pt of v2]  {};
        \node[lbl,below=2pt of v3] {};
        \node[lbl,below=2pt of v4] {$\frac 1 2 (\phi(\sigma_1) + \phi(\sigma_4))$};
        \node[lbl,below right=2pt of v5] {};
        \draw[<->,amber,very thick] (v1) -- node[above,sloped,font=\tiny]{gossip} (v4);
      \end{tikzpicture}
        \caption{At iteration $t=1$, a communication occurred an the estimate of the two contacted node updates.}
    \end{subfigure}
    \hfill
    \begin{subfigure}[b]{0.30\textwidth}
        \centering
        \begin{tikzpicture}[scale=0.85,
          node/.style={circle,draw=black,fill=airforceblue,text=white,
                       font=\small\bfseries,inner sep=3pt,minimum size=20pt},
          lbl/.style={font=\tiny}]
        \node[node] (v1) at (1.8,2.8)  {$v_1$};
        \node[node] (v2) at (0,1.6)    {$v_2$};
        \node[node] (v3) at (0.4,0)    {$v_3$};
        \node[node] (v4) at (2.2,0)    {$v_4$};
        \node[node] (v5) at (3.2,1.4)  {$v_5$};
        \draw[thick,black!50] (v1)--(v2) (v1)--(v5) (v2)--(v3)
                                  (v3)--(v4) (v4)--(v5) (v1)--(v4);
        \node[lbl,above=2pt of v1] {$\psi(\mathbf x_1)\to \sigma_\star$};
        \node[lbl,above left=2pt of v2]  {};
        \node[lbl,below=2pt of v3] {};
        \node[lbl,below=2pt of v4] {$\psi(\mathbf x_4)\to \sigma_\star$};
        \node[lbl,below =2pt of v5] {};
      \end{tikzpicture}
        \caption{After some iterations, nodes compute their score based on function $\psi$. The scores can then be ranked to yield $\sigma_\star$.}
    \end{subfigure}
    \caption{Illustration of the randomized $(\phi, \psi)$-gossip. Considering a set of $N=5$ nodes each possessing a ranking, we succinctly present the gossip protocol for rankings.}
    \label{fig:gossip_desc}
\end{figure}

\section{Decentralized algorithms for ranking aggregation}
\label{sec:main_framework}
We consider a system of $N \geq 2$ agents on a graph $\mathcal G$, each holding a preference ranking over $n \geq 1$ alternatives, represented by a permutation $\sigma_v \in \Sn$ denoted as $\sigma_1, \dots, \sigma_N$ with $\widehat P_N$ the associated empirical distribution. A score-based problem that can be estimated using gossip algorithm has the following shape $S(\widehat P_N) = \psi\circ N^{-1} \sum_{v=1}^N \phi(\sigma_v),$ where $\phi:\Sn \to \mathbb R^I$ is an embedding and $\psi: \mathbb R^I \to \mathbb R^n$ is an aggregation map and $I$ a finite index set with $|I|$ being polynomial in $n$ (see Table~\ref{tab:ranking_methods}). For $\phi$, we define the mixed seminorm as $\trinorm{\phi} := \max_{\sigma\in \Sn}\|\phi(\sigma) \|$. We can show that both Borda and Copeland scores follow this construction. In fact, for Borda score, we have $\psi_B= \mathrm{Id}$ and $\phi_B$ the application that maps a permutation $\sigma$ onto its $\mathbb R^n$ representation, then we find that Borda score is just $S_B(\widehat P_N) = \mathrm{Id}\circ N^{-1} \sum_{v=1}^N \phi_B(\sigma_v)$. For Copeland score, we consider $\phi_C$ the application that returns the empirical pairwise probabilities and  $[\psi_C((x_{ij})_{i<j})]_i = \sum_{j\ne i} \mathbb I \{x_{ij} >1/2\} - \mathbb I \{x_{ij} <1/2\}$ (See Appendix~\ref{sec:further_cope} for more details). A score function $S$ that respects this decomposition is said to be a $(\phi,\psi)$-score function. In what follows, we denote the \emph{average embedding} as $\bar{\mathbf{x}} := (\bar{x}^{(1)}, \ldots, \bar{x}^{(|I|)})^{\top} :=N^{-1}\sum_{v=1}^N \phi(\sigma_v)$ and such that $\bar x^{(i)} = N^{-1}\sum_{v=1}^N [\phi(\sigma_v)]_i$. Let $\mathbf{s}:= S(\widehat P_N)$ be the score vector associated with $S$ and $\widehat P_N$, where $s_i = [\psi(\bar {\mathbf x})]_i$.

Moreover, an application $\psi: \mathbb R^I \to \mathbb R^n$ is said to be \emph{locally ranking-stable} at $\mathbf{x}\in \mathbb R ^I $, \textit{i.e.}, if there exists $r>0$ such that for $\mathbf y\in\mathbb R ^I $ verifying $\|\mathbf{y}-\mathbf x\|\le r$, then we have $d_\tau(\psi(\mathbf{y}), \psi(\mathbf x)) = 0.$ The supremum for $r$ is called the \emph{stability radius} $r_\psi(\mathbf x)$. A perturbation belonging in the $\ell_2$-ball centered at $\mathbf x$ with radius $r_\psi(\mathbf x)$ offers a sufficient but not necessary condition for the rank to remain unchanged. We note that this quantity is only only known asymptotically by each node (see Appendices~\ref{sec:borda_radius} and \ref{sec:cope_radius} for direct applications on Borda's and Copeland's methods). Throughout this section, the dataset $\{\sigma_1,\ldots,\sigma_N\}$ is treated as fixed, so that $\widehat{P}_N$, $\bar{\mathbf{x}}$, $\sigma_\star$ and $r_\psi(\bar{\mathbf{x}})$ are deterministic features. We now describe our two algorithms and refer to Figure~\ref{fig:gossip_desc} for overall description of Algorithm~\ref{alg:decentralized-cons}.

\begin{figure}[ht]
\begin{minipage}[t]{0.54\textwidth}
\begin{algorithm}[H]
\caption{Synchronous $(\phi,\psi)$-score Gossip.}
\label{alg:sync-decentralized-cons}
\begin{algorithmic}
\STATE \textbf{Init}: Given a coefficient $\alpha\in[1,2]$, a ranking $\sigma_v$, each voter $v$ initializes its estimates $\mathbf{x}_v = \phi(\sigma_v)\in\mathbb R^I$ and $\mathbf{x}'_v = \phi(\sigma_v)\in\mathbb R^I$. 
\FOR{$t\in \llbracket T \rrbracket $}
\FOR{$v\in\N$ in parallel}
\FOR{$u$ in neighborhood of $v$}
\STATE Send $\mathbf x_v$, receive $\mathbf x_u$
\ENDFOR
\STATE $\mathbf{x}_v, \mathbf x'_v \gets (1-\alpha)\mathbf{x}'_v +\alpha\sum_{u\in \mathcal N_v} W_{vu} \mathbf x_u,\, \mathbf x _v$.
\ENDFOR
\ENDFOR
\STATE Each voter $v$ computes $\psi(\mathbf x_v)$ locally. 
\STATE \textbf{Output:} Each voter $v$ returns the ranking associated with $\psi(\mathbf x_v)$.
\end{algorithmic}
\end{algorithm}
\end{minipage}
\hfill
\begin{minipage}[t]{0.42\textwidth}
\begin{algorithm}[H]
\caption{Randomized $(\phi,\psi)$-score Gossip.}
\label{alg:decentralized-cons}
\begin{algorithmic}
\STATE \textbf{Init}: Given a system $ p = (p_e)_{e\in E}$, a~ranking $\sigma_v$, each voter $v$ initializes its estimate $\mathbf{x}_v = \phi(\sigma_v)\in\mathbb R^I$.\\
\vspace{1em}
\FOR{$t\in \llbracket T \rrbracket $}
\STATE Select an edge $e = (u,v)\in E$ with probability $p_e$.
\STATE Update $\mathbf{x}_u, \mathbf{x}_v \leftarrow ({\mathbf{x}_u + \mathbf{x}_v})/2$.
\ENDFOR
\vspace{1.1em}
\STATE Each voter $v$ computes $\psi(\mathbf x_v)$ locally.
\vspace{0.1em}
\STATE \textbf{Output:} Each voter $v$ returns the ranking associated with $\psi(\mathbf x_v)$.
\end{algorithmic}
\end{algorithm}
\end{minipage}
\end{figure}

\paragraph{Synchronous score gossip.} The synchronous gossip algorithm for $(\phi, \psi)$-scores is given in Algorithm~\ref{alg:sync-decentralized-cons}. At each round $t\ge 0$, every node $v \in \N$ simultaneously exchanges its current estimate $\mathbf{x}_v(t)$ with all its neighbors and performs a weighted average update governed by the gossip matrix $\mathbf W$. After $T$ rounds, each node independently decodes its local estimate into a ranking via $\psi$. The faster the spectral gap $\lambda_{\mathbf W}$ decays, the faster the estimates $\mathbf{x}_v(t)$ concentrate around the empirical mean $\bar{\mathbf{x}}$, and by local ranking-stability of $\psi$, the faster each node recovers the consensus ranking $\sigma_\star$. However, using a second-order scheme with a second estimate denoted $\mathbf x'_v$, we can fasten the process \cite{cao_accelerated_2006} with a rate depending on $\sqrt{\lambda_{\mathbf W}}$. This notably involves an expression of the update matrix as a Chebyshev polynomial as detailed in \cite{liu_analysis_2009, berthier_accelerated_2019} and the optimal memory coefficient $\alpha^\star = (2 - 2\sqrt {\lambda_{\mathbf W}(1 - \lambda_{\mathbf W}/4)})/ (1- \lambda_{\mathbf W}/2)^2$. When $\alpha = 1$, the second estimate $\mathbf{x}'_v$ becomes 
redundant and the scheme reduces to the standard first-order synchronous gossip protocol, in which each node maintains only a single embedding vector $\mathbf{x}_v$. This is made precise in Theorem~\ref{thm:sync-general-ranking-convergence} (See Appendix~\ref{sec:proofs} for the proofs of main results).  For each $i \in I$, we denote the estimate of the $i$-th coordinate across all voters at time $t$ as $\mathbf{x}^{(i)}{(t)} = (x_{1i}{(t)}, x_{2i}{(t)}, \ldots, x_{Ni}{(t)})^{\top}\in\mathbb R^N $.
\begin{theorem}
\label{thm:sync-general-ranking-convergence}
Consider Algorithm~\ref{alg:sync-decentralized-cons} with $\alpha = \alpha^\star$. Let $\sigma_\star$ be the consensus ranking for $\widehat P_N$, and $\mathbf W$ be the gossip matrix. Assume $\psi$ satisfies local ranking-stability at empirical mean $\bar{\mathbf{x}}$. Then, the average expected Kendall-$\tau$ error satisfies for $t\ge 0$:
\begin{equation}\label{eq:sync_conv}
\frac{1}{N}\sum_{v=1}^N d_\tau\left(\sigma_\star, \widehat{\sigma}_v(t)\right)\leq \frac{\binom n 2 }{Nr_\psi(\bar{\mathbf x})^2} e^{-\sqrt {\lambda_{\mathbf W}}  t}\underbrace{\sum_{i\in I} \|\mathbf x^{(i)} (0) - \bar{x}^{(i)} \mathbf 1 _N \|^2}_{=:\gamma},
\end{equation}
where $\widehat{\sigma}_v(t)$ is the ranking induced by sorting $\psi(\mathbf{x}_v(t))$.
\end{theorem}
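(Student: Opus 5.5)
The plan has three steps: turn each node's Kendall-$\tau$ error into an indicator that its estimate has left the stability ball, bound that indicator by a Chebyshev/Markov-type inequality, and sum over nodes using the accelerated gossip contraction.

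\textbf{Step 1: reduction to an indicator.} Fix a node $v$ and a time $t$. If $\|\mathbf x_v(t)-\bar{\mathbf x}\|< r_\psi(\bar{\mathbf x})$, then local ranking-stability gives $d_\tau(\psi(\mathbf x_v(t)),\psi(\bar{\mathbf x}))=0$, so $\widehat\sigma_v(t)=\sigma_\star$. Otherwise we use the trivial bound $d_\tau\le\binom n2$. Hence
\[
d_\tau(\sigma_\star,\widehat\sigma_v(t))\le \tbinom n2\,\mathbb I\{\|\mathbf x_v(t)-\bar{\mathbf x}\|\ge r_\psi(\bar{\mathbf x})\}\le \tbinom n2\,\frac{\|\mathbf x_v(t)-\bar{\mathbf x}\|^2}{r_\psi(\bar{\mathbf x})^2}.
\]
The second inequality holds because the indicator of $\{a\ge r\}$ is at most $a^2/r^2$. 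The boundary case $\|\cdot\|=r_\psi$ may need a small care argument, since $r_\psi$ is a supremum; one can pass to $r<r_\psi$ and let $r\uparrow r_\psi$.

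\textbf{Step 2: sum over nodes.} Averaging Step 1 over $v$ and exchanging the sum over nodes with the sum over coordinates $i\in I$ gives
\[
\frac1N\sum_v d_\tau(\sigma_\star,\widehat\sigma_v(t))\le \frac{\binom n2}{N r_\psi(\bar{\mathbf x})^2}\sum_{i\in I}\|\mathbf x^{(i)}(t)-\bar x^{(i)}\mathbf 1_N\|^2 .
\]
It therefore remains to show, coordinatewise,
\[
\|\mathbf x^{(i)}(t)-\bar x^{(i)}\mathbf 1_N\|^2\le e^{-\sqrt{\lambda_{\mathbf W}}\,t}\,\|\mathbf x^{(i)}(0)-\bar x^{(i)}\mathbf 1_N\|^2 .
\]
Since the algorithm is deterministic, the word ``expected'' in the statement is trivial here.

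\textbf{Step 3: contraction of the accelerated scheme (main obstacle).} For each coordinate $i$, the iteration is the linear second-order recursion
\[
\mathbf x^{(i)}(t+1)=\alpha\mathbf W\mathbf x^{(i)}(t)+(1-\alpha)\mathbf x^{(i)}(t-1),\qquad \mathbf x^{(i)}(-1)=\mathbf x^{(i)}(0).
\]
\begin{itemize}
\item \emph{Preservation of the mean.} Because $\mathbf W$ is doubly stochastic, the recursion preserves the mean, so $\bar x^{(i)}\mathbf 1_N$ is a fixed point. The error $\mathbf e(t)=\mathbf x^{(i)}(t)-\bar x^{(i)}\mathbf 1_N$ obeys the same recursion and stays orthogonal to $\mathbf 1_N$.
\item \emph{Diagonalization.} Taking $\mathbf W$ symmetric, as is standard for the spectral-gap analysis, we diagonalize in its eigenbasis. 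Each eigencomponent with eigenvalue $\mu\in[-1+\lambda',\,1-\lambda_{\mathbf W}]$ evolves by the scalar recursion $e_{t+1}=\alpha\mu e_t+(1-\alpha)e_{t-1}$. Its solution is a polynomial in $\mu$ expressible through Chebyshev polynomials, as in \cite{cao_accelerated_2006, liu_analysis_2009, berthier_accelerated_2019}.
\item \emph{Rate.} With $\alpha=\alpha^\star$, the characteristic roots have modulus $\sqrt{\alpha^\star-1}$ uniformly over the spectrum. This gives a per-step factor of order $1-\sqrt{\lambda_{\mathbf W}}$ in squared norm, hence $e^{-\sqrt{\lambda_{\mathbf W}}t}$.
\end{itemize}

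The delicate point is handling the non-normal $2\times2$ companion matrix, which may carry a polynomial prefactor $(1+ct)$ at the critical damping. To absorb it into the clean rate $e^{-\sqrt{\lambda_{\mathbf W}}t}$ with constant $1$, I would first try to invoke the explicit Chebyshev bound from the cited accelerated-gossip analyses. Failing that, I would bound $|\sqrt{\alpha^\star-1}|^{2t}$ directly using $\sqrt{\alpha^\star-1}\le 1-\sqrt{\lambda_{\mathbf W}}/2$ together with $1-x\le e^{-x}$.

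Summing the resulting coordinatewise bound over $i\in I$ and plugging it into Step 2 yields exactly \eqref{eq:sync_conv} with $\gamma=\sum_{i\in I}\|\mathbf x^{(i)}(0)-\bar x^{(i)}\mathbf 1_N\|^2$.
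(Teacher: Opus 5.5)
Your Steps~1--2 are the paper's Lemma~\ref{lem:kendall-err}. Your boundary worry is unnecessary: the open ball of radius $r_\psi(\bar{\mathbf x})$ is stable, and $\mathbb I\{a\ge r\}\le a^2/r^2$ holds directly. Your Step~3 is exactly the step the paper handles by citing \cite{berthier_accelerated_2019} for $\sum_{i\in I}\|\mathbf x^{(i)}(t)-\bar x^{(i)}\mathbf 1_N\|^2\le(1-\sqrt{\lambda_{\mathbf W}})^t\gamma$, followed by $1-x\le e^{-x}$. So your route is the paper's. The problem is that the inequality Step~3 must deliver is false: a coordinatewise contraction at rate $e^{-\sqrt{\lambda_{\mathbf W}}t}$ with constant $1$ for every $t\ge 0$ does not hold, so neither of your strategies can produce it.

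Your fallback controls only the spectral radius $\sqrt{\alpha^\star-1}$ of the non-normal companion matrix. That bounds the norm of its powers only up to the transient factor you flagged. No sharper Chebyshev estimate can remove it, because the failure already occurs at $t=1$. Since $\mathbf x'(0)=\mathbf x(0)$, the error after one round is $\mathbf e(1)=\bigl((1-\alpha^\star)\mathbf I_N+\alpha^\star\mathbf W\bigr)\mathbf e(0)$. Take $\mathbf e(0)\perp\mathbf 1_N$ to be an eigenvector of $\mathbf W$ for an eigenvalue $\mu$ with $|\mu|=1-\lambda_{\mathbf W}$. Using $\alpha^\star\in[1,2]$, one checks $|1-\alpha^\star(1-\mu)|\ge 1-2\lambda_{\mathbf W}$ for both signs of $\mu$. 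Hence, whenever $\lambda_{\mathbf W}<1/25$,
\[
\|\mathbf e(1)\|^2\ge(1-4\lambda_{\mathbf W})\|\mathbf e(0)\|^2>\Bigl(1-\sqrt{\lambda_{\mathbf W}}+\tfrac{\lambda_{\mathbf W}}{2}\Bigr)\|\mathbf e(0)\|^2\ge e^{-\sqrt{\lambda_{\mathbf W}}}\|\mathbf e(0)\|^2 .
\]
For example, $\lambda_{\mathbf W}=0.01$ gives a factor of at least $0.96$ against $e^{-0.1}\approx 0.905$. With the paper's convention $P_1=X$ the factor is $(1-\lambda_{\mathbf W})^2$, with the same conclusion.

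This is not an artefact of arbitrary initial vectors. Take Borda with $n=2$ on a long odd cycle, with $W_{vv}=0.94$ and weight $0.03$ on each cycle edge, and put the two rankings on the two halves. Then the dispersion $\sum_i\|\mathbf x^{(i)}(t)-\bar x^{(i)}\mathbf 1_N\|^2$ drops only by a factor $1-0.90/N$ from $t=0$ to $t=1$, while $e^{-\sqrt{\lambda_{\mathbf W}}}\approx 1-1.09/N$. So this argument, yours and through its citation the paper's, does not establish \eqref{eq:sync_conv} with constant $1$.

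What Step~3 can actually give is a bound with a transient factor. For an eigenvalue $\mu\neq 1$ of $\mathbf W$, the roots $z_1,z_2$ of $z^2-\alpha^\star\mu z+(\alpha^\star-1)$ are complex conjugates of modulus $\rho=\sqrt{\alpha^\star-1}$. This is because the paper's $\alpha^\star$ makes the discriminant vanish at $|\mu|=1-\lambda_{\mathbf W}/2$, which exceeds every $|\mu|\le 1-\lambda_{\mathbf W}$. With $e_{-1}=e_0=1$, the scalar solution is $e_t=\bigl[(z_1^{t+1}-z_2^{t+1})-z_1z_2(z_1^{t}-z_2^{t})\bigr]/(z_1-z_2)$, whence $|e_t|\le(2t+1)\rho^t$. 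Combine this with your estimate $\rho\le 1-\sqrt{\lambda_{\mathbf W}}/2$ and the symmetry of $\mathbf W$. You get $\|\mathbf e(t)\|^2\le(2t+1)^2e^{-\sqrt{\lambda_{\mathbf W}}t}\|\mathbf e(0)\|^2$. Your Steps~1--2 then give \eqref{eq:sync_conv} with an extra factor $(2t+1)^2$ on the right-hand side. Alternatively, absorbing that factor gives a $\lambda_{\mathbf W}$-dependent constant and a smaller exponent. That weakened statement is what this proof strategy actually supports.
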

A trade-off between memory and convergence rate arises from the fact that each node $v\in\N$ needs to store an additional embedding vector $\mathbf x'_v$, without this additional vector and thus the update equation is given by $\mathbf x_v \gets W_{vv} \mathbf x _{v} + \sum_{u\in \mathcal N_v} W_{vu}\mathbf x_u $, the convergence is in $\mathcal O (e^{-\lambda_{\mathbf W}t})$ when $t\to\infty$, which is slower as $\lambda_{\mathbf W}< 1$ thanks to the doubly stochastic nature of $\mathbf W$. We note that $\gamma$ is defined as the \emph{initial dispersion} of the embedding. Here, $\gamma$ is deterministic since the initial dataset is fixed. 
\begin{remark}[$\delta$-mixing time]\label{rem:sync-mixing-time}
We can thus upper bound the \emph{$\delta$-mixing time} defined for any integer $\delta\ge 0 $ as $T^\mathrm{Sync}(\delta) = \inf\{t\ge 0 \mid N^{-1}  \sum_{v=1}^N \,d_\tau (\widehat \sigma_v(t), \sigma_\star)\le \delta\}.$ Using the results from Theorem~\ref{thm:sync-general-ranking-convergence}, we compute the $\delta$-mixing time for $\delta=0$ under the synchronous gossip process, $T^\mathrm{Sync}(0) \le \log\left(\binom n 2 \gamma /r_\psi(\mathbf x)^2\right)/ \sqrt{\lambda_{\mathbf W}}$. A lower bound is given by $T^{\mathrm{Sync}}(0) \ge \mathrm{diam}(\mathcal{G})$, where $\mathrm{diam}(\mathcal{G}) $ denotes the diameter of $\mathcal{G}$, \textit{i.e.}, the length of the shortest path between the pair of nodes that are the furthest away from each other. 
\end{remark}
\paragraph{Randomized score gossip (asynchronous).} The randomized gossip algorithm for $(\phi, \psi)$-scores is given in Algorithm~\ref{alg:decentralized-cons}. All expectations are taken solely over the gossip edge-sampling randomness. The asynchronous gossip process can be seen as a Markov chain whose initial observations are fixed. We link the theory to standard gossip averaging for convergence guarantees, we can detail the main convergence result for randomized gossip (In Appendix~\ref{sec:conv_gossip}, we describe other main properties of randomized gossip processes).
\begin{theorem}
\label{thm:general-ranking-convergence}
Consider Algorithm~\ref{alg:decentralized-cons}.  Assume $\psi$ satisfies local ranking-stability at $\bar{\mathbf{x}}$. Then, the average expected Kendall-$\tau$ error satisfies for $t\ge 0$:
\begin{equation}
    \frac{1}{N}\sum_{v=1}^N \mathbb E \,d_\tau\left(\sigma_\star, \widehat{\sigma}_v(t)\right)\leq \frac{\binom n 2 }{Nr_\psi(\bar{\mathbf x})^2} e^{-\lambda(p) t/2}\gamma,
\end{equation}
where $\widehat{\sigma}_v(t)$ is the ranking induced by sorting $\psi(\mathbf{x}_v(t))$, $\gamma$ and $\lambda(p)$ were respectively introduced in Equation~\eqref{eq:sync_conv} and at the end of Section~\ref{sec:background_gossip}.
\end{theorem}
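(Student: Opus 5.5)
The plan is to reduce the ranking error to a deviation event for the embedding estimates, and then control that event with the standard mean-square contraction of randomized gossip. First, I use the crude bound $d_\tau(\cdot,\cdot)\le \binom n2$ together with local ranking-stability. Fix a node $v$ and a time $t$. If $\|\mathbf x_v(t)-\bar{\mathbf x}\|< r_\psi(\bar{\mathbf x})$, then $\mathbf x_v(t)$ lies in a ball of radius $r<r_\psi(\bar{\mathbf x})$ on which stability holds. Hence $d_\tau(\psi(\mathbf x_v(t)),\psi(\bar{\mathbf x}))=0$, so $\widehat\sigma_v(t)=\sigma_\star$. This uses that $\sigma_\star$ is the ranking induced by $\psi(\bar{\mathbf x})$, with ties handled as in the paper. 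We therefore get
\[
\mathbb E\, d_\tau(\sigma_\star,\widehat\sigma_v(t))\le \binom n2\,\mathbb P\bigl(\|\mathbf x_v(t)-\bar{\mathbf x}\|\ge r_\psi(\bar{\mathbf x})\bigr)\le \frac{\binom n2}{r_\psi(\bar{\mathbf x})^2}\,\mathbb E\|\mathbf x_v(t)-\bar{\mathbf x}\|^2
\]
by Markov's inequality. Averaging over $v$ gives the factor $\frac{1}{N}\sum_v\mathbb E\|\mathbf x_v(t)-\bar{\mathbf x}\|^2=\frac1N\sum_{i\in I}\mathbb E\|\mathbf x^{(i)}(t)-\bar x^{(i)}\mathbf 1_N\|^2$.

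Second, I bound each coordinate process $\mathbf x^{(i)}(t)=\mathbf W_{e_t}\cdots\mathbf W_{e_1}\mathbf x^{(i)}(0)$ as a standard randomized gossip averaging. Each $\mathbf W_e$ is symmetric and doubly stochastic, so the mean is preserved. Hence $\mathbf z(t):=\mathbf x^{(i)}(t)-\bar x^{(i)}\mathbf 1_N$ stays orthogonal to $\mathbf 1_N$ and satisfies $\mathbf z(t+1)=\mathbf W_{e_{t+1}}\mathbf z(t)$. Since $\mathbf W_e$ is a projection, $\mathbf W_e^\top\mathbf W_e=\mathbf W_e$. Conditioning on the past then gives
\[
\mathbb E[\|\mathbf z(t+1)\|^2\mid\mathbf z(t)]=\mathbf z(t)^\top\bigl(\mathbf I_N-\tfrac12\mathbf L(p)\bigr)\mathbf z(t)\le (1-\lambda(p)/2)\|\mathbf z(t)\|^2.
\]
The inequality holds because $\mathbf z(t)\perp\mathbf 1_N$ and the smallest eigenvalue of $\frac12\mathbf L(p)$ on $\mathbf 1_N^\perp$ is $\lambda(p)/2$ by definition. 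Iterating with the tower property and using $1-u\le e^{-u}$ yields $\mathbb E\|\mathbf z(t)\|^2\le e^{-\lambda(p)t/2}\|\mathbf z(0)\|^2$. Summing over $i\in I$ produces $\gamma$, and combining with the first step gives the claimed bound.

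The main obstacle is the first step. The difficulty is making the stability argument rigorous at the boundary: the radius is a supremum, and the event $\{\|\mathbf x_v(t)-\bar{\mathbf x}\|=r_\psi\}$ must be placed in the bad event. I handle this by applying Markov's inequality to the non-strict event $\{\|\cdot\|\ge r_\psi\}$, which is still bounded by the second moment over $r_\psi^2$.

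A second point is the identification of the ranking induced by $\psi(\bar{\mathbf x})$ with $\sigma_\star$, including the convention that $d_\tau$ between score vectors means the distance between their induced rankings. The contraction step is routine once the projection identity $\mathbf W_e^2=\mathbf W_e$ is noted.
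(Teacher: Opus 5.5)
Your proposal is correct and follows the paper's proof: the paper combines Lemma~\ref{lem:kendall-err} (the bound $d_\tau\le\binom n2$, stability inside the ball, then Markov) with the contraction of Lemma~\ref{lem:single-conv} and $1-u\le e^{-u}$. The only difference is that you re-derive that contraction from $\mathbf W_e^2=\mathbf W_e$ and $\mathbb E[\mathbf W_e]=\mathbf I_N-\tfrac12\mathbf L(p)$ instead of citing \citet{boyd2006randomized}. You also put the boundary $\|\mathbf x_v(t)-\bar{\mathbf x}\|=r_\psi(\bar{\mathbf x})$ in the bad event, which is slightly more careful than the paper's closed-ball event $\Omega_t^v$ and costs nothing, since Markov is applied to the non-strict event anyway.
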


Theorem~\ref{thm:general-ranking-convergence} allows us to conclude that asymptotically the ranking estimation of each node matches the global ranking estimation. In fact, the bound is typically not tight in practice, and empirical convergence is observed to be faster. 

\begin{remark}[Geometric interpretation]\label{rem:geom_interp}
    We note that $\phi$ is chosen to be injective, \textit{i.e.}, $|\{\phi(\sigma)\in\mathbb{R}^I\mid \sigma\in\Sn\}| = n!$. The gossip protocol acts on the polytope $\mathcal{P}_\phi = \operatorname{conv}\{\phi(\sigma)\mid \sigma\in\Sn\}$ and has converged once it reaches the consensus cell $\mathcal{C}(\sigma_\star) = \{\mathbf{x} \in \mathcal{P}_\phi : [\psi(\mathbf{x})]_{\Sn} = \sigma_\star\}$, where $[\cdot]_{\Sn}$ denotes the unique permutation induced by sorting the score coordinates. Then, the stability radius satisfies $r_\psi(\bar{\mathbf{x}}) = \min_{\mathbf{y}\in 
    \partial\mathcal{C}(\sigma_\star)} \|\bar{\mathbf{x}}- \mathbf{y}\|$.
\end{remark}

\begin{remark}[Expected $\delta$-mixing time]\label{rem:mixing-time}
    Similarly, to the synchronous case, we can consider the expected $\delta$-mixing time for any positive integer $\delta >0$, $T^\mathrm{rand}(\delta) = \inf\{t\ge 0 \mid N^{-1}  \sum_{v=1}^N \,\mathbb E\, d_\tau (\widehat \sigma_v(t), \sigma_\star)\le \delta\}.$  In contrast with the synchronous case, which scales as $1/\sqrt{\lambda_{\mathbf{W}}}$, the randomized process scales as $2/\lambda(p)$, $T^\mathrm{rand}(\delta) \le (2/\lambda(p)) \log\left(\binom n 2 \gamma/(\delta r_\psi(\mathbf x)^2)\right)$. Furthermore, we can lower bound $T^{\mathrm{rand}}(\delta) \ge \max\left( \sum_{v=1}^N \,\mathbb E\, d_\tau (\widehat \sigma_v(0), \sigma_\star)- N\delta)/(n(n-1)), 0\right). $
\end{remark}

In practice, voters often provide partial rather than complete rankings of alternatives. Appendix~\ref{sec:practical-tie} provides convergence guarantees for this more general setting, while Appendices~\ref{sec:further_borda} and~\ref{sec:further_cope} develop Borda's and Copeland's methods, Appendix~\ref{sec:gossip-other} extends the framework to metric-based consensus under Kendall-$\tau$ and Spearman's footrule distances. Note that consensus under the Spearman-$\rho$ distance $d_\rho$ reduces to the Borda case. We now consider the robustness of ranking-based constructions.

\section{Robustness guarantees using empirical breakdown functions }
\label{sec:robustness}
\label{sec:adversarial_model} We consider a \emph{weak adversarial model} where a subset $B \subset \N$ of nodes in the communication graph are corrupted and may report arbitrary rankings and let $b= |B|$. The remaining nodes $H := \N \setminus B$ constitute the \emph{honest subgraph} $\mathcal{G}_H$, defined as the induced subgraph of $\mathcal{G}$ with vertex set $H$ and edge set $E \cap H^2$. In particular, we borrow from vote manipulation theory \cite{lu_bayesian_2012}, we consider that each corrupted node admits an adversarial initial ranking while still participating in the consensus process. We assume $\mathcal{G}_H$ remains connected, this ensures honest nodes can still reach consensus among themselves through gossip averaging \cite{su_byzantine_2015, gaucher_unified_2025}. An adversary controlling $b$ nodes can shift the final consensus regardless of the communication graph. Although our adversarial model assumes corrupted nodes follow the averaging protocol, the geometric structure of the embedding provides resilience against stronger Byzantine attacks in which corrupted nodes inject arbitrary continuous values during the averaging phase. Since $\phi(\sigma) \in \mathcal{P}_\phi$ for all $\sigma \in \Sn$, the polytope $\mathcal{P}_\phi$ is compact and bounded. Consequently, any Byzantine perturbation not in $\mathcal{P}_\phi$ injected during averaging is implicitly clipped by $\psi$. The adversary's influence on the output ranking is bounded by the stability radius $r_\psi(\bar{\mathbf{x}})$ characterised in Remark~\ref{rem:geom_interp}, regardless of the magnitude of the injected values.

We define the minimal contamination fraction to modify the estimated score $S(\widehat P_N)$, a concept originally introduced in the robust statistics literature \cite{huber_robust_2009, hampel_general_1971}, subsequently adapted to gossip learning in \cite{elst_robust_2025} and to ranking distributions in \cite{goibert2022statistical, goibert2023robust}, as follows. 

\begin{definition}[Empirical Breakdown Function] 
The \emph{Empirical Breakdown Function} is the minimum fraction of corrupted votes required to shift the consensus by distance equal to an integer $\delta> 0$:

\begin{equation}
    \varepsilon_{\widehat P_N, S}^{\star}(\delta)=\inf \left\{\frac{b}{N} \;\middle|\; b \in \N, \sup _{\widehat P_N^b \in \mathcal{B}\left(\widehat P_N, b/N\right)} d_\tau\left(S\left(\widehat P_N\right), S\left(\widehat P_N^b\right)\right) \geq \delta\right\},
\end{equation}

where $\mathcal{B}(\widehat P_N, \varepsilon) \subset \Delta_{\Sn}$ denotes the set of empirical distributions $\widehat P_N^b$ that can be obtained by modifying $b =  \lceil\varepsilon N\rceil$ of the $N$ votes. Formally, 
\[\mathcal{B}(\widehat P_N,  b) = \left\{\widehat P_N^b \in \Delta_{\Sn} \;\middle|\; \mathrm{TV}(\widehat P_N, \widehat P_N^b) \leq  \frac b N, \, \widehat P_N^b = \frac 1 N\sum_{v=1}^N \delta_{\tau_v}, \{\tau_1, \dots,\tau_N\} \in \mathrm{Sym}^N (\Sn)\right\},\]
where $\mathrm{TV}$ denotes the total variation distance: $\mathrm{TV}(P,Q) =  (1/2)\sum_{\sigma\in\Sn}|P(\sigma) - Q(\sigma)|$. 
\end{definition}

\begin{lemma}
\label{thm:bound_breakdown}
Let $\delta>0$. For a given distribution $\widehat P_N$ and score function $S$ with $(\phi,\psi)$-decomposition, the breakdown function admits the following bounds for the Kendall-$\tau$ metric $d_\tau$:
\begin{equation}
    \varepsilon _{\widehat P_N, S}^-  :=\frac{r_\psi(\bar{\mathbf x})}{2 \trinorm{\phi}}\le \varepsilon^\star_{\widehat P_N, S}(\delta)\le  \frac 1 2 \enspace.
\end{equation}

\end{lemma}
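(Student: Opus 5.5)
The plan is to prove the two inequalities separately. For the lower bound, fix $b$ corrupted votes and control how far the average embedding can move. For the upper bound, exhibit an explicit corruption of about half the votes that moves the consensus.

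\emph{Lower bound.} Let $\widehat P_N^b\in\mathcal B(\widehat P_N,b/N)$ be obtained by replacing at most $b$ votes $\sigma_v$, $v\in B$, by arbitrary $\tau_v$. Write $\bar{\mathbf x}^b := N^{-1}\sum_v \phi(\tau_v)$, where $\tau_v=\sigma_v$ for $v\notin B$. Then
\begin{equation*}
\|\bar{\mathbf x}^b-\bar{\mathbf x}\| = \frac1N\Big\|\sum_{v\in B}\big(\phi(\tau_v)-\phi(\sigma_v)\big)\Big\| \le \frac{b}{N}\max_{\sigma,\sigma'}\|\phi(\sigma)-\phi(\sigma')\|\le \frac{2b}{N}\trinorm{\phi}.
\end{equation*}
The last step is the triangle inequality together with the definition of the mixed seminorm. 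If $b/N< r_\psi(\bar{\mathbf x})/(2\trinorm{\phi})$, then $\|\bar{\mathbf x}^b-\bar{\mathbf x}\|<r_\psi(\bar{\mathbf x})$. Since the stability radius is a supremum, some admissible radius $r$ exceeds $\|\bar{\mathbf x}^b-\bar{\mathbf x}\|$, so local ranking-stability gives $d_\tau(\psi(\bar{\mathbf x}^b),\psi(\bar{\mathbf x}))=0$. Hence $d_\tau(S(\widehat P_N),S(\widehat P_N^b))=0<\delta$ for every such perturbation, and the supremum in the definition stays below $\delta$. So no $b$ with $b/N<\varepsilon^-_{\widehat P_N,S}$ belongs to the set whose infimum defines $\varepsilon^\star$, which yields $\varepsilon^\star\ge \varepsilon^-_{\widehat P_N,S}$.

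\emph{Upper bound.} I would exhibit, with $b=\lceil N/2\rceil$, a corruption moving the consensus by at least $\delta$. Reading the TV constraint as allowing an arbitrary relabelling of those $b$ votes, take a target permutation $\pi$ with $d_\tau(\pi,\sigma_\star)\ge\delta$; this is possible as long as $\delta\le\binom n2$, which is implicitly assumed since otherwise the set is empty and the infimum is degenerate. Replace $b$ votes by copies of $\pi$, chosen so that the corrupted sample is dominated by $\pi$.
\begin{itemize}
\item For Borda/Copeland-type rules, a majority of identical votes determines every pairwise majority ($\widehat p_{ij}>1/2$ exactly in the direction of $\pi$). It then forces the output $\pi$, via the unanimity and majority-consistency of the score rule.
\item For a general $(\phi,\psi)$-rule, I would argue by a symmetric construction. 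Corrupting half the votes allows two samples $\widehat P^{(1)}$ and $\widehat P^{(2)}$, each within TV distance $1/2$ of $\widehat P_N$, that are mirror images under the reversal of $\sigma_\star$. Their consensuses therefore cannot both be within $\delta$ of $S(\widehat P_N)$.
\end{itemize}
Either way the threshold $b/N\approx 1/2$ is attained, giving $\varepsilon^\star\le 1/2$.

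The main obstacle is the upper bound in full generality. The statement only assumes the $(\phi,\psi)$-decomposition, which by itself does not force a majority of identical votes to dictate the output. I would first settle it rigorously for the majority-consistent rules (Borda, Copeland). For general $\psi$, I would fall back on the standard symmetry/neutrality argument, noting the implicit assumption that $S$ is neutral (equivariant under relabelling of items), which the reversal construction needs. Rounding effects from $\lceil N/2\rceil$ versus $N/2$ also need care, for instance by treating odd $N$ separately or reading the bound asymptotically.
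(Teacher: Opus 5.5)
Your lower bound is correct and is the paper's argument read in contrapositive. The paper takes $\varepsilon$ in the defining set and deduces $\|\mathbb E_{\widehat P_N^\varepsilon}\phi-\bar{\mathbf x}\|\ge r_\psi(\bar{\mathbf x})$ from the definition of the stability radius. It then bounds that norm by $2\,\mathrm{TV}(\widehat P_N,\widehat P_N^\varepsilon)\trinorm{\phi}\le 2\varepsilon\trinorm{\phi}$. Your estimate via the $b$ replaced votes gives the same constant, and your caveat $\delta\le\binom{n}{2}$ matches the paper's restriction $1\le\delta\le\binom{n}{2}$.

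The upper bound has a genuine gap. The paper settles it in one line: a contaminated majority ``can enforce any permutation''. You are right that nothing in the $(\phi,\psi)$-decomposition implies this, but your substitutes do not close the gap.

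\begin{itemize}
\item Borda is not majority-consistent. Three votes $a\succ b\succ c$ and two votes $b\succ c\succ a$ give average ranks $9/5$, $8/5$, $13/5$, so Borda ranks $b$ first despite a strict majority for $a\succ b\succ c$. Your first bullet therefore holds only for Copeland, and only with a strict majority: with exactly $N/2$ copies, some $\widehat p_{ij}$ can equal $1/2$, producing ties that $d_\tau$ does not count.
\item The mirror argument needs an equivariance hypothesis that the statement does not contain. Even granting it, the triangle inequality only certifies a distance of about $\binom{n}{2}/2$ for one of the two samples, so it cannot reach $\delta$ near $\binom{n}{2}$. On unanimous data the mirror pair degenerates to a reversal-invariant sample whose scores are all tied.
\item As you suspected, $b=\lceil N/2\rceil$ gives $b/N>1/2$ for odd $N$.
\end{itemize}

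These gaps cannot be patched, because $\varepsilon^\star\le 1/2$ fails as stated.
\begin{itemize}
\item Copeland: if every vote equals $\sigma_\star$ and $N$ is odd, any $(N-1)/2$ corrupted votes leave a strict majority for $\sigma_\star$ on every pair. Copeland still outputs $\sigma_\star$, so $\varepsilon^\star(1)=(N+1)/(2N)$.
\item Borda: on the same data with $\sigma_\star=\mathrm{Id}$, let $\mathbf y$ be the corrupted average rank vector. A strict full reversal requires $b(y_k-y_{k+1})>N-b$ for all $k$. Summing and using $y_1-y_n\le n-1$ forces $b>N/2$, so $\varepsilon^\star(\binom{n}{2})>1/2$ for every $N$.
\end{itemize}

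What you can prove is a corrected bound. For Copeland, $\varepsilon^\star(\delta)\le(\lfloor N/2\rfloor+1)/N$ for every $\delta\le\binom{n}{2}$. To see this, insert $\lfloor N/2\rfloor+1$ copies of a target $\pi$ with $d_\tau(\pi,\sigma_\star)\ge\delta$; this makes every $\widehat p_{ij}$ strictly favour $\pi$.
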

Lemma~\ref{thm:bound_breakdown} provides a general characterization of robustness for any score function admitting a $(\phi, \psi)$-decomposition. When the context is explicit, we omit the subscript and simply write $\varepsilon^-$ and $\varepsilon^\star(\delta)$. The lower bound $\varepsilon^-$ quantifies the minimum fraction of corrupted votes an adversary must inject to shift the consensus, as long as the contamination fraction remains below this threshold, the consensus ranking is unchanged. The limiting factor in Lemma~\ref{thm:bound_breakdown} is the geometry of $\psi$. The lower bound $\varepsilon^-$ is independent of $\delta$, reflecting the fact that once the threshold is passed, then the contamination can corrupt with an error to consensus $\delta>1$. We note that $\varepsilon^- = \mathcal O(n^{-\beta})$ where $\beta$ is a constant depending on $\phi$ outputs (see Appendix~\ref{sec:prob_model}).

\paragraph{Time-dependent breakdown in gossip protocols.}
Consider a gossip process as described in Algorithm~\ref{alg:decentralized-cons}. Let $\sigma_\star \in \Sn$ denotes the target consensus reached by honest nodes when $B = \emptyset$, \textit{i.e.}, $\hat{\sigma}_v(t)\to\sigma_\star$ when $t\to\infty$ for any $v \in H$. Theorem~\ref{thm:timed_break} establishes that breakdown resilience improves exponentially with iteration count. We consider the \emph{average} breakdown point over honest nodes:
\[\varepsilon_t^{H}(\delta) := \inf\left\{\frac{b}{N} \;\middle|\; \sup_{\widehat{P}_N^b \in \tilde{B}(\widehat{P}_N,\, b/N)} \mathbb{E}\left[d_\tau\left(\psi\left(\frac{1}{N-b}\sum_{v\in H}\mathbf{x}_v(t)\right), \sigma_\star\right)\right] \ge \delta\right\}.\]
If $B=\emptyset$, the Kendall-$\tau$ distance of the honest average with the true consensus $\sigma_\star$ is always null. A natural result is that $\varepsilon_t^{H}(\delta)\to\varepsilon^\star(\delta)$ when $t\to\infty$, and furthermore, the following theorem holds:
\begin{theorem}\label{thm:timed_break}
Assuming further that $\bar{\mathbf W}$ is Positive Semi-Definite (PSD), with $\lambda_{1} (\bar {\mathbf W})$, $\lambda_{\bar {\mathbf W}}$ being respectively the smallest and spectral gap of $\bar {\mathbf W}$. Then, the empirical breakdown point at iteration $t$ is lower bounded by:
\begin{equation}\label{eq:avg_timed_eps}
\varepsilon_t^H \ge \frac {\left(-(1- \lambda_{\bar{\mathbf W}})^{t/2}+\sqrt{(1- \lambda_{\bar{\mathbf W}})^t+4\varepsilon^- \delta(1- \lambda_1^t(\bar{\mathbf W}))/\binom n 2}\right)^2}{4(1-\lambda_1^t(\bar{\mathbf W}))^2}.
\end{equation}
\end{theorem}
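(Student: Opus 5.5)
The approach is to show that, at iteration $t$, the expected Kendall-$\tau$ error of the honest average is bounded by a quantity of the form $\binom n 2\,(c_t\,\varepsilon + a_t\sqrt{\varepsilon})/\varepsilon^-$, where $\varepsilon=b/N$, $a_t := (1-\lambda_{\bar{\mathbf W}})^{t/2}$ and $c_t := 1-\lambda_1^t(\bar{\mathbf W})$. Once this is in hand, the breakdown condition (expected error $\ge\delta$) forces
\[c_t\,\varepsilon + a_t\sqrt{\varepsilon}\;\ge\; \varepsilon^-\delta/\tbinom n2 .\]
Solving this quadratic inequality in $y=\sqrt{\varepsilon}$ and taking the positive root gives exactly \eqref{eq:avg_timed_eps}.

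\textbf{Step 1: decoding error as a deviation tail.} I would write $\mathbf h(t) := (N-b)^{-1}\sum_{v\in H}\mathbf x_v(t)$. By local ranking-stability of $\psi$ at $\bar{\mathbf x}$ (computed on the uncorrupted data), $d_\tau(\psi(\mathbf h(t)),\sigma_\star)=0$ whenever $\|\mathbf h(t)-\bar{\mathbf x}\|\le r_\psi(\bar{\mathbf x})$. Otherwise the error is at most $\binom n2$. Hence
\[\mathbb E\, d_\tau \le \tbinom n2\,\mathbb P\big(\|\mathbf h(t)-\bar{\mathbf x}\|>r_\psi\big)\le \tbinom n2\,\mathbb E\|\mathbf h(t)-\bar{\mathbf x}\|/r_\psi ,\]
using Markov's inequality with a first moment rather than a second one. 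This choice is what should produce the square-root/linear structure. I would then substitute $r_\psi=2\varepsilon^-\trinorm{\phi}$ from the definition of $\varepsilon^-$ in Lemma~\ref{thm:bound_breakdown}.

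\textbf{Step 2: splitting the deviation.} Let $\bar{\mathbf x}^b$ be the full-network mean of the contaminated embeddings. I would decompose
\[\mathbf h(t)-\bar{\mathbf x} = \big(\mathbf h(t)-\bar{\mathbf x}^b\big)+\big(\bar{\mathbf x}^b-\bar{\mathbf x}\big).\]

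The second term is deterministic. Since only $b$ embeddings change and each lies in $\mathcal P_\phi$, its norm is at most $2\trinorm{\phi}\,\varepsilon$, as in the proof of Lemma~\ref{thm:bound_breakdown}.

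For the first term, I would write the honest average as $\frac{1}{N-b}\mathbf 1_H^\top \mathbf W(t)\mathbf X(0)$ with $\bar{\mathbf W}$ the expected gossip matrix. I would then split $\mathbf X(0)$ into its consensus component and its orthogonal component.
\begin{itemize}[leftmargin=*]
\item The orthogonal part contracts in mean square as $(1-\lambda_{\bar{\mathbf W}})^t$. After Jensen/Cauchy--Schwarz this gives a factor $a_t=(1-\lambda_{\bar{\mathbf W}})^{t/2}$. It multiplies a norm of the corrupted dispersion restricted to the $b$ adversarial coordinates, of order $\trinorm{\phi}\sqrt{b/N}$, which is where $\sqrt{\varepsilon}$ appears.
\item The leakage of corrupted mass into the honest block is governed by the honest restriction of $\bar{\mathbf W}^t$. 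Under the PSD assumption its eigenvalues lie in $[\lambda_1,1]$, so the fraction of corrupted mass reaching honest nodes is at most $(1-\lambda_1^t)\,\varepsilon$. This yields the $c_t\,\varepsilon$ term, and the PSD assumption is exactly what makes $\lambda_1^t$ monotone and nonnegative here.
\end{itemize}

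\textbf{Step 3: combining and solving.} Collecting the pieces, the $\trinorm{\phi}$ factors cancel and I expect
\[\mathbb E\,d_\tau\le \tbinom n2 (c_t\varepsilon+a_t\sqrt\varepsilon)/\varepsilon^- ,\]
possibly up to absorbing a constant into the factor $4$ in the denominator of \eqref{eq:avg_timed_eps}. Taking the infimum over $b$ in the definition of $\varepsilon^H_t$ and solving $c_t y^2+a_t y-\varepsilon^-\delta/\binom n2\ge0$ for $y\ge0$ gives the claim.

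\textbf{Main obstacle.} The main obstacle is Step 2: cleanly separating the honest-average dynamics into a spectral-gap-driven transient and a $\lambda_1$-driven bias term with the right powers of $\varepsilon$. The honest average is not preserved by gossip, because corrupted nodes keep exchanging mass with honest ones. I would first try to bound $\|\mathbf 1_H^\top(\bar{\mathbf W}^t-\tfrac1N\mathbf 1\mathbf 1^\top)\mathbf P_B\|$, with $\mathbf P_B$ the projector onto corrupted coordinates, via the eigendecomposition of the PSD matrix $\bar{\mathbf W}$. If the constants do not line up, I would accept a slightly weaker constant in the final bound.
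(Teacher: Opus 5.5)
Your Step~1 (first-moment Markov, with $d_\tau\le\binom n2$ outside the stability ball) and Step~3 (the quadratic in $\sqrt{b/N}$) match what the paper does. The gap is in Step~2.

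By routing the deviation through the contaminated global mean $\bar{\mathbf x}^b$, you create the deterministic term $\bar{\mathbf x}^b-\bar{\mathbf x}$. Its size is up to $2\trinorm{\phi}\,b/N$, with coefficient $1$ at every $t$. But \eqref{eq:avg_timed_eps} needs the term linear in $\varepsilon$ to carry $c_t=1-\lambda_1^t(\bar{\mathbf W})$. That factor vanishes at $t=0$ and records that only part of the corrupted mass has reached the honest set by time $t$.

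Your two bullets also do not describe pieces of your own split. Write $\mathbf q:=\mathbf 1_H/(N-b)$ and $\mathbf W_{0:t}=\mathbf W_t\cdots\mathbf W_1$. Then $\mathbf h(t)-\bar{\mathbf x}^b=\sum_{u}\big(\mathbf W_{0:t}^\top(\mathbf q-\mathbf 1_N/N)\big)_u(\mathbf x_u(0)-\bar{\mathbf x}^b)$, with $\mathbf x_u(0)$ the contaminated initial embeddings. This is a pure mixing term with no separate leakage part.

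So Step~2 as written gives $2\trinorm{\phi}\varepsilon+O(a_t\trinorm{\phi}\sqrt{\varepsilon})$, i.e.\ the bound with $c_t$ replaced by $1$. The right-hand side of \eqref{eq:avg_timed_eps} equals $4K^2/(a_t+\sqrt{a_t^2+4c_tK})^2$ with $K:=\varepsilon^-\delta/\binom n2$. This is strictly decreasing in $c_t$, so your bound is strictly weaker. The inequality with $c_t\varepsilon$ you claim in Step~3 therefore does not follow from Step~2. This is a missing structural ingredient, not a constant you can absorb into the factor $4$.

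The missing idea, which is the paper's route, is to center at the honest initial mean $\bar{\mathbf x}_h:=(N-b)^{-1}\sum_{v\in H}\mathbf x_v(0)$ and split by \emph{source} node. With $\mathbf y_u:=\mathbf x_u(0)-\bar{\mathbf x}_h$, write $\mathbf h(t)-\bar{\mathbf x}_h=\bar H+\bar B$, where $\bar H=\sum_{u\in H}(\mathbf W_{0:t}^\top\mathbf q)_u\mathbf y_u$ and $\bar B$ is the same sum over $w\in B$.

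For $\bar H$: since $\sum_{u\in H}\mathbf y_u=0$, you can replace $\mathbf q$ by $\mathbf q-\mathbf 1_N/N$. The $\sqrt{\varepsilon}$ then comes from $\|\mathbf q-\mathbf 1_N/N\|^2=b/(N(N-b))$, not from corrupted dispersion, giving $\mathbb E\|\bar H\|\le 2a_t\sqrt{b/N}\,\trinorm{\phi}$.

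For $\bar B$: nonnegativity of $\mathbf W_{0:t}$ and independence give $\mathbb E\|\bar B\|\le 2\trinorm{\phi}Q_t$ with $Q_t:=(N-b)^{-1}\mathbf 1_H^\top\bar{\mathbf W}^t\mathbf 1_B$. For an orthonormal eigenbasis $(\boldsymbol\mu_k)$ of $\bar{\mathbf W}$ with $\boldsymbol\mu_N=\mathbf 1_N/\sqrt N$,
\[
Q_t=\frac bN-\frac1{N-b}\sum_{k<N}\lambda_k^t(\bar{\mathbf W})\langle\boldsymbol\mu_k,\mathbf 1_H\rangle^2,\qquad \sum_{k<N}\langle\boldsymbol\mu_k,\mathbf 1_H\rangle^2=\frac{b(N-b)}N,
\]
so PSD ($\lambda_k^t\ge\lambda_1^t\ge0$) yields $Q_t\le\frac bN(1-\lambda_1^t)$.

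Splitting off the $\mathbf 1_N$-direction is essential here. Interlacing on the honest block of $\bar{\mathbf W}^t$, as you suggest, only gives $Q_t\le 1-\lambda_1^t$ and loses the factor $\varepsilon$.

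Alternatively, you can keep your reference $\bar{\mathbf x}$, which matches the statement's $\sigma_\star$ and $\varepsilon^-$ more literally than the paper's use of $r_\psi(\bar{\mathbf x}_h)$. To do so, write the contaminated initial state as the uncorrupted one plus a perturbation supported on $B$ with rows of norm at most $2\trinorm{\phi}$, instead of splitting off $\bar{\mathbf x}^b-\bar{\mathbf x}$. The uncorrupted part, centered at $\bar{\mathbf x}$, contributes an $a_t\sqrt{b/(N-b)}\,\trinorm{\phi}$ mixing term, using $\sum_v\|\phi(\sigma_v)-\bar{\mathbf x}\|^2\le N\trinorm{\phi}^2$. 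This is at most $2a_t\sqrt{b/N}\,\trinorm{\phi}$ whenever $b\le 3N/4$. The perturbation contributes exactly the $2\trinorm{\phi}Q_t$ leakage term.
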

The bound in \eqref{eq:avg_timed_eps} reflects two competing spectral phenomena. The term $(1- \lambda_{\bar{\mathbf W}})^{t/2}$ controls the residual disagreement among honest nodes, how far their average has yet to mix, while $(1 - \lambda_1^t(\bar{\mathbf{W}}))$ captures the accumulated contamination from contaminated nodes, growing as their influence percolates through the graph. We note that $\bar{\mathbf W}$ is PSD in the randomized gossip setup (see Lemma~\ref{lem:properties}).

\section{Numerical experiments}
\label{sec:numerical_exp}
In this section, we evaluate the proposed decentralized consensus methods through a series of numerical experiments. We consider two classes of datasets, synthetic data generated from the Mallows model, and real-world preference data \cite{MaWa13a}, specifically the Sushi \cite{sushi} and Debian 2005 leader election datasets. In all settings, the gossip communication follows a uniform distribution over edges, \textit{i.e.}, $p = (1/|E|)_{e \in E}$\footnote{Code for all experiments is publicly available at https://github.com/gossip-ranking/gossip-ranking-aggregation.}. Additional experimental setups and supplementary results are provided in Appendix~\ref{sec:other_exp}.
\subsection{Gossip convergence experiments}
\label{sec:gossip_conv_exp}
\paragraph{Setup.}
We evaluate our methods on three datasets. Mixture of Mallows (see Appendix~\ref{sec:prob_model} for definition), $N=1000$ synthetic rankings over $n=10$ alternatives drawn from a two-component mixture with weights $(0.7, 0.3)$, dispersion parameters $(\varphi=0.4, \varphi=0.6)$, and reference rankings $\sigma_\star$ and $\sigma_{\mathrm{rev}}$ which is $\sigma_\star$ in reversed order, respectively; we run $500$ trials over $20000$ iterations. Sushi: strict complete rankings of $n=10$ alternatives by $N=5000$ voters; $25$ trials over $200000$ iterations. Debian: partial rankings of $n=7$ alternatives by $N=504$ voters; $250$ trials over $50000$ iterations. Sushi and Debian are taken from PrefLib~\cite{MaWa13a}. All experiments use four graph topologies (complete, 2D grid, geometric, Watts-Strogatz), with data reshuffled across nodes per trial. To handle partial rankings, we apply the rank normalization to infer the Borda consensus and corresponding weak pairwise preferences to deduce the Copeland consensus, as detailed in Appendix~\ref{sec:practical-tie}.\\

\begin{figure*}[ht]
    \centering

    \begin{subfigure}{0.30\textwidth}
        \centering
        \includegraphics[height=\myheight, width=\mywidth]{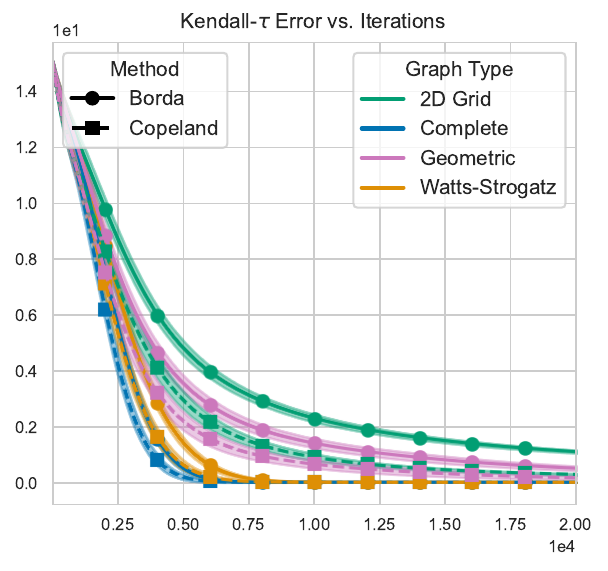}
        \caption{Mixture of Mallows}
    \end{subfigure}\hfill
    \begin{subfigure}{0.30\textwidth}
        \centering
        \includegraphics[height=\myheight, width=\mywidth]{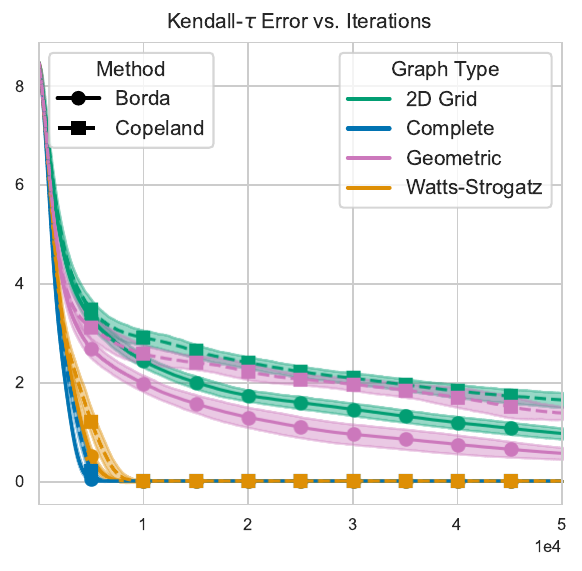}
        \caption{Debian dataset}
    \end{subfigure}\hfill
    \begin{subfigure}{0.30\textwidth}
        \centering
        \includegraphics[height=\myheight, width=\mywidth]{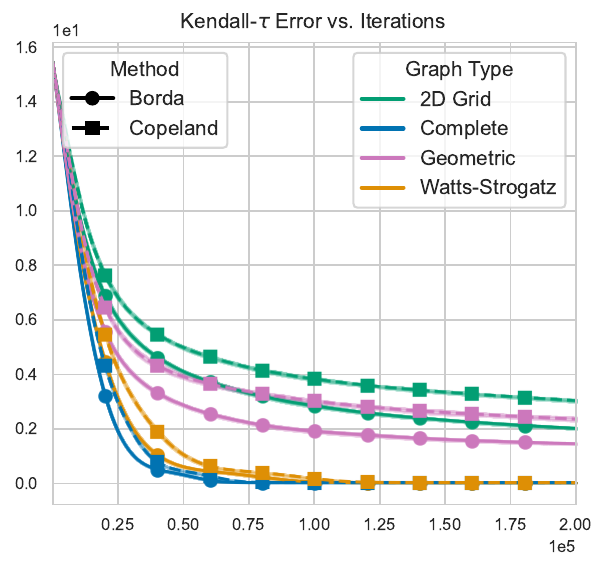}
        \caption{Sushi dataset}
    \end{subfigure}
    \caption{
    Average local $d_\tau$ error to respective consensus for the randomized gossip process.
    }

    \label{fig:convergence_both_datasets}
\end{figure*}

\paragraph{Results.} In Figure~\ref{fig:convergence_both_datasets}, we present the convergence behavior of the decentralized Borda and Copeland methods. The plots show the Kendall-$\tau$ distance to the ground-truth consensus averaged over the trials with $\pm2$ standard error. For both the Sushi and Debian datasets, we operate in a favorable setting in which all methods (Borda, Copeland) yield identical consensus rankings. For both gossip-based Borda and Copeland, convergence is exponential and consistent with the underlying graph topology, slower on the 2D grid and geometric graphs, and faster on the Watts-Strogatz and complete graphs (see Appendix~\ref{sec:graph_topology} for more details). For the two datasets, the errors of the two consensus algorithms decrease at very similar rates and are concordant with Theorem~\ref{thm:general-ranking-convergence} results. For the mixture of Mallows experiment, the same qualitative behavior is observed: both methods converge at near-identical rates across all topologies, with the 2D grid exhibiting slower convergence. This highlights that convergence speed is governed by the spectral gap of the graph and the initial data distribution rather than the choice of voting rule.

\subsection{Breakdown point study}
\label{sec:breakdown_expe}
\paragraph{Setup.} We consider $N = 3000$ agents and $n = 7$ alternatives over a Watts-Strogatz communication graph. The honest fraction $(1-\varepsilon)$ of agents hold rankings drawn i.i.d.\ from a Mallows model $P^M_{\theta}$ with parameter $\theta = (\sigma_\star, \varphi) \in \Sn \times [0,1]$, where $\varphi = 0.6$. The contaminated fraction $\varepsilon$ of agents hold $\sigma_{\mathrm{rev}}$, the maximally reversed ranking which is the worst-case adversarial perturbation. We vary the contamination fraction $\varepsilon \in [0.01, 0.45]$ across $25$ values and run $1000$ independent trials per configuration. For each trial, we run the gossip algorithms for $t \in \{5000, 7000, 10000, 20000\}$ iterations and measure the Kendall-$\tau$ distance $\delta$ between the honest consensus and the true consensus $\sigma_\star$. We are interested in the maximal Kendall-$\tau$ error achieved by a contamination fraction.

\paragraph{Results.} 
Figure~\ref{fig:breakdown} reports the empirical breakdown curves. First, at low iteration counts ($t=5000$), the breakdown curve exhibits a pronounced staircase structure with discrete jumps as contamination crosses critical thresholds, which smooths into a plateau by $t=7000$ as the residual spectral term $(1-\lambda_{\bar{\mathbf{W}}})^{t/2}$ in Theorem~\ref{thm:timed_break} decays. Second, Copeland exhibits an initial fragility, breaking at lower contamination levels than Borda in early iterations, but shows lower Kendall-$\tau$ error as contamination percolates through the network. Both methods achieve breakdown thresholds substantially higher than the worst-case bound $\varepsilon^-$ from Lemma~\ref{thm:bound_breakdown}, 
reflecting the gap between theoretical guarantees and average-case robustness. Third, breakdown thresholds slightly decrease with $t$ from $t=7000$ to $t=20000$, reflecting contamination percolation through the network as adversarial rankings diffuse into the honest average, though the degradation rate indicates practical deployments reach near-stationary robustness within finite runtime.
\begin{figure*}[ht]
    \centering
    \begin{subfigure}{0.24\textwidth}
        \centering
        \includegraphics[height=\fourheight, width=\mywidth]{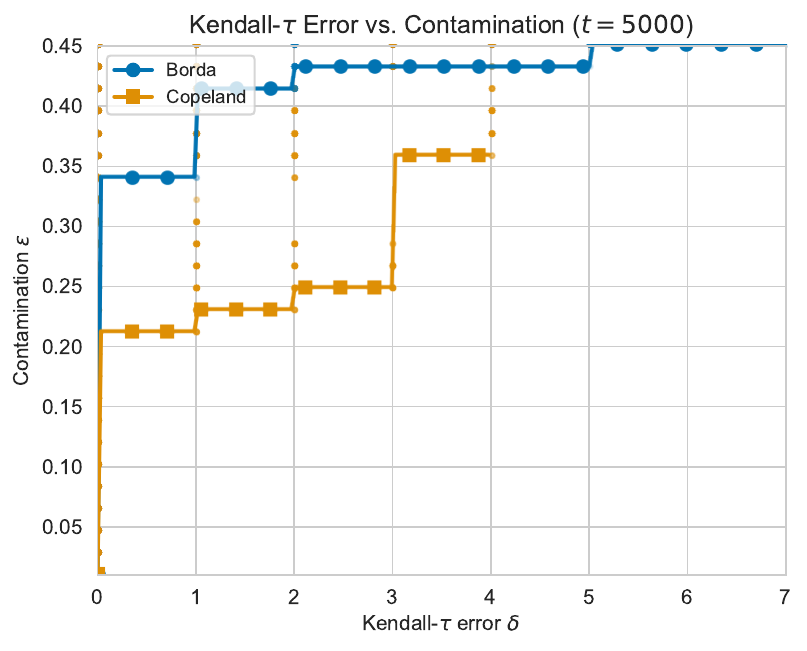}
        \caption{At iteration $t=5000$}
    \end{subfigure}
    \begin{subfigure}{0.24\textwidth}
        \centering
        \includegraphics[height=\fourheight, width=\mywidth]{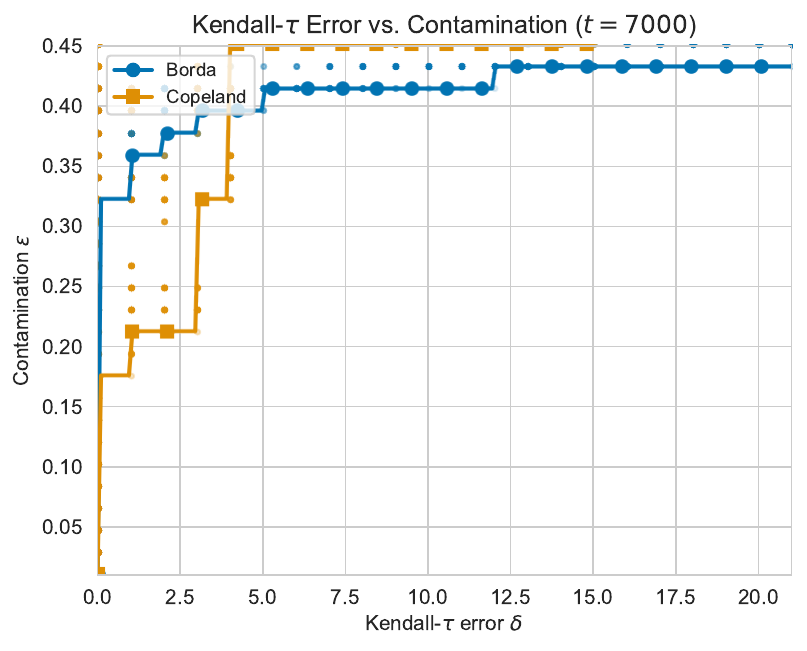}
        \caption{At iteration $t=7000$}
    \end{subfigure}
    \begin{subfigure}{0.24\textwidth}
        \centering
        \includegraphics[height=\fourheight, width=\mywidth]{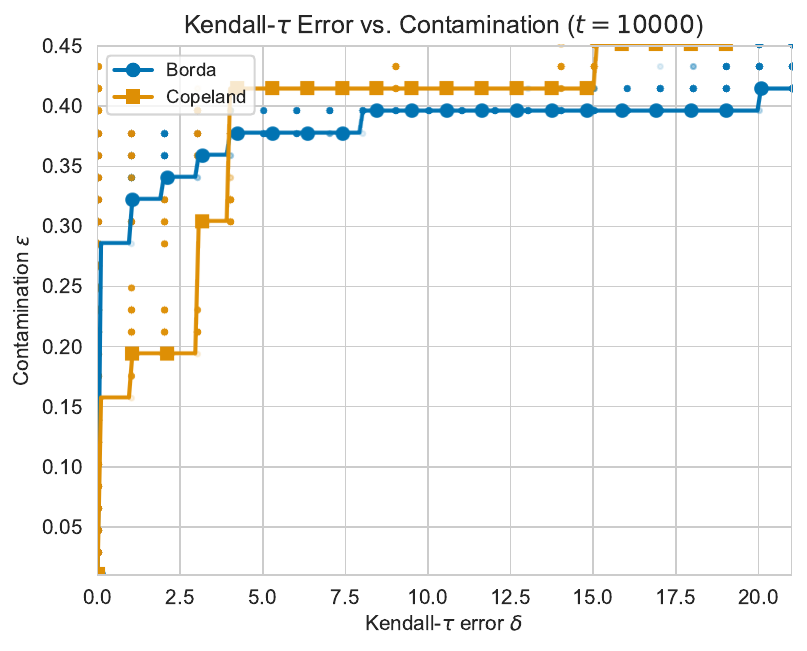}
        \caption{At iteration $t=10000$}
    \end{subfigure}\hfill
    \begin{subfigure}{0.24\textwidth}
        \centering
        \includegraphics[height=\fourheight, width=\mywidth]{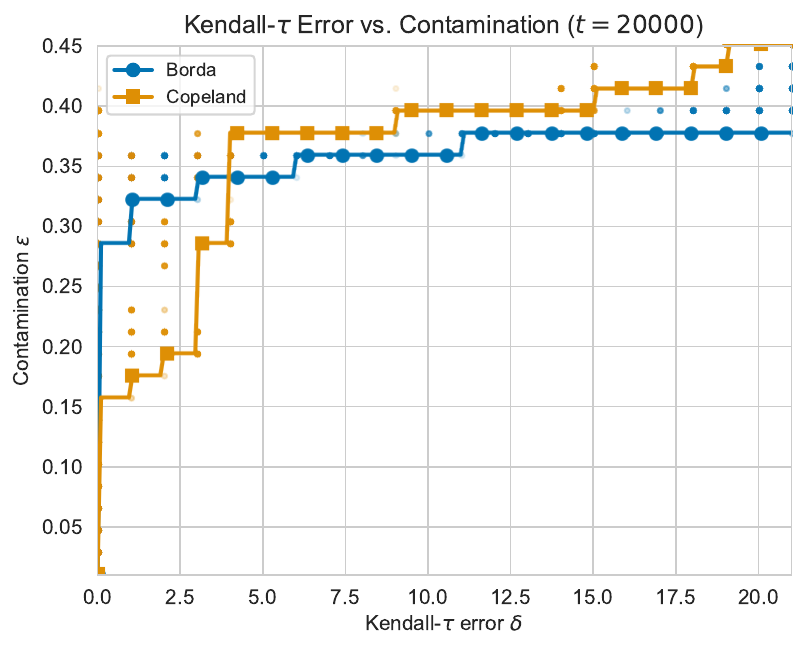}
        \caption{At iteration $t=20000$}
    \end{subfigure}\hfill
    \caption{Average Kendall-$\tau$ error vs. contamination for Mallows distribution.}
    \label{fig:breakdown}
\end{figure*}
\section{Conclusion and perspectives}
\label{sec:conclusion}
We have introduced a family of gossip-based algorithms for decentralized ranking aggregation. Our theoretical analysis establishes exponential convergence rates for score-based consensus, and our empirical evaluation across diverse network topologies and ranking datasets reveals fast convergence in practice. A natural extension is the design of provably robust gossip protocols under structured adversarial models \cite{he_byzantine-robust_2023, gaucher_unified_2025}.
\paragraph{Limitations.}
Our framework rests on several assumptions worth highlighting. Convergence requires $\mathcal G$ to remain connected and non-bipartite; robustness degrades as $\varepsilon^- = O(n^{-\beta}) \to 0$ with the number of alternatives; and per-round communication costs scale as $O(n^2)$ for Copeland, which may be prohibitive for large $n$. The adversarial model of Section~\ref{sec:robustness} further assumes corrupted nodes follow the averaging protocol, ruling out more sophisticated attacks such as delayed, selective, or adaptive message manipulation. Our convergence guarantees are asymptotic in the number of rounds $T$, and finite-time performance on sparse or highly heterogeneous graphs may differ substantially from the theoretical predictions, as suggested by some of our empirical observations.
\paragraph{Perspectives.} Decentralized settings also require careful consideration of neighbor trustworthiness. Algorithms~\ref{alg:sync-decentralized-cons} and \ref{alg:decentralized-cons} provide no privacy protection, because $\phi$ is injective. Indeed, the embedding $\phi$ maps $\Sn$ onto the finite set $\{\phi(\sigma) \mid \sigma \in \Sn\}$, the first message from node $v$ to any neighbor reveals $\sigma_v$ exactly. A solution is to compose these algorithms with a differential-privacy mechanism~\cite{cyffers_muffliato_2022} that injects noise into each exchanged embedding. The variance must be sufficiently small to ensure the ranking is preserved with high probability.

\bibliographystyle{plainnat}
\bibliography{reference}

\newpage
\appendix

\section*{Outline of the Supplementary Materials}
The supplementary material is organized as follows. Appendix~\ref{sec:graph_topology} describes the graph topologies used throughout our experiments. Appendix~\ref{sec:conv_gossip} details the randomized gossip communication protocols and some properties of these protocols. Appendix~\ref{sec:two_layers} presents the two-layer aggregation framework for score-based aggregation. Appendix~\ref{sec:proofs} contains all proofs of the main theoretical results stated in the paper. Appendix~\ref{sec:practical-tie} discusses extensions of our framework to broader settings especially considering partial rankings. Appendices~\ref{sec:further_borda} and \ref{sec:further_cope} provide further analysis of the Borda and Copeland voting rules under the developed framework. Appendix~\ref{sec:prob_model} formalizes and discusses the probabilistic model considerations for data. Appendix~\ref{sec:gossip-other} describes additional baselines notably for metric-based consensus and their comparisons and robustness. Finally, Appendix~\ref{sec:other_exp} presents additional experiments and ablation studies, while Appendix~\ref{sec:compute} details the compute resources used for the different experiments of the paper.

\section{Graph requirements and topologies}
\label{sec:graph_topology}
\subsection{Spectral properties of gossip matrices}

The convergence of gossip-based averaging algorithms is tightly linked to the spectral properties of the underlying communication graph $\mathcal{G} = (\N, E)$. We briefly recall the relevant definitions and described some key spectral quantities. In gossip protocols, the transition from a state to another are given by a gossip matrix whose definition is recalled here.

\begin{definition}[Gossip Matrix]
    A matrix $\mathbf W$ is said to be a \emph{gossip matrix} iff:
    \begin{itemize}
    \item $\mathbf W$ is symmetric \textit{i.e.}, $W_{uv} =  W_{vu}$ for all $u\ne v$.
    \item $W_{uv} > 0$ when $(u,v) \in E $ and $W_{uv} = 0$ when $(u,v)\notin E$.
    \item $W_{vv}\ge 0$ for all $v\in \N$.
        \item $\mathbf W$ is doubly-stochastic $\mathbf W\mathbf 1_N = \mathbf W^\top\mathbf 1_N = \mathbf 1_N$.
    \end{itemize}
\end{definition}

A necessary structural requirement is that $\mathcal{G}$ remains connected. For a symmetric real matrix $\mathbf{W}$, we denote its spectrum by $\operatorname{Sp}(\mathbf{W})$, with eigenvalues ordered as $\lambda_1(\mathbf{W}) \le \cdots \le \lambda_N(\mathbf{W})$. By the Perron-Frobenius theorem, the eigenvalue $1$ of $\mathbf{W}$ is simple and its associated eigenspace is $\operatorname{span}(\mathbf{1}_N)$. In the randomized gossip setting, the eigenvalues of the weighted graph Laplacian $\mathbf{L}(p)$ satisfy
\begin{equation*}
    0 = \lambda_1(p) \le \lambda_2(p) \le \cdots \le \lambda_N(p).
\end{equation*}
\begin{definition}[Spectral Gap]
For a given gossip matrix $\mathbf{W}$, the \emph{spectral gap} is defined as
\begin{equation*}
    \lambda_{\mathbf{W}} := \min_{\lambda \in \operatorname{Sp}\,(\mathbf{W}) \setminus \{1\}} 
    1 - |\lambda|.
\end{equation*}
\end{definition}

In the randomized gossip setting, since all eigenvalues of $\mathbf{L}(p)$ are non-negative, the spectral gap reduces to $\lambda_2(\mathbf{L}(p))/2$. This quantity is strictly positive if and only if $\mathcal{G}$ is connected. When connectivity fails, $\lambda_{\mathbf{W}} = 0$ and the convergence guarantee breaks down. The spectral gap therefore serves as the key parameter controlling the rate at which nodes reach consensus.

\subsection{Topology considerations}
More generally, if $\mathcal{G}$ decomposes into $K \geq 2$ disjoint connected components, then $\mathbf{L}(p)$ has exactly $K$ zero eigenvalues and the first non-zero eigenvalue is $\lambda_{K+1}(p)$. In this regime, a local consensus is reached within each component, but there is no guarantee that distinct components converge to the same value, the global average $\bar{\mathbf{x}}$ is no longer recoverable. Whereas deterministic topologies such as rings or 2D grids are always connected by construction, random graph models may yield disconnected realizations, and care must be taken when using such constructions in experiments.

\paragraph{Small-world model. (Watts-Strogatz) \cite{watts_collective_1998}} The graph is built using the following random procedure. The graph $\mathcal G$ contains $N$ nodes and is initialized as a $k$-regular graph. Then, for each node $v\in\N $ and for each right most edges ($k/2$ edges) rewire them to $u$ chosen uniformly and ($u\neq v$) with probability $p>0$. For $k\gg \log(N)$, $\mathcal G$ is almost surely connected.

\paragraph{Geometric graph. \cite{gilbert_random_1961}}
A random geometric graph is built considering $N$ points of the plane $(0,1]^2$ and then computing the Euclidean distance between every pair, if the two points are at a distance below a radius threshold $r$, then they are connected. This construction alone does not ensure the connectivity of the resulting graph. Nevertheless, prior works have shown that $r_N = \sqrt \frac{\log N + c_N}{\pi N}$ with $c_N\to\infty$ when $N\to\infty$ yields a connected graph almost surely \cite{gupta_critical_1999}.

\section{Preliminaries on randomized gossip protocol}
\label{sec:conv_gossip}
\paragraph{Markov chain considerations.} In this section, we first formalize the probabilistic constructions necessary to prove the main results. We consider the filtrated probability space $(\Omega, \mathcal F, \mathbb P, (\mathcal F_t)_{t\ge 0})$, where the universe $\Omega$ contains the sequences $\omega= (\omega_0, \omega_1, \dots) $ and where the filtration $(\mathcal F_t)_{t\ge 0}$ associated with the gossip stochastic process $(\mathbf X(t))_{t\ge 0}$. The randomness of a gossip process arises from the edge sampling process $e(\omega) = (e_t)_{t\ge 0}$. At each iteration $t>0$, edge $e_t = e$ is selected independently from previous iterations and identically distributed with probability $ p_e > 0$. The gossip process is actually a Markov chain thanks to the construction of the edge sampling process. 
We denote $\mathbf x^{(i)}(t) := (x_{1,i}(t), \dots, x_{N,i}(t))^\top\in\mathbb R^N$ for $i\in I$ (component-wise estimation at iteration $t$ across all voters), $\mathbf x_v(t) := (x_{v,1}(t), \dots, x_{v,|I|}(t))^\top\in \mathbb R ^I $ (voter-wise  embedding estimation at iteration $t$) for $v\in \N$ and finally $\mathbf X(t) := (\mathbf x_1(t),\dots \mathbf x_N(t)) = (\mathbf x^{(1)}(t)^\top, \dots, \mathbf x^{(|I|)}(t)^\top)^\top$. For simplicity sake, we denote the convergence $L^2(\Omega, \mathcal F, \mathbb P)$ as $L^2$.

 \paragraph{Main properties of randomized gossip processes.} For completeness, we recall standard properties of gossip matrices.
\begin{lemma}[\citet{boyd2006randomized}]
\label{lem:single-conv}
Let $I$ be a finite index set, $p = (p_e)_{e\in E}$. The vector $\mathbf{x}^{(i)}{(t)}$ evolves according to the gossip-based averaging procedure described in Section~\ref{sec:background_gossip}. Then, for any $i \in I$ and $t > 0$, we have
\[\mathbb{E}\left\|\mathbf{x}^{(i)}{(t)} - \bar x^{(i)}\mathbf{1}_N\right\|^2 \leq \left(1 - \frac{\lambda(p)}{2}\right)^{t} \left\| \mathbf x^{(i)}(0) - \bar x^{(i)}\mathbf{1}_N\right\|^2,\]
and thus $\mathbf{x}^{(i)}{(t)}$ converges towards $\bar x^{(i)}\mathbf{1}_N$ in $L^2$ when $t\to\infty$.
\end{lemma}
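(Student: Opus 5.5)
We need to prove Lemma (Boyd et al.): the expected squared deviation from the mean contracts by the factor $(1-\lambda(p)/2)$ at every step.

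The plan is the standard one-step contraction argument followed by iteration via the tower property. Fix $i\in I$ and set $\mathbf z(t) := \mathbf x^{(i)}(t) - \bar x^{(i)}\mathbf 1_N$. First I check that each update matrix $\mathbf W_e = \mathbf I_N - \tfrac12(\mathbf e_u-\mathbf e_v)(\mathbf e_u-\mathbf e_v)^\top$ is symmetric and doubly stochastic. Hence the mean is preserved: $\mathbf 1_N^\top \mathbf x^{(i)}(t+1) = \mathbf 1_N^\top \mathbf x^{(i)}(t)$, so $\bar x^{(i)}$ is constant along trajectories. Moreover $\mathbf W_e\mathbf 1_N = \mathbf 1_N$, which gives $\mathbf z(t+1) = \mathbf W_{e_t}\mathbf z(t)$, and $\mathbf z(t)\perp \mathbf 1_N$ for all $t$ by induction.

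Next I exploit that $\mathbf W_e$ is an orthogonal projection. Writing $\mathbf a_e = \mathbf e_u - \mathbf e_v$, we have $\|\mathbf a_e\|^2=2$, so $\tfrac12\mathbf a_e\mathbf a_e^\top$ is a projection. Therefore $\mathbf W_e^\top\mathbf W_e = \mathbf W_e^2 = \mathbf W_e$. Conditioning on $\mathcal F_t$ and using that $e_t$ is independent of the past with law $p$:
\[
\mathbb E\left[\|\mathbf z(t+1)\|^2 \mid \mathcal F_t\right] = \mathbf z(t)^\top \mathbb E[\mathbf W_{e}]\,\mathbf z(t) = \mathbf z(t)^\top\left(\mathbf I_N - \tfrac12\mathbf L(p)\right)\mathbf z(t).
\]
Because $\mathbf z(t)\perp\mathbf 1_N=\ker \mathbf L(p)$ (the graph is connected and $p_e>0$), the Courant--Fischer theorem gives $\mathbf z^\top \mathbf L(p)\mathbf z \ge \lambda_2(\mathbf L(p))\|\mathbf z\|^2$. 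With the paper's convention that the spectral gap of $\mathbb E[\mathbf W_e]$ is $\lambda(p)/2$, this yields
\[
\mathbb E\left[\|\mathbf z(t+1)\|^2\mid\mathcal F_t\right]\le \left(1-\tfrac{\lambda(p)}{2}\right)\|\mathbf z(t)\|^2.
\]

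Finally I take total expectations and induct on $t$, using that $\mathbf z(0)$ is deterministic, to obtain the stated bound. Since the graph is connected, $\lambda(p)>0$, and $1-\lambda(p)/2\in[0,1)$ because the eigenvalues of $\mathbf I_N - \tfrac12\mathbf L(p)$ lie in $[0,1]$ (it is an average of projections, hence PSD). So the right-hand side tends to $0$, which is $L^2$ convergence.

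The main obstacle is mostly bookkeeping. The key step is the identity $\mathbf W_e^2=\mathbf W_e$, which turns the second moment into the first moment $\mathbb E[\mathbf W_e]$. The one point requiring care is matching normalizations: the paper calls $\lambda(p)/2$ the second smallest eigenvalue of $\tfrac12\mathbf L(p)$, and the contraction factor must read $1-\lambda(p)/2$ accordingly.
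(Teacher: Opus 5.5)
Your proof is correct and is essentially the standard argument of \citet{boyd2006randomized}, which the paper cites rather than reproduces. The paper collects the same ingredients in Lemma~\ref{lem:properties}: symmetry and idempotence of $\mathbf W_e$ turn $\mathbb E[\mathbf W_e^\top\mathbf W_e]$ into $\bar{\mathbf W}=\mathbf I_N-\tfrac12\mathbf L(p)$, whose quadratic form on $\mathbf 1_N^\perp$ is bounded by $\lambda_{N-1}(\bar{\mathbf W})=1-\lambda(p)/2$ with $\lambda(p)=\lambda_2(\mathbf L(p))$, and you then iterate via the tower property with exactly this normalization.
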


\begin{lemma}
\label{lem:properties}
Assume the graph $\mathcal{G }=(\N, E)$ is connected. Let $ t > 0 $.  If at iteration $t\ge 0$, $e\in E $ is sampled with $e=(u,v)$, then the gossip matrix is given by $ \mathbf W_t = \mathbf{I}_N - \left( \mathbf{e}_u - \mathbf{e}_v \right) \left( \mathbf{e}_u - \mathbf{e}_v \right)^{\top}/2 $. 
The following properties hold:
\begin{enumerate}
    \item The matrices $\mathbf W_t$ are gossip matrices. Moreover, the matrices act as idempotent matrices \textit{i.e.},  $\mathbf{W}_t^2 = \mathbf{W}_t$.
    \item Since  $\sum_{e \in E} p_e = 1$,  $\bar{\mathbf{W}} = \mathbb{E}[\mathbf{W}_t] = \mathbf{I}_N - \frac{1}{2}\sum_{e \in E} p_e\mathbf{L}_e.$
The matrix $\bar{\mathbf{W}}$ is also doubly stochastic and it follows that $\mathbf{1}_N$ is an eigenvector with eigenvalue 1. 
\item $\bar{\mathbf{W}}$ is semi-definite-positive.
\item The matrix $\tilde{\mathbf{W}} := \bar{\mathbf{W}} -{\mathbf{1}_N \mathbf{1}_N^\top}/N$ satisfies, by construction, $\tilde{\mathbf{W}} \mathbf{1}_N = 0$, and it can be shown that $\| \tilde{\mathbf{W}}\|_{\operatorname{op}} \leq \lambda_{N-1}(\bar{\mathbf{W}})$ and $\|\cdot\|_{\operatorname{op}}$ denotes the operator norm of a matrix. Moreover, the eigenvalue $\lambda_{N-1}(\bar{\mathbf{W}})$ satisfies $0 \leq \lambda_{N-1}(\bar{\mathbf{W}}) < 1$ and $\lambda_{N-1}(\bar{\mathbf{W}}) = 1 - {\lambda(p)/2}$ where $\lambda(p)$ the spectral gap (or second smallest eigenvalue) of the Laplacian of the weighted graph defined as $\mathbf L(p) =\sum_{e \in E} p_e\mathbf{L}_e$ . 

\end{enumerate}    
\end{lemma}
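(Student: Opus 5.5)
We prove the four items of Lemma~\ref{lem:properties} in order, each by a direct matrix computation from the form $\mathbf W_t = \mathbf I_N - \frac12 \mathbf L_e$, where $\mathbf L_e := (\mathbf e_u-\mathbf e_v)(\mathbf e_u-\mathbf e_v)^\top$.

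\textbf{Item 1.} Symmetry is immediate from the definition of $\mathbf W_t$. Its entries are $1/2$ at positions $(u,u),(v,v),(u,v),(v,u)$, equal to $1$ on the other diagonal entries, and $0$ elsewhere. Hence it is nonnegative and supported on $E\cup\{\text{diagonal}\}$. Since $(\mathbf e_u-\mathbf e_v)^\top\mathbf 1_N=0$, we have $\mathbf L_e\mathbf 1_N=0$, so $\mathbf W_t$ is doubly stochastic. For idempotence, set $\mathbf a=\mathbf e_u-\mathbf e_v$, so that $\|\mathbf a\|^2=2$ and $\mathbf L_e^2=\mathbf a(\mathbf a^\top\mathbf a)\mathbf a^\top=2\mathbf L_e$. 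Then
\[
\mathbf W_t^2=\mathbf I_N-\mathbf L_e+\tfrac14\mathbf L_e^2=\mathbf I_N-\tfrac12\mathbf L_e=\mathbf W_t .
\]
Equivalently, $\mathbf W_t$ is the orthogonal projector onto $\mathbf a^\perp$.

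\textbf{Item 2.} This follows from linearity of expectation and $\sum_e p_e=1$: $\mathbb E[\mathbf W_t]=\sum_e p_e(\mathbf I_N-\tfrac12\mathbf L_e)=\mathbf I_N-\tfrac12\mathbf L(p)$. Double stochasticity is preserved under convex combinations, so $\mathbf 1_N$ is an eigenvector of $\bar{\mathbf W}$ with eigenvalue $1$.

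\textbf{Item 3.} Each $\mathbf W_t$ is an orthogonal projector, hence PSD, and a convex combination of PSD matrices is PSD. Alternatively, for any $\mathbf x$,
\[
\mathbf x^\top\bar{\mathbf W}\mathbf x=\sum_e p_e\|\mathbf W_e\mathbf x\|^2\ge0 .
\]
This also bounds the spectrum of $\bar{\mathbf W}$ in $[0,1]$. Indeed, $\mathbf x^\top\mathbf L(p)\mathbf x\le\sum_e p_e\cdot 2\|\mathbf x\|^2$ because $\|\mathbf L_e\|_{\mathrm{op}}=2$, so the eigenvalues of $\tfrac12\mathbf L(p)$ lie in $[0,1]$.

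\textbf{Item 4.} We use the spectral decomposition of $\bar{\mathbf W}$ in an orthonormal eigenbasis containing $\mathbf 1_N/\sqrt N$. Subtracting $\mathbf 1_N\mathbf 1_N^\top/N$ removes exactly the eigenvalue $1$ on $\mathrm{span}(\mathbf 1_N)$ and leaves the other eigenpairs unchanged. Hence $\tilde{\mathbf W}\mathbf 1_N=0$, and $\|\tilde{\mathbf W}\|_{\mathrm{op}}$ is the largest absolute value among the remaining eigenvalues. By item 3 these are nonnegative, so this equals the second-largest eigenvalue of $\bar{\mathbf W}$, denoted $\lambda_{N-1}(\bar{\mathbf W})$ in the paper's indexing. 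Since $\bar{\mathbf W}=\mathbf I_N-\tfrac12\mathbf L(p)$, the eigenvalues correspond as $\lambda_k(\bar{\mathbf W})=1-\tfrac12\lambda_{N+1-k}(\mathbf L(p))$, so the second-largest eigenvalue of $\bar{\mathbf W}$ is $1-\lambda_2(\mathbf L(p))/2=1-\lambda(p)/2$.

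\textbf{Main obstacle.} The only substantive step is the strict inequality $<1$, i.e.\ $\lambda(p)>0$. We argue via the quadratic form
\[
\mathbf x^\top\mathbf L(p)\mathbf x=\sum_{e=(u,v)}p_e(x_u-x_v)^2 .
\]
Since every $p_e>0$, this vanishes exactly when $\mathbf x$ is constant along every edge. By connectivity of $\mathcal G$, that forces $\mathbf x\in\mathrm{span}(\mathbf 1_N)$. So the kernel of $\mathbf L(p)$ is one-dimensional, and therefore $\lambda_2(\mathbf L(p))>0$. The nonnegativity $\lambda_{N-1}(\bar{\mathbf W})\ge0$ follows from item 3. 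The points needing care are the ordering and indexing conventions (the paper orders eigenvalues increasingly, so the relevant eigenvalue is indexed $N-1$), and ensuring that the assumption $p_e>0$ for all edges is used.
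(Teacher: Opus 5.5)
Your proof is correct. For item 3, the only item the paper actually proves, it uses the same argument: each single-edge $\mathbf W_e$ is PSD and $\bar{\mathbf W}$ is a convex combination of them. You get PSD of $\mathbf W_e$ from its being a symmetric idempotent, i.e.\ an orthogonal projector, whereas the paper checks $\mathbf y^\top\mathbf W_e\mathbf y=\|\mathbf y\|^2-(y_u-y_v)^2/2\ge 0$ via $(y_u-y_v)^2\le 2\|\mathbf y\|^2$.

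For items 1, 2 and 4 the paper simply cites Boyd et al., and your direct computations are the standard ones. You correctly use item 3 to identify $\|\tilde{\mathbf W}\|_{\mathrm{op}}$ with $\lambda_{N-1}(\bar{\mathbf W})$, and connectivity together with $p_e>0$ to get the strict inequality $\lambda_{N-1}(\bar{\mathbf W})<1$.
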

\begin{proof}
For points \textit{1.}, \textit{2.}, and \textit{4.}, see \citet{boyd2006randomized}. For point \textit{3.}, observe that each individual gossip matrix $\mathbf{W}_e = \mathbf{I}_N - (\mathbf{e}_u - \mathbf{e}_v)(\mathbf{e}_u - \mathbf{e}_v)^\top/2$ is positive semi-definite. Thus, for any $\mathbf{y} \in \mathbb{R}^N$,
\[\mathbf{y}^\top \mathbf{W}_e \mathbf{y} = \|\mathbf{y}\|^2 - \frac{(y_u - y_v)^2}{2} \geq 0,\]
where the inequality follows from $(y_u - y_v)^2 \leq 2(y_u^2 + y_v^2) \leq 2\|\mathbf{y}\|^2$. Since $\bar{\mathbf{W}} = \sum_{e \in E} p_e \mathbf{W}_e$ is a convex combination of positive semi-definite matrices with $p_e > 0$ and $\sum_{e\in E} p_e = 1$, it is itself PSD.
\end{proof}

\section{Examples of the two-layer framework}
\label{sec:two_layers}
We now place this construction in a general score-based framework. Let $I$ be an index set and consider mappings $\phi : \Sn \to \mathbb{R}^I$, $\psi : \mathbb{R}^I \to \mathbb{R}^n,$
where $\phi$ encodes a ranking and $\psi$ produces a score vector. The final ranking is obtained by sorting the coordinates of $\psi(\bar{\mathbf{x}})$, where $\bar{\mathbf{x}} = N^{-1}\sum_{v=1}^N\phi(\sigma_v)$. We recall the notation $[\cdot]_{\Sn}$ from Remark~\ref{rem:geom_interp}. We present several constructions of such frameworks in Table~\ref{tab:ranking_methods}.
\begin{table}[H]
\centering
\begin{center}
\begin{tabular}{l|ll|l}
\toprule
Method & Embedding $\phi(\sigma)$ & Aggregation $\psi(\mathbf{x})$ & Dim. $|I|$ \\
\midrule

Borda 
& $( \sigma(1), \dots,  \sigma(n))^\top$ 
& $\mathrm{Id}$ 
& $n$ \\
\vspace{0.1em}
Copeland 
& $(\mathbb{I}\{\sigma(i) < \sigma(j)\})_{i < j}$ 
& $\left( \sum_{j \neq i} 
\big[ \mathbb{I}\{\mathbf x _{ij} > \tfrac{1}{2}\} 
- \mathbb{I}\{\mathbf x_{ij} < \tfrac{1}{2}\} \big] 
\right)_{i=1}^n$ 
& $n(n-1)/2$ \\

Bucklin \cite{erdelyi_control_2015} & $(\mathbb I \{\sigma(i) = k\})_{i,k\in\n}$ & $\left(\min\left\{k\in\n\mid \sum^k_{\ell=1} \mathbf x_{i\ell} > 1/2\right\} \right)_{i=1}^n$ & $n^2$\\
Plurality \cite{Brandt_Conitzer_Endriss_Lang_Procaccia_2016}
& $(\mathbb I\{\sigma(i)=1\})_{i\in\n}$
& $\mathrm{Id}$
& $n$ \\

Maximin \cite{simpson1969}
& $(\mathbb{I}\{\sigma(i) < \sigma(j)\})_{i < j}$
& $\left( \min_{j \neq i} \mathbf x_{ij} \right)_{i=1}^n$
& $n(n-1)/2$ \\
\bottomrule
\end{tabular}
\end{center}
\caption{Common ranking methods expressed in embedding-aggregation form. 
The map $\phi$ encodes a permutation $\sigma\in\Sn$ into the embedding space, 
$\psi$ aggregates information into scores, 
and $|I|$ denotes the dimensionality of the embedding.}
\label{tab:ranking_methods}
\end{table}
\begin{remark}[Communication cost]
\label{rem:comm_cost}
At each gossip round, each node exchanges a vector of dimension $|I|$. For Borda and plurality, $|I| = n$, for Copeland and Maximin, $|I| = n(n-1)/2$, for Bucklin, $|I| = n^2$ and $|I| = n$ respectively (see Table~\ref{tab:ranking_methods}). In contrast, a centralized approach requires collecting all $N$ rankings at a single node, incurring $O(Nn)$ total communication. The gossip approach replaces this with $O(T |I|)$ per-node cost, where $T$ is the mixing time.
\end{remark}

We now state a general result for stability radii in the following Lemma.

\begin{lemma}
\label{lem:radius_lipschitz}
For a given function $\psi$, the function $\mathbf x \mapsto r_\psi(\mathbf x)$ is $1$-Lipschitz on $\mathbb R^I$.
\end{lemma}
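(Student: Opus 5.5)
We would prove the claim with the standard argument that a "distance to the complement of a stable set" is $1$-Lipschitz, using the definition of $r_\psi$ as the supremum of radii $r$ such that every $\mathbf y$ with $\|\mathbf y-\mathbf x\|\le r$ satisfies $d_\tau(\psi(\mathbf y),\psi(\mathbf x))=0$. We adopt the convention $r_\psi(\mathbf x)=0$ when $\psi$ is not locally ranking-stable at $\mathbf x$. Fix $\mathbf x,\mathbf x'\in\mathbb R^I$ and set $h:=\|\mathbf x-\mathbf x'\|$. By symmetry it suffices to show
\[
r_\psi(\mathbf x')\ge r_\psi(\mathbf x)-h .
\]
The bound is trivial when $r_\psi(\mathbf x)\le h$, since radii are nonnegative. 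So assume $r_\psi(\mathbf x)>h$.

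Take any $r$ with $h<r<r_\psi(\mathbf x)$, so that the closed ball $\bar B(\mathbf x,r)$ is contained in the set $\{\mathbf y: d_\tau(\psi(\mathbf y),\psi(\mathbf x))=0\}$. Then take any $\mathbf y$ with $\|\mathbf y-\mathbf x'\|\le r-h$. The triangle inequality gives $\|\mathbf y-\mathbf x\|\le r$, hence $d_\tau(\psi(\mathbf y),\psi(\mathbf x))=0$. The point $\mathbf x'$ itself lies in the ball, so also $d_\tau(\psi(\mathbf x'),\psi(\mathbf x))=0$.

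The key step is to transfer this to $d_\tau(\psi(\mathbf y),\psi(\mathbf x'))=0$. Here $d_\tau$ applied to score vectors is understood through the induced ranking $[\cdot]_{\Sn}$. In the tie-free setting of Remark~\ref{rem:geom_interp}, $d_\tau=0$ means the induced permutations coincide, so equality is transitive and we get $[\psi(\mathbf y)]_{\Sn}=[\psi(\mathbf x)]_{\Sn}=[\psi(\mathbf x')]_{\Sn}$. If ties are permitted, we instead use that $d_\tau$ is a pseudometric and invoke the triangle inequality:
\[
d_\tau(\psi(\mathbf y),\psi(\mathbf x'))\le d_\tau(\psi(\mathbf y),\psi(\mathbf x))+d_\tau(\psi(\mathbf x),\psi(\mathbf x'))=0 .
\]
This step is where we expect the main obstacle to lie. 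With the tie convention of Appendix~\ref{sec:practical-tie}, one must check that zero discordance is an equivalence relation. Failing that, we would restrict to the cell $\mathcal C(\sigma_\star)$ and argue directly with the cell description $r_\psi(\mathbf x)=\operatorname{dist}(\mathbf x,\partial\mathcal C)$, which is a distance function and therefore $1$-Lipschitz.

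It follows that $r_\psi(\mathbf x')\ge r-h$. Letting $r\uparrow r_\psi(\mathbf x)$ gives $r_\psi(\mathbf x')\ge r_\psi(\mathbf x)-h$. Exchanging the roles of $\mathbf x$ and $\mathbf x'$ yields $|r_\psi(\mathbf x)-r_\psi(\mathbf x')|\le\|\mathbf x-\mathbf x'\|$. If $r_\psi(\mathbf x)=+\infty$, for instance when $\psi$ is constant in ranking, the same argument gives $r_\psi(\mathbf x')=+\infty$, and the statement holds in the extended sense.
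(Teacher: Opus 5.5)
Your argument is correct and is essentially the paper's proof. The paper writes $r_\psi(\mathbf x)=\inf_{\mathbf z\in T(\mathbf x)}\|\mathbf x-\mathbf z\|$, where $T(\mathbf x)=\{\mathbf y : [\psi(\mathbf y)]_{\Sn}\neq[\psi(\mathbf x)]_{\Sn}\}$, and splits into two cases. If $[\psi(\mathbf x)]_{\Sn}\neq[\psi(\mathbf x')]_{\Sn}$, this is your trivial case, since then $r_\psi(\mathbf x)\le h$. If the rankings are equal, then $T(\mathbf x)=T(\mathbf x')$ and the triangle inequality gives exactly your ball-inclusion step.

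You should drop the tie branch, though. The strict-discordance count on tied score vectors is not a pseudometric: take $(1,2),(1,1),(2,1)$. Under that reading the lemma itself fails: for $\psi=\mathrm{Id}$ on $\mathbb R^2$, $r_\psi((0,0))=+\infty$ while $r_\psi((\epsilon,0))=\epsilon/\sqrt2$. So the $[\cdot]_{\Sn}$ interpretation you adopt, which is also the one the paper uses through $T(\mathbf x)$, is the one actually needed.
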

\begin{proof}
To show the continuity of the function $\mathbf x \mapsto r_\psi(\mathbf x)$, we show the Lipschitz property of said function. Let $\mathbf x, \mathbf x' \in\mathbb R ^I$. We define the set 
$T(\mathbf x)  := \{\mathbf y \in\mathbb R^I\mid [\psi(\mathbf y)]_{\Sn} \ne[\psi(\mathbf x)]_{\Sn} \}.$ The radius of stability is defined as $r_\psi(\mathbf x) =  \inf_{\mathbf z \in T(\mathbf x)}\|\mathbf x - \mathbf z\|$.
We consider two cases, if $[\psi(\mathbf x)]_{\Sn} \ne [\psi(\mathbf x')]_{\Sn}$, then $\mathbf x'\in T(\mathbf x)$. Thus,
$r_\psi(\mathbf x) \le \|\mathbf x - \mathbf x'\|.$ By symmetry of the roles of $\mathbf x, \mathbf x '$, we can bound $\max\{r_\psi(\mathbf x), r_\psi(\mathbf x')\} \le \|\mathbf x - \mathbf x'\|.$
Using the fact that $\max\{r_\psi(\mathbf x), r_\psi(\mathbf x')\}\ge |r_\psi(\mathbf x)- r_\psi(\mathbf x')|$, allows us to conclude on the $1$-Lipschitz nature of $r_\psi$.
Otherwise, if $[\psi(\mathbf x)]_{\Sn} = [\psi(\mathbf x')]_{\Sn}$, then $T(\mathbf x) = T(\mathbf x')$. Consider an element $\mathbf z \in T(\mathbf x)$. Then by triangle inequality, $r_\psi(\mathbf x) \le \|\mathbf x - \mathbf z\|\le  \|\mathbf x - \mathbf x'\| + \|\mathbf x' - \mathbf z\|$. Taking the infimum on $\mathbf z$ allows us to derive:
\begin{align*}
     r_\psi(\mathbf x) &\le \|\mathbf x - \mathbf x'\| + \inf_{\mathbf z \in T(\mathbf x)}  \|\mathbf x' - \mathbf z\|\le \|\mathbf x - \mathbf x'\| + r_\psi(\mathbf x ').
\end{align*}
By symmetry of the roles of $\mathbf x, \mathbf x '$, we find the $1$-Lipschitz behavior of $r_\psi$, as $|r_\psi(\mathbf x) - r_\psi(\mathbf x')|\le \|\mathbf x - \mathbf x'\|.$ The restrictions to the two cases of $r_\psi$ are Lipschitz allowing us to conclude.
\end{proof}
\begin{remark}
This property is useful as estimating $r_\psi(\bar{\mathbf{x}})$ is often complicated locally. Indeed, $\bar{\mathbf{x}}$ itself is the global quantity being estimated by the gossip process. Nevertheless, based on Lemma~\ref{lem:radius_lipschitz} and the fact that $\mathbf{x}^{(i)}(t)$ converges in $L^2$ toward $\bar{x}^{(i)}\mathbf 1_N$ for all $i\in I$, we can conclude using the continuous mapping theorem that $r_\psi(\mathbf{x}_v(t))$ converges to $r_\psi(\bar{\mathbf{x}})$ in $L^2$ for any node $v\in\N$. 

More precisely, the Lipschitz property gives $\mathbb E |r_\psi(\mathbf{x}_v(t)) - r_\psi(\bar{\mathbf{x}})|^2 \le \mathbb E \|\mathbf{x}_v(t) - \bar{\mathbf{x}}\|^2,$ which decays at the same rate as the gossip process itself. This enables  practical estimation of the stability radius appearing in the mixing time bounds of Remarks~\ref{rem:sync-mixing-time} and~\ref{rem:mixing-time}, each node can compute $r_\psi(\mathbf{x}_v(t))$ locally to approximate the global threshold $r_\psi(\bar{\mathbf{x}})$ and implement adaptive stopping criteria without coordination.
\end{remark}

\section{Proofs of main results}
\label{sec:proofs}

\subsection{Proof of Theorem \ref{thm:sync-general-ranking-convergence}}
We first establish the following auxiliary result, which is useful for the convergence guarantees for both the synchronous and randomized gossip protocols. Throughout, we write $\mathbb{E}_{P}f = \sum_{\sigma\in\mathfrak{S}_n} P(\sigma)f(\sigma)$ for any measurable map $f$ from $\mathfrak{S}_n$ to a Hilbert space.
\begin{lemma}
    \label{lem:kendall-err}
    For a graph $\mathcal G =(\N, E)$. Let $(\mathbf{x}_v(t))_{t \ge 0,\, v \in \N}$ be the distributed stochastic process generated by a $(\phi, \psi)$-voting rule. Then, for all $t \ge 0$, the average expected Kendall-$\tau$ error satisfies
        \[\mathbb{E}\left[\frac{1}{N}\sum_{v=1}^N d_\tau\left(\sigma_\star,\, \widehat{\sigma}_v(t)\right)\right]\le\frac{\binom{n}{2}}{Nr_\psi(\bar{\mathbf{x}})^2}\sum_{i \in I} \mathbb{E}\,\bigl\|\mathbf x^{(i)}(t) - \bar{x}^{(i)}\,\mathbf{1}_N\bigr\|^2.\]
\end{lemma}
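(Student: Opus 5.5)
The argument works node by node. We bound the Kendall-$\tau$ distance by $\binom n2$ times an indicator, turn the indicator into a second-moment quantity through the stability radius and Markov's inequality, and sum over nodes. Fix $t\ge 0$ and a node $v$. We have $d_\tau(\sigma_\star,\widehat\sigma_v(t))\le \binom n2$ deterministically, since the Kendall-$\tau$ distance counts discordant pairs among the $\binom n2$ pairs. By definition of the stability radius $r_\psi(\bar{\mathbf x})$, whenever $\|\mathbf x_v(t)-\bar{\mathbf x}\|< r_\psi(\bar{\mathbf x})$ we get $d_\tau(\psi(\mathbf x_v(t)),\psi(\bar{\mathbf x}))=0$, so $\widehat\sigma_v(t)=\sigma_\star$. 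Therefore
\[
d_\tau\bigl(\sigma_\star,\widehat\sigma_v(t)\bigr)\le \tbinom n2\,\mathbb I\bigl\{\|\mathbf x_v(t)-\bar{\mathbf x}\|\ge r_\psi(\bar{\mathbf x})\bigr\}.
\]
One technical point is that the supremum $r_\psi$ may or may not be attained. Using the strict-inequality event sidesteps this: for every $r<r_\psi$ stability holds, so the open ball is always fine.

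Next we take expectations and apply Markov's inequality to the squared norm. This gives
\[
\mathbb E\, d_\tau(\sigma_\star,\widehat\sigma_v(t))\le \tbinom n2\,\mathbb P\bigl(\|\mathbf x_v(t)-\bar{\mathbf x}\|^2\ge r_\psi(\bar{\mathbf x})^2\bigr)\le \frac{\binom n2}{r_\psi(\bar{\mathbf x})^2}\,\mathbb E\|\mathbf x_v(t)-\bar{\mathbf x}\|^2 .
\]
Averaging over $v$ with weight $1/N$, we then exchange the order of summation:
\[
\sum_{v=1}^N\|\mathbf x_v(t)-\bar{\mathbf x}\|^2=\sum_{v}\sum_{i\in I}(x_{v,i}(t)-\bar x^{(i)})^2=\sum_{i\in I}\|\mathbf x^{(i)}(t)-\bar x^{(i)}\mathbf 1_N\|^2 .
\]
Combining with linearity of expectation yields exactly the claimed bound, with the factor $\binom n2/(N r_\psi(\bar{\mathbf x})^2)$.

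The only delicate step is the reduction from ranking stability to the indicator. Ranking stability is phrased through $d_\tau(\psi(\mathbf y),\psi(\bar{\mathbf x}))=0$, and we must check that this means the induced sorted permutation $\widehat\sigma_v(t)$ coincides with $\sigma_\star$, the ranking induced by $\psi(\bar{\mathbf x})$ under the paper's tie-free convention. Once that identification is granted, the rest is just Markov's inequality plus a reindexing of a double sum. The argument is valid for any process, deterministic or random, so it serves both the synchronous and randomized protocols: in the deterministic case the expectation is trivial.
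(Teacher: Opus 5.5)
Your proposal is correct and follows the paper's proof: bound $d_\tau$ by $\binom n2$ times the indicator of leaving the stability ball, apply Markov's inequality to the squared norm, and reindex the double sum over nodes and coordinates. Your use of the strict-inequality (open-ball) event is slightly more careful than the paper. The paper conditions on the closed ball $\{\|\mathbf x_v(t)-\bar{\mathbf x}\|\le r_\psi(\bar{\mathbf x})\}$, where stability is not guaranteed on the boundary because $r_\psi(\bar{\mathbf x})$ is only a supremum. The final bound is the same either way.
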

\begin{proof}
    For $t\ge 0$. Given a $(\phi, \psi)$-voting rule, we need to consider the set of per-node events $\Omega^v_t$ such that the running estimate $\mathbf x_v(t)$ for node $v\in\N$, is inside the stability radius of estimate $\bar{\mathbf x}=\mathbb E _{\widehat P_N}\phi$:
\[\Omega_t^v =\{\omega \in\Omega \mid \|\mathbf x_v(t, \omega) - \bar{\mathbf x }\|\le r_\psi(\bar{\mathbf x})  \}\qquad \forall v \in \N.\]
Then, for a node $v\in\N$, the expected Kendall-$\tau$ error satisfies
\begin{equation*}
\mathbb E\, d_\tau\left(\sigma_\star, \widehat{\sigma}_v(t)\right) =  \mathbb{E}\left[d_\tau\left(\sigma_\star, \widehat{\sigma}_v(t)\right)\;\middle|\;  \Omega_t^v\right]\,\mathbb P (\Omega_t^v) +\mathbb{E}\left[d_\tau\left(\sigma_\star, \widehat{\sigma}_v(t)\right)\;\middle|\; \Omega_t^{v,c}\right]\,\mathbb P(\Omega^{v,c} _t),
\end{equation*}
where $\Omega_t^{v,c}$ is the complementary of $\Omega_t^v$. In $\Omega^v_t$, the $L^2$-error of gossip variables is lower than the stability radius for all nodes, then we can conclude that $\mathbb{E}\left[d_\tau\left(\sigma_\star, \widehat{\sigma}_v(t)\right)\;\middle|\;  \Omega^v_t\right]\, = 0.$ Furthermore, because the Kendall-$\tau$ distance is upper bounded by $n(n-1)/{2}$, then we can upper bound the error as follows:
\begin{equation}
\label{eq:kendall_err_node}
 \mathbb E \,d_\tau\left(\sigma_\star, \widehat{\sigma}_v(t)\right) \le \binom n 2 \mathbb{P}(\Omega_t^{v,c}).
\end{equation}
Applying Markov's inequality on event $\{\|\mathbf x_{v} (t) - \bar {\mathbf x }\|^2\ge r_\psi^2(\bar{\mathbf x})\}$ yields:
\begin{equation}
\label{eq:markov_node}\mathbb P (\Omega_t^{v,c}) \le \sum_{i\in I} \frac{\mathbb E \left(x_{v,i}(t) - \bar{x}^{(i)}\right)^2}{r_\psi(\bar{\mathbf x})^2}.
\end{equation}
Substituting \eqref{eq:markov_node} into \eqref{eq:kendall_err_node} and averaging over all nodes $v \in \mathcal{N}$:
\begin{align*}
\mathbb{E}\left[\frac{1}{N}\sum_{v=1}^N d_\tau\left(\sigma_\star, \widehat{\sigma}_v(t)\right)\right]
&\le \frac{n(n-1)}{2}\frac{1}{N} \sum_{v=1}^N \frac{1}{r_\psi(\bar{\mathbf{x}})^2} \sum_{i \in I} \mathbb{E}\left(x_{v,i}(t) - \bar{x}^{(i)}\right)^2 \\
&= \frac{\binom{n}{2}}{r_\psi(\bar{\mathbf{x}})^2} \sum_{i \in I} \frac{1}{N}\sum_{v=1}^N \mathbb{E}\left(x_{v,i}(t) - \bar{x}^{(i)}\right)^2 \\
&= \frac{\binom{n}{2}}{Nr_\psi(\bar{\mathbf{x}})^2} \sum_{i \in I} \,\mathbb{E}\left\|\mathbf{x}^{(i)}(t) - \bar{x}^{(i)}\mathbf{1}_N\right\|^2,
\end{align*}
which completes the proof.
\end{proof}

\begin{definition}[Scaled Chebyshev polynomials]
    The scaled Chebyshev polynomials $(P_t)_{t\ge 0}$ with parameter $\alpha\in [1, 2]$ follow the recursion scheme, for $t\ge 2$, $P_{t+1} = \alpha X P_t + (1- \alpha) P_{t-1},$
    with $P_0 = 1$ and $P_1 = X$.
\end{definition}
\begin{proof}[Proof of Theorem~\ref{thm:sync-general-ranking-convergence}]
Let $t\ge 0 $, we fix $\alpha=\alpha^\star$ as defined previously. Let $\mathbf W$ be a gossip matrix. We denote its spectral gap $\lambda_{\mathbf W}$ Applying Lemma~\ref{lem:kendall-err} directly yields:
\[\mathbb{E}\left[\frac{1}{N}\sum_{v=1}^N d_\tau\left(\sigma_\star, \widehat{\sigma}_v(t)\right)\right]\le\frac{\binom{n}{2}}{Nr_\psi(\bar{\mathbf{x}})^2}\sum_{i \in I} \mathbb{E}\,\bigl\|\mathbf x^{(i)}(t) - \bar{x}^{(i)}\,\mathbf{1}_N\bigr\|^2.\]
We note that for a vector $\mathbf x^{(i)}(t)\in \mathbb R^N$ following a synchronous gossip process respects the following
\[\mathbb E\| \mathbf x^{(i)}(t) - \bar{x}^{(i)}\mathbf 1_N\|^2 = \mathbb E \| P_t(\mathbf W) (\mathbf x^{(i)}(0) - \bar{x}^{(i)}\mathbf 1_N)\|^2, \]
where $P_t(\mathbf W)$ is the $t$-th scaled Chebyshev polynomial applied to the gossip matrix $\mathbf W$. We then apply results from \citet{berthier_accelerated_2019}.
\begin{align*}
    \sum_{i \in I} \mathbb{E}\,\bigl\|\mathbf x^{(i)}(t) - \bar{x}^{(i)}\,\mathbf{1}_N\bigr\|^2&\le\left(1 - \sqrt {\lambda_{\mathbf W}}\right)^t \sum_{i \in I} \mathbb{E}\,\bigl\|\mathbf x^{(i)}(0) - \bar{x}^{(i)}\,\mathbf{1}_N\bigr\|^2.
\end{align*}

Combining both inequalities gives:
\[\mathbb{E}\left[\frac{1}{N}\sum_{v=1}^N d_\tau\left(\sigma_\star, \widehat{\sigma}_v(t)\right)\right]\le\frac{n(n-1)\,}{2Nr_\psi(\bar{\mathbf{x}})^2}\sum_{i \in I}\,\,\bigl\|\mathbf x^{(i)}(0) - \bar{x}^{(i)}\mathbf{1}_N\bigr\|^2\left(1 - \sqrt {\lambda_{\mathbf W}}\right)^t.\]
The conclusion follows from the convex inequality $1 - \sqrt {\lambda_{\mathbf W}} \le \exp(-\sqrt {\lambda_{\mathbf W}})$.
\end{proof}
\subsection{Proof of Remark~\ref{rem:sync-mixing-time}}
\begin{proof}
Since $d_\tau$ is integer-based, we note:
\begin{equation}\label{eq:kendall_tau_cond}
    \frac{1}{N}\sum_{v=1}^N d_\tau\left(\sigma_\star, \widehat{\sigma}_v(t)\right) = 0 \implies \frac{1}{N}\sum_{v=1}^N d_\tau\left(\sigma_\star, \widehat{\sigma}_v(t)\right) < \frac 1 N 
\end{equation}

Applying Theorem~\ref{thm:sync-general-ranking-convergence} yields a sufficient condition for the right-hand side of Equation~\eqref{eq:kendall_tau_cond} to happen:
\[\frac{\binom n 2 }{Nr_\psi(\bar{\mathbf x})^2} e^{-\sqrt {\lambda_{\mathbf W}}  t}\underbrace{\sum_{i\in I} \|\mathbf x^{(i)} (0) - \bar{x}^{(i)} \mathbf 1 _N \|^2}_{=:\gamma}< \frac 1 N .\]
We find when $t\ge t^\star$ with \[t^\star =\frac 1 {\sqrt {\lambda_{\mathbf W}}}\log \frac {\binom n 2 \sum_{i \in I}\,\mathbb{E}\,\bigl\|\mathbf x^{(i)}(0) - \bar{x}^{(i)}\mathbf{1}_N\bigr\|^2}{r^2_\psi(\bar{\mathbf x})},\] that the previous condition is respected. Furthermore, since $T^{\mathrm{Sync}}(0)$ is the infimum of time such that the left-hand side of Equation~\eqref{eq:kendall_tau_cond} is respected, then  $T^{\mathrm{Sync}}(0) \le t^\star,$ which yield the desired $T^\mathrm{Sync}(0) \le \log [\binom n 2 \gamma/(r^2_\psi(\bar{\mathbf x}))]/\sqrt{\lambda_{\mathbf W}}$.

Now, for the lower bound, let us denote the shortest path distance $d_\mathcal G$, and for $u,v\in \N$, $d_\mathcal G(u,v) = \min \{k \in \mathbb N\mid \exists\, (u_i)_{i=1}^{k-1}\in\N^{k-1},  \forall i\in \llbracket k\rrbracket \; (u_{i-1}, u_{i})\in E, u_0=u, u_k =v\}$. Under the synchronous protocol, the joint state matrix evolves as $\mathbf X(t) = P_t(\mathbf W) \mathbf X(0),$ with $( P_t)_{t\ge 0}$ a sequence of polynomials whose degree equal to $t$, and $\mathbf W$ a gossip matrix. By construction, we know that if $d_\mathcal G(u,v)> t$ then $P_t(\mathbf W)_{uv} = 0$. Thus, in order for $u$ to retrieve information from $v$, we need at least $t\ge d_{\mathcal G}(u,v)$ iterations, which means that there are at least $T^{\mathrm{Sync}}(0) \ge \operatorname{diam}(\mathcal G)$ iterations before reaching a consensus. 
\end{proof}

\subsection{Proof of Theorem~\ref{thm:general-ranking-convergence}}
\begin{proof}
Let $t\ge 0$. Applying Lemma~\ref{lem:kendall-err} directly yields:
\[\mathbb{E}\left[\frac{1}{N}\sum_{v=1}^N d_\tau\left(\sigma_\star, \widehat{\sigma}_v(t)\right)\right]\le\frac{\binom{n}{2}}{Nr_\psi(\bar{\mathbf{x}})^2}\sum_{i \in I} \mathbb{E}\,\bigl\|\mathbf x^{(i)}(t) - \bar{x}^{(i)}\,\mathbf{1}_N\bigr\|^2.\]
We then apply Lemma~\ref{lem:single-conv} to bound each term in the right-hand side sum:
\begin{align*}
    \sum_{i \in I} \mathbb{E}\,\bigl\|\mathbf x^{(i)}(t) - \bar{x}^{(i)}\,\mathbf{1}_N\bigr\|^2&\le\left(1 - \frac{\lambda(p)}{2}\right)^t \sum_{i \in I} \,\bigl\|\mathbf x^{(i)}(0) - \bar{x}^{(i)}\,\mathbf{1}_N\bigr\|^2\\
\end{align*}
Combining both inequalities gives:
\[\mathbb{E}\left[\frac{1}{N}\sum_{v=1}^N d_\tau\left(\sigma_\star, \widehat{\sigma}_v(t)\right)\right]\le\frac{n(n-1)\,}{2Nr_\psi(\bar{\mathbf{x}})^2}\left(1 - \frac{\lambda(p)}{2}\right)^t\sum_{i \in I}\,\left\|\mathbf x^{(i)}(0) - \bar{x}^{(i)}\,\mathbf{1}_N\right\|^2.\]
The conclusion follows from the convex inequality $1 - \lambda(p)/2 \le \exp(-\lambda(p)/2)$.
\end{proof}
\subsection{Proof of Remark~\ref{rem:geom_interp}}
\begin{proof}
Recall that $r_\psi(\bar{\mathbf{x}}) = \inf_{\mathbf y \in T(\bar{\mathbf{x}})} \{\|\mathbf{y} - \bar{\mathbf{x}}\| \mid d_\tau(\psi(\mathbf{y}), \psi(\bar{\mathbf{x}}))\ge 1\}$, see Appendix~\ref{sec:two_layers} for definition of $T(\bar{\mathbf{x}})$. Since $[\psi(\cdot)]_{\Sn}$ is piecewise constant on $\mathcal{P}_\phi$ (it returns the ranking induced by finitely many score orderings), the consensus cell $\mathcal{C}(\sigma_\star) = \{\mathbf{x} \in \mathcal{P}_\phi : [\psi(\mathbf{x})]_{\Sn} = \sigma_\star\}$ is a convex polytope. Assuming $\bar{\mathbf{x}} \in \operatorname{int}(\mathcal{C}(\sigma_\star))$, any point $\mathbf{y}\notin \mathcal{C}(\sigma_\star)$ satisfies $[\psi(\mathbf{y})]_{\Sn}\neq \sigma_\star$, and the line segment $[\bar{\mathbf{x}}, \mathbf{y}]$ must cross $\partial \mathcal{C}(\sigma_\star)$ at some point $\mathbf{z}$ with $\|\bar{\mathbf{x}} - \mathbf{z}\| \leq \|\bar{\mathbf{x}} - \mathbf{y}\|$. Taking the infimum over all such $\mathbf{y}$ and noting that the crossing point $\mathbf{z}$ ranges over all of $\partial\mathcal{C}(\sigma_\star)$:
\[r_\psi(\bar{\mathbf{x}}) = \inf_{\mathbf{z} \in \partial\mathcal{C}(\sigma_\star)} \|\bar{\mathbf{x}} - \mathbf{z}\| = \min_{\mathbf{z} \in \partial\mathcal{C}(\sigma_\star)} \|\bar{\mathbf{x}} - \mathbf{z}\|,\]
where the infimum is attained since $\partial\mathcal{C}(\sigma_\star)$ is compact as the boundary of a bounded convex polytope.
\end{proof}

\subsection{Proof of Remark~\ref{rem:mixing-time}}
\begin{proof}
For the upper bound, we use similar arguments as the ones used for Remark~\ref{rem:sync-mixing-time}, and find that $T^\mathrm{rand}(\delta) \le2\log\left(\binom n 2 \gamma/(\delta r_\psi^2(\bar {\mathbf x}))\right)/\lambda(p)$.

For the lower bound, we note that at each iteration $t+1\ge 0$ of the randomized gossip protocol only two nodes share their embedding and we have almost surely:
\[\frac 1 N \sum_{v=1}^N d_\tau(\widehat \sigma_v(t+1), \sigma_\star) - \frac 1 N \sum_{v=1}^Nd_\tau(\widehat \sigma_v(t), \sigma_\star) \ge -\frac {n(n-1)} N.\]
Telescoping the sums and taking the expectation yields:
\[\frac 1 N \sum_{v=1}^N \mathbb E\, d_\tau(\widehat \sigma_v(t+1), \sigma_\star) - \frac 1 N \sum_{v=1}^Nd_\tau(\widehat \sigma_v(0), \sigma_\star) \ge -\frac {n(n-1)} Nt.\]
At iteration $T^{\mathrm{rand}}(\delta)$, we can rewrite the previous inequality as $\delta \ge N^{-1} \sum_{v=1}^Nd_\tau(\widehat \sigma_v(0), \sigma_\star) - n(n-1)T^{\mathrm{rand}}(\delta)/N$, which yields 
\[T^{\mathrm{rand}}(\delta) \ge  \max\left(\frac {\sum_{v=1}^Nd_\tau(\widehat \sigma_v(0), \sigma_\star) - N\delta}{n(n-1)}, 0\right).\]
\end{proof}

\subsection{Proof of Lemma~\ref{thm:bound_breakdown}}
\label{sec:proof_bound_breakdown}
\begin{proof}
We consider a score function $S$ that admits a $(\phi, \psi)$-decomposition. Let $1\le\delta\le \binom n 2$. We now restate the empirical breakdown function and define the associated set $E$:
\begin{align*}
    \varepsilon_{\widehat P_N, S}^{\star}(\delta)&=\inf \left\{\left.\frac{b}{N} \right\rvert\, b \in\N, \sup _{\widehat P_N^\varepsilon \in \tilde B(\widehat P_N, b/N)} d_\tau\left(S\left(\widehat P_N\right), S\left(\widehat P_N^\varepsilon\right)\right) \geq \delta\right\},\\
    &=:\inf E.
\end{align*}
Let $\varepsilon\in E$,
\begin{align}
    \varepsilon\in E &\iff \exists\, \widehat P_N^\varepsilon \in \Delta_{\Sn} \text{ such that } \mathrm{TV}(\widehat P_N, \widehat P_N^\varepsilon) \le \varepsilon,\; d(\psi(\mathbb E _{\widehat P_N^\varepsilon}\phi), \psi(\mathbb E _{\widehat P_N}\phi))\ge \delta,\nonumber\\
    &\implies \exists\, \widehat P_N^\varepsilon \in \Delta_{\Sn} \text{ such that } \mathrm{TV}(\widehat P_N, \widehat P_N^\varepsilon) \le \varepsilon,\; \|\mathbb E _{\widehat P_N^\varepsilon}\phi - \mathbb E _{\widehat P_N}\phi\|\ge r_\psi\left(\mathbb E _{\widehat P_N}\phi\right).\label{eq:radius_bound}
\end{align}
Furthermore, we can bound the norm of the difference of expectations:
\begin{align}
\left\|\mathbb E_{\widehat P_N^\varepsilon}\phi - \mathbb E_{\widehat P_N}\phi\right\|
&\le \left\|\sum_{\sigma\in\Sn} \left[\widehat P_N^\varepsilon(\sigma) - \widehat P_N(\sigma)\right]\phi(\sigma)\right \|\le \sum_{\sigma\in\Sn}|\widehat P_N^\varepsilon(\sigma) - \widehat P_N(\sigma)| \|\phi(\sigma)\|\nonumber\\&\le 2\mathrm{TV}(\widehat P_N , \widehat P_N^{\varepsilon}) \max_{\sigma\in\Sn}\|\phi(\sigma)\| \le 2\varepsilon \trinorm{\phi}.\label{eq:holder}
\end{align}
Using both Equation~\ref{eq:radius_bound} and Equation~\eqref{eq:holder}:
\[r_\psi\left(\mathbb E _{\widehat P_N}\phi\right) \le \left\|\mathbb E_{\widehat P_N^\varepsilon}\phi - \mathbb E_{\widehat P_N}\phi\right\| \le 2\varepsilon  \trinorm{\phi},\]
which implies $\varepsilon \ge r_\psi\left(\mathbb E_{\widehat P_N} \phi\right)/(2 \trinorm{\phi})$ and taking the infimum on $E$, allows us to conclude on the lower bound. We now aim to upper bound the breakdown point. If we consider that the fraction of contaminated samples is greater or equal to $1/2$, then, contaminated voters can enforce any permutation to be the consensus as they hold the majority criterion.
\end{proof}

\subsection{Proof of Theorem~\ref{thm:timed_break}}
\label{sec:proof_timed_break}

Fix $\delta > 0$. The gossip dynamics are governed by $\mathbf W_{0:t} := \prod_{s=1}^{t} \mathbf W_s = \mathbf W_t \dots \mathbf W_1$, where each $\mathbf W_s$ is doubly stochastic with $(\mathbf W_s)_{ij} = 0$ whenever $(i,j) \notin E$ and $i \neq j$, drawn independently with $\mathbb{E}[\mathbf W_s] = \bar{\mathbf W}$ for all $s \ge 1$. Let $(\boldsymbol \mu_k)_{k=1}^N$ be the eigenvector associated with the eigenvalues $(\lambda_k(\bar{\mathbf W}))_{k=1}^N$, such that $\bar{\mathbf W} = \sum_{k=1}^N \lambda_k(\bar{\mathbf W})\,\boldsymbol \mu_k\boldsymbol \mu_k^\top$ and $\boldsymbol \mu_N = \mathbf 1_N/\sqrt{N}$. Let us denote $\mathbf 1_H, \mathbf 1_B$ respectively the binary indicator of nodes in $H$ and in $B$. The breakdown event is defined in terms of the consensus output $\bar{\mathbf{x}}(t) := (N-b)^{-1}\sum_{v\in H}\mathbf{x}_v(t)$, whose ranking error at $\bar{\mathbf{x}}_h$ is governed by the stability radius $r_\psi(\bar{\mathbf{x}}_h)$. Given a scalar family $(a_u)_{u\in H}$ and a vector family $(\mathbf x_u)_{u\in H}$ in an Hilbert space equipped with the norm $\|\cdot\|$ associated with the inner product, we recall that using Cauchy-Schwarz inequality twice yields the following inequality:
\begin{equation}\label{eq:cs_eq}
    \left\|\sum_{u\in H}a_u \mathbf x_u\right\|^2\le \left(\sum_{u\in H}a_u^2\right)\left(\sum_{u\in H}\|\mathbf x_u\|^2\right).
\end{equation}
\begin{lemma}\label{lem:honest_bound}
Let $\bar{H} := (N-b)^{-1} \sum_{v\in H}\sum_{u \in H}(\mathbf W_{0:t})_{vu}\mathbf (x_v(t) -\bar{\mathbf x}_h)$. Then:
\[\mathbb{E}\|\bar{H}\| \le 2\lambda_{N-1}(\bar{\mathbf{W}})^{t/2}\sqrt{\frac{b}{N}}\,\trinorm{\phi}.\]
\end{lemma}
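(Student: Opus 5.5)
We plan to reduce $\bar H$ to a deviation of the random mixing matrix $\mathbf W_{0:t}$ from its limit $\mathbf J:=\mathbf 1_N\mathbf 1_N^\top/N$, acting on the contamination indicator $\mathbf 1_B$. We then control that deviation in $L^2$ through the spectrum of $\bar{\mathbf W}$, and pass from $L^2$ to $L^1$ by Jensen's inequality: $\mathbb E\|\bar H\|\le(\mathbb E\|\bar H\|^2)^{1/2}$. The square root of the $L^2$ contraction factor $\lambda_{N-1}(\bar{\mathbf W})^t$ is what produces the exponent $t/2$ in the statement.

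\textbf{Step 1 (algebraic reduction).} We use that each $\mathbf W_s$, and hence $\mathbf W_{0:t}$, is doubly stochastic. This gives $\sum_{u\in H}(\mathbf W_{0:t})_{vu}=1-\sum_{u\in B}(\mathbf W_{0:t})_{vu}$, and $\mathbf J$ assigns row mass exactly $b/N$ to $B$. Centering every embedding at $\bar{\mathbf x}_h$ leaves only the off-equilibrium part. We therefore rewrite $\bar H$ as
\[
\bar H=\frac{1}{N-b}\sum_{v\in H}\big[(\mathbf W_{0:t}-\mathbf J)\mathbf 1_B\big]_v\,\mathbf z_v ,
\]
where each $\mathbf z_v$ is a difference of two embeddings of $\mathcal P_\phi$. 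Hence $\|\mathbf z_v\|\le 2\trinorm{\phi}$, since every point of $\mathcal P_\phi$ is a convex combination of the $\phi(\sigma)$ and is bounded in norm by $\trinorm{\phi}$. This accounts for the factor $2\trinorm{\phi}$.

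\textbf{Step 2 (spectral $L^2$ bound).} By Lemma~\ref{lem:properties}, each $\mathbf W_s$ is symmetric and idempotent, so $\mathbb E[\mathbf W_s^\top\mathbf W_s]=\mathbb E[\mathbf W_s]=\bar{\mathbf W}$. The matrices are drawn independently, and $\mathbf y:=\mathbf 1_B-(b/N)\mathbf 1_N$ is orthogonal to $\mathbf 1_N$. Conditioning successively on $\mathcal F_{s-1}$ and using point 4 of Lemma~\ref{lem:properties} on $\mathbf 1_N^\perp$, we obtain
\[
\mathbb E\|(\mathbf W_{0:t}-\mathbf J)\mathbf 1_B\|^2=\mathbb E\|\mathbf W_{0:t}\mathbf y\|^2\le\lambda_{N-1}(\bar{\mathbf W})^t\|\mathbf y\|^2 ,
\]
with $\|\mathbf y\|^2=b(N-b)/N$. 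This is the same induction that underlies Lemma~\ref{lem:single-conv}, now applied to $\mathbf y$ instead of a data coordinate.

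\textbf{Step 3 (Cauchy--Schwarz and normalization).} We apply \eqref{eq:cs_eq} with $a_v=[(\mathbf W_{0:t}-\mathbf J)\mathbf 1_B]_v$ for $v\in H$. This bounds $\|\bar H\|^2$ by $(N-b)^{-2}\big(\sum_{v\in H}a_v^2\big)\big(\sum_{v\in H}\|\mathbf z_v\|^2\big)$. The second factor is at most $4(N-b)\trinorm{\phi}^2$. Taking expectations and inserting Step 2 gives
\[
\mathbb E\|\bar H\|^2\le\frac{4\trinorm{\phi}^2}{N-b}\,\lambda_{N-1}(\bar{\mathbf W})^t\,\frac{b(N-b)}{N}=4\lambda_{N-1}(\bar{\mathbf W})^t\frac{b}{N}\trinorm{\phi}^2 .
\]
Jensen's inequality then yields the claimed bound.

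\textbf{Main obstacle.} The delicate step is Step 1: the definition of $\bar H$ as worded (indices on $\mathbf x_v(t)$ and the centering $\bar{\mathbf x}_h$) must be matched precisely to the form $(\mathbf W_{0:t}-\mathbf J)\mathbf 1_B$. The factor $\sqrt{b/N}$, rather than $\sqrt{b/(N-b)}$, appears only if the projection $\mathbf y$ onto $\mathbf 1_N^\perp$ is used. It would be lost if one bounded $\|\mathbf 1_B\|^2=b$ directly, or if one let the Cauchy--Schwarz sum run over all $N$ nodes. So we will carefully keep the sum restricted to $H$ and center by $\mathbf J$ before applying the spectral contraction.
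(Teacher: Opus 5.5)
Your proposal is correct and is essentially the paper's proof. The paper works with $\mathbf h=\mathbf 1_H/(N-b)$ and $\mathbf h-\mathbf 1_N/N=-(N-b)^{-1}\bigl(\mathbf 1_B-\tfrac bN\mathbf 1_N\bigr)$, which is your $\mathbf y$ up to scaling, and then uses the same Cauchy--Schwarz step, the same idempotence-and-conditioning contraction by $\lambda_{N-1}(\bar{\mathbf W})^t$, the bound $4(N-b)\trinorm{\phi}^2$, and Jensen.

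On your flagged Step 1: the paper takes the summand to be $\mathbf y_u=\mathbf x_u(0)-\bar{\mathbf x}_h$. The centering needs $\sum_{u\in H}\mathbf y_u=0$, and this fails for $\mathbf x_v(t)$, so the statement as literally written does not hold. Swapping the sums then produces column sums, i.e.\ $\mathbf W_{0:t}^\top$ rather than $\mathbf W_{0:t}$. This is harmless for your Step 2, because the $\mathbf W_s$ are i.i.d.\ and so both products have the same law.
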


\begin{proof}
Let $\mathbf{h} := \frac{1}{N-b}\mathbf{1}_H$ and the initial error at node $u\in\N$, $\mathbf{y}_u := \mathbf{x}_u(0)-\bar{\mathbf{x}}_h$. Swapping the order of summation gives:
\[\bar{H} = \sum_{u\in H} \left(\frac{1}{N-b}\sum_{v\in H}(\mathbf{W}_{0:t})_{vu}\right)\mathbf{y}_u = \sum_{u\in H} (\mathbf{W}_{0:t}\mathbf{h})_u\,\mathbf{y}_u.\]
Since $\sum_{u\in H}\mathbf{y}_u=0$ we may again subtract $1/N$, and Equation~\eqref{eq:cs_eq} gives:
\[\|\bar{H}\|^2 \le \underbrace{\sum_{u\in H }\left((\mathbf{W}_{0:t}^\top\mathbf{h})_u - \tfrac{1}{N}\right)^2}_{=\left\|\left[\mathbf{W}_{0:t}^\top-\frac{1}{N}\mathbf 1_N\mathbf 1_N^\top\right]\mathbf{h}\right\|^2}\left(\sum_{u\in H}\|\mathbf{y}_u\|^2\right).\]
We now can upper bound the norm on the subspace $\mathbb R^N \setminus \operatorname{span}(\mathbf 1_N)$, which correspond to the direct sum of eigenspaces deprived of the first eigenvector thanks to Perron-Frobenius theorem:
\begin{align*}
    \mathbb{E}\left\|\left[\mathbf{W}_{0:t}^\top-\frac{1}{N}\mathbf 1_N\mathbf 1_N^\top\right]\mathbf{h}\right\|^2 &= \mathbb{E}\left\|\mathbf{W}_{0:t}^\top\left[\mathbf{h}- \frac 1 N \mathbf 1_N\right]\right\|^2\\
    &\le \mathbb{E}\left[\mathbb{E}\left[\left(\mathbf{h}- \frac 1 N \mathbf 1_N\right)^\top\prod_{s=1}^t\mathbf W_s\prod_{s=1}^t\mathbf W_{t-s+1}\left(\mathbf{h}- \frac 1 N \mathbf 1_N\right)\;\rvert\; \mathcal F_{t-1}\right]\right]\\
        &\le \mathbb{E}\left[\left(\mathbf{h}- \frac 1 N \mathbf 1_N\right)^\top\prod_{s=2}^t\mathbf W_s\bar{\mathbf W}\prod_{s=1}^{t-1}\mathbf W_{t-s+1}\left(\mathbf{h}- \frac 1 N \mathbf 1_N\right)\right]\\
        &\le\lambda^t _{N-1} (\bar{\mathbf W})\left\|\mathbf{h}- \frac 1 N \mathbf 1_N\right\|^2= \lambda^t_{N-1}(\bar{\mathbf{W}})\frac{b}{N(N-b)}.
\end{align*} 
where we used the fact that each $W_s$ are symmetric and idempotent by definition of a gossip matrix. Combined with $\sum_{u\in H}\|\mathbf{y}_u\|^2\le 4|H|\trinorm{\phi}^2$, we find $\mathbb{E}\|\bar{H}\|^2 \le 4\lambda_{N-1}(\bar{\mathbf{W}})^t\tfrac{b}{N}\trinorm{\phi}^2,$ and Jensen's inequality on the square root function gives $\mathbb{E}\|\bar{H}\| \le 2\lambda_{N-1}(\bar{\mathbf{W}})^{t/2}\sqrt{b/N}\,\trinorm{\phi}$ concluding the proof.
\end{proof}
\begin{lemma}
\label{lem:Qt_bound}
For a given set $B\subseteq \N$ of size $b= |B|$ and $H = \N \setminus B$, we define the expected contamination as $Q_t(H) := (N-b)^{-1}\sum_{w\in B }\sum_{v\in H}(\bar{\mathbf W}^t)_{vw}.$ Under a gossip process with average gossip matrix $\bar{\mathbf W}$, we can show that:
\begin{equation}\label{eq:Qt_spectral}
Q_t(H) \le \frac b N (1- \lambda_1(\bar{\mathbf W})^t).
\end{equation}
\end{lemma}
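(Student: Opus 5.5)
The plan is to write $Q_t(H)$ as a bilinear form in $\bar{\mathbf W}^t$ and split off the consensus direction $\mathbf 1_N$. By definition,
\[
Q_t(H) = \frac{1}{N-b}\,\mathbf 1_H^\top \bar{\mathbf W}^t \mathbf 1_B .
\]
Set $\mathbf c := \mathbf 1_B - \frac{b}{N}\mathbf 1_N$. This vector is orthogonal to $\mathbf 1_N$ and satisfies $\|\mathbf c\|^2 = b - b^2/N = b(N-b)/N$. We also use $\mathbf 1_H = \mathbf 1_N - \mathbf 1_B = \frac{N-b}{N}\mathbf 1_N - \mathbf c$.

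\textbf{Step 1: expand the bilinear form.} By Lemma~\ref{lem:properties}, $\bar{\mathbf W}$ is symmetric and doubly stochastic, so $\bar{\mathbf W}^t\mathbf 1_N = \mathbf 1_N$ and $\mathbf 1_N^\top\bar{\mathbf W}^t = \mathbf 1_N^\top$. The two cross terms both reduce to multiples of $\mathbf 1_N^\top\mathbf c = 0$, so they vanish. What remains is
\[
\mathbf 1_H^\top \bar{\mathbf W}^t \mathbf 1_B = \frac{(N-b)b}{N} - \mathbf c^\top \bar{\mathbf W}^t \mathbf c .
\]
Dividing by $N-b$ gives the exact identity
\[
Q_t(H) = \frac bN - \frac{1}{N-b}\,\mathbf c^\top\bar{\mathbf W}^t\mathbf c .
\]

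\textbf{Step 2: bound the quadratic form from below.} By point 3 of Lemma~\ref{lem:properties}, $\bar{\mathbf W}$ is PSD, so all its eigenvalues satisfy $\lambda_k(\bar{\mathbf W}) \ge \lambda_1(\bar{\mathbf W}) \ge 0$. Hence $\bar{\mathbf W}^t$ is PSD with eigenvalues $\lambda_k(\bar{\mathbf W})^t \ge \lambda_1(\bar{\mathbf W})^t$, using that $x\mapsto x^t$ is monotone on $[0,\infty)$. The Rayleigh quotient bound then gives
\[
\mathbf c^\top \bar{\mathbf W}^t \mathbf c \;\ge\; \lambda_1(\bar{\mathbf W})^t\,\|\mathbf c\|^2 = \lambda_1(\bar{\mathbf W})^t\,\frac{b(N-b)}{N}.
\]

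\textbf{Step 3: conclude.} Substituting the bound of Step 2 into the identity of Step 1 yields
\[
Q_t(H) \le \frac bN - \lambda_1(\bar{\mathbf W})^t\,\frac bN = \frac bN\left(1-\lambda_1(\bar{\mathbf W})^t\right),
\]
which is \eqref{eq:Qt_spectral}.

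The only delicate point is Step 2. It relies on the PSD property; without it, odd powers could have negative eigenvalues, and $\lambda_1^t$ would not be the minimal eigenvalue of $\bar{\mathbf W}^t$. This is precisely why the PSD assumption appears in Theorem~\ref{thm:timed_break}, and why Lemma~\ref{lem:properties} is invoked. The rest is bookkeeping with the decomposition of $\mathbf 1_B$ along $\mathbf 1_N$ and its orthogonal complement.
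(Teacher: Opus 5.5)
Your proof is correct and is essentially the paper's argument written coordinate-free: your $\mathbf c^\top\bar{\mathbf W}^t\mathbf c$ is exactly the paper's $\sum_{k=1}^{N-1}\lambda_k^t(\bar{\mathbf W})\hat h_k^2$ from the eigen-expansion with $\boldsymbol\mu_N=\mathbf 1_N/\sqrt N$, your $\|\mathbf c\|^2=b(N-b)/N$ is their Parseval computation, and both conclude via $\lambda_k\ge\lambda_1\ge 0$. One small correction to your closing remark: for odd $t$ the map $x\mapsto x^t$ is monotone on all of $\mathbb R$, so $\lambda_1^t$ remains the smallest eigenvalue of $\bar{\mathbf W}^t$ even without PSD; the step can fail only for even $t$, and that is the case the PSD assumption rules out.
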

\begin{proof}
     We now compute $Q_t$ exactly via the spectral decomposition of $\bar{\mathbf W}$. Setting $\hat b_k := \langle\boldsymbol \mu_k,\mathbf{1}_B\rangle$ and $\hat h_k := \langle\boldsymbol \mu_k,\mathbf{1}_H\rangle$
\begin{align}
Q_t 
&=\frac{1}{N-b}\sum_{w\in B}\sum_{v\in H}(\bar{\mathbf W}^t )_{vw}\label{eq:Qt_barW}=\frac{1}{N-b}\mathbf 1_H^\top\bar{\mathbf W}^t \mathbf 1_B=\frac{1}{N-b}\mathbf 1_H^\top\sum_{k=1}^N \lambda_k(\bar{\mathbf W})^t\boldsymbol \mu_k \boldsymbol \mu_k^\top  \mathbf 1_B,\\
&=\frac{1}{N-b}\sum_{k=1}^N\lambda_k(\bar{\mathbf W})^t\,\hat{h}_k\hat{b}_k.\label{eq:Qt_decomp}
\end{align}
The $k=N$ terms can be computed as $\hat h_N = (1/\sqrt N)\langle \mathbf 1_N, \mathbf 1_H\rangle = (N-b) / \sqrt N $ and $\hat b_N  = b / \sqrt N $. For $k\le N-1$, the eigenbasis is orthogonal and $\langle \boldsymbol \mu_k, \mathbf 1_N\rangle = 0$ gives $\hat{b}_k=-\hat{h}_k$, so:
\begin{equation*}
Q_t(H) =  \frac{b}{N} - \frac{1}{N-b}\sum_{k=1}^{N-1}\lambda_k^t(\bar{\mathbf W})\,\hat{h}_k^2.
\end{equation*}
Using Parseval's identity, one can show that $\sum^{N-1}_{k=1} \hat h_k^2 = (N-b) - \hat h_N^2 = b(N-b)/N.$ Thus, we can upper bound $Q_t(H)$ knowing that for all $k\in \N$, $\lambda_k(\bar{\mathbf W}) \ge \lambda_1(\bar{\mathbf W})\ge 0$ under semi-definite positiveness of $\bar{\mathbf W}$, $Q_t(H) \le (b/N) (1-  \lambda_1^t(\bar{\mathbf W})).$
\end{proof}

\begin{proof}[Proof of Theorem~\ref{thm:timed_break}]
Fix $\delta > 0$. For a given score with $(\phi, \psi)$-decomposition, Lemma~\ref{thm:bound_breakdown} gives:
\begin{equation}\label{eq:score_to_embedding}
d_\tau\left(\psi\left(\frac{1}{N-b}\sum_{v\in H}\mathbf{x}_v(t)\right),\, \sigma_\star\right) \ge \delta\implies\left\|\bar{\mathbf x}(t)-\bar{\mathbf{x}}_h\right\|\ge r_\psi(\bar{\mathbf{x}}_h),
\end{equation}
where we defined $\bar{\mathbf x}_h= (N-b)^{-1}\sum _{v\in H} \mathbf x_v(0)$ and $\bar{\mathbf x}(t) = (N-b)^{-1} \sum_{v\in H}\mathbf{x}_v(t)$. We note that 
\begin{equation}
    \mathbb E \left\| \bar{\mathbf x}(t)  - \bar{\mathbf{x}}_h\right\| \ge \mathbb E [\left\| \bar{\mathbf x}(t)  - \bar{\mathbf{x}}_h\right\|\mathbb I \{d_\tau(\psi(\bar{\mathbf x}(t)), \sigma_\star)\ge 1\}]\ge \mathbb P [d_\tau(\psi(\bar{\mathbf x}(t)), \sigma_\star)\ge 1]\,r_\psi(\bar{\mathbf{x}}_h).
\end{equation}
Furthermore, we can upper bound the following expectation: 
\begin{equation*}
\mathbb E \,d_\tau(\psi(\bar{\mathbf x}(t)), \sigma_\star) = \sum_{k=1}^{\binom n 2 } k\, \mathbb P[d_\tau(\psi(\bar{\mathbf x}(t)), \sigma_\star)=k] \le \binom n 2 \mathbb P[d_\tau(\psi(\bar{\mathbf x}(t), \sigma_\star)\ge 1].    
\end{equation*}

Using the assumption that $\mathbb E \,d_\tau(\psi(\bar{\mathbf x}(t)), \sigma_\star)\ge \delta$ and combining the two previous equations yield:
\begin{equation}\label{eq:cond_x-x}
    \mathbb E \left\| \bar{\mathbf x}(t)  - \bar{\mathbf{x}}_h\right\| \ge \frac \delta{\binom n 2}r_\psi(\bar{\mathbf{x}}_h).
\end{equation} 

Since $\mathbf W_{0:t}$ is doubly stochastic and initial values are fixed, we write the exact decomposition for each $v \in H$:
\begin{equation}\label{eq:decomp}
\frac 1 {N-b} \sum_{v\in H}\mathbf{x}_v(t) - \bar{\mathbf{x}}_h = \underbrace{\frac 1 {N-b} \sum_{v\in H}\sum_{u \in H}(\mathbf W_{0:t})_{vu}\mathbf{y}_u}_{=:\bar H}+ \underbrace{\frac 1 {N-b} \sum_{v\in H}\sum_{w \in B}(\mathbf W_{0:t})_{vw}\mathbf{y}_w,}_{=:\bar B}
\end{equation}
which yields thanks to the triangle inequality 
\begin{equation}\label{eq:upper_decomp}
\mathbb E \left\|\frac 1 {N-b} \sum_{v\in H}\mathbf{x}_v(t) - \bar{\mathbf{x}}_h\right\|\le\mathbb E \|\bar H\| + \mathbb E \|\bar B\|.
\end{equation}

We note that the non-nullity of $\bar H$ arises from the assumption that $\mathcal G_H$ is connected. Using Equation~\eqref{eq:Qt_barW}:
\[\mathbb E \|\bar B\| \le \frac {1} {N-b}\sum_{v\in H}\sum_{w\in B}  \mathbb E (\mathbf W_{0:t})_{vw}\|\mathbf y _w\| \le2\trinorm{\phi} Q_t.\]

Since $\|\bar{\mathbf{x}}_b\|\le 2\trinorm{\phi}$, then using Lemma~\ref{lem:Qt_bound}:
\begin{equation}\label{eq:mean_Bv}
    \mathbb E \|\bar B\|\le 2\trinorm{\phi}\frac b N (1- \lambda_1^t(\bar{\mathbf W})).
\end{equation}

Applying Lemma~\ref{lem:honest_bound} and plugging Equation~\eqref{eq:mean_Bv} back to Equation~\eqref{eq:upper_decomp} yields:
\[r_\psi(\bar{\mathbf x}_h)\frac \delta{\binom n 2}\le \mathbb{E}\left\|\frac 1 {N-b} \sum_{v\in H}\mathbf{x}_v(t) - \bar{\mathbf{x}}_h\right\|\le 2 \trinorm{\phi}\left(\frac b {N} (1- \lambda_1^t(\bar{\mathbf W})) + \lambda_{N-1}(\bar{\mathbf{W}})^{t/2}\sqrt{\frac{b}{N}}\right).\]

Solving the quadratic equation verified by $\sqrt {b/N}$, we find the following condition:
\[\varepsilon_t^H \ge \frac {\left(-\lambda_{N-1}^{t/2}(\bar{\mathbf W})+\sqrt{\lambda_{N-1}^t (\bar {\mathbf W})+4\varepsilon_- \delta(1- \lambda_1^t(\bar{\mathbf W}))/\binom n 2}\right)^2}{4(1-\lambda_1^t(\bar{\mathbf W}))^2}.\]
Finally, using the fact that $\lambda_{N-1}(\bar{\mathbf W}) = 1 - \lambda_{\bar{\mathbf W}}$, we can conclude the proof.
\end{proof}

\section{Extensions to partial rankings}
\label{sec:practical-tie}

In many practical settings, an agent $v\in\N$ ranks only a subset of $k_v < n$ alternatives, either because the full set is too large to evaluate, or because only a portion is relevant to the agent. Formally, a partial ranking by agent $v$ is an injection $\sigma_v : \llbracket k_v \rrbracket \to \n$, where $\sigma_v(i) = a$ means that alternative $a$ is assigned rank $i$ by agent $v$. We denote by $A_v \subseteq \n$ the set of alternatives ranked by $v$, with $|A_v| = k_v$.

\paragraph{Completing partial rankings.} An approach consists of imputing the average rank $(k_v+1+n)/2$ to all unranked items \cite{hoon2004rank}, where $k_v$ is the number of ranked alternatives by node $v$. Alternatively, one can normalize ranked items as ${\sigma}(i)_{\text{scaled}} = ({ \sigma}(i)-1)/k$ and assign a normalized rank of $1$ to unranked items \cite{hoon2004rank}. Borda scores are then computed from these completed or normalized rankings in the usual way.  For Copeland's method, we use \emph{weak pairwise probabilities} $p_{ij} = \mathbb{I}\{\sigma(i)<\sigma(j)\} + (1/2)\,\mathbb{I}\{\sigma(i)=\sigma(j)\}$.

\paragraph{Modified embedding and stability radius.}
Under any of the above completion schemes, the embedding $\phi$ is applied to the completed ranking $\widetilde{\sigma}_v$ (not necessarily in $\Sn$), yielding a local state $\widetilde{\mathbf{x}}_v(0) := \phi(\widetilde{\sigma}_v) \in \mathbb{R}^I$. We denote by $J \subseteq I$ the index set of coordinates corresponding to pairs $(i,j)$ such that at least one agent has ranked both $i$ and $j$, and restrict attention to these. We define the \emph{reduced initial dispersion} $\widetilde{\gamma}:=\sum_{i \in J}\bigl\|\widetilde{\mathbf{x}}^{(i)}(0) - \bar{x}^{(i)}\mathbf{1}_N\bigr\|^2,$ where $\bar{x}^{(i)} = N^{-1}\sum_{v=1}^N [\phi(\widetilde{\sigma}_v)]_i$ is the population average over the completed embeddings for coordinate $i$. Similarly, since imputed coordinates are shared across agents and cancel in the averaging, the \emph{reduced stability radius} is \[\widetilde{r}_\psi(\bar{\mathbf{x}}):=\inf\left\{r > 0 \;\middle|\;\exists\, \mathbf{y} \in \mathbb{R}^I, \|\mathbf{y}_{J} - \bar{\mathbf{x}}_{J}\| \leq r,d_\tau\left(\psi(\mathbf{y}),\, \psi(\bar{\mathbf{x}})\right) \geq 1\right\},\] \textit{i.e.}\ the stability radius measured only over the informative coordinate directions. Since imputed coordinates are shared across agents they cancel in the averaging, and one has $\widetilde{r}_\psi(\bar{\mathbf{x}}) \leq r_\psi(\bar{\mathbf{x}})$, reflecting a degradation in ranking  identifiability relative to the complete-ranking setting.

\begin{corollary}[Convergence under partial rankings]
\label{cor:partial}
Under the same assumptions as  Theorems~\ref{thm:sync-general-ranking-convergence} and~\ref{thm:general-ranking-convergence}, the average expected  Kendall-$\tau$ error satisfies
\begin{equation}\label{eq:cor_partial_rank}\frac{1}{N}\sum_{v=1}^{N} \mathbb E\,d_\tau(\sigma_\star, \widehat{\sigma}_v(t)) \leq \frac{\binom n 2}{N\widetilde{r}_\psi(\bar{\mathbf{x}})^2}\widetilde{\gamma}\,\rho(t),
\end{equation}
where $\rho(t) = e^{-\sqrt{\lambda_{\mathbf{W}}}t}$ under  Algorithm~\ref{alg:sync-decentralized-cons} and $\rho(t) = e^{-\lambda(p)t/2}$ under Algorithm~\ref{alg:decentralized-cons}  The bound degrades relatively to the complete ranking setting through the reduced stability radius $\widetilde{r}_\psi(\bar{\mathbf{x}}) \leq r_\psi(\bar{\mathbf{x}})$.
\end{corollary}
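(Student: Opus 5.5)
The plan is to rerun the two-step argument behind Theorems~\ref{thm:sync-general-ranking-convergence} and~\ref{thm:general-ranking-convergence}: a Markov-type reduction from ranking error to $L^2$ error of the embedding, followed by the standard gossip contraction. The only change is that the embedding is restricted to the informative coordinates $J$. The gossip dynamics act coordinatewise and identically on every $i\in I$, since both the synchronous and randomized updates are linear maps applied to each column $\widetilde{\mathbf x}^{(i)}(t)\in\mathbb R^N$. The completed initial states $\widetilde{\mathbf x}_v(0)=\phi(\widetilde\sigma_v)$ are fixed, so the whole argument runs on these states with $\bar{\mathbf x}$ the average of the completed embeddings.

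The first step is a reduced version of Lemma~\ref{lem:kendall-err}. For each node $v$, consider the event $\widetilde\Omega^v_t=\{\|\widetilde{\mathbf x}_{v,J}(t)-\bar{\mathbf x}_J\|<\widetilde r_\psi(\bar{\mathbf x})\}$. On this event, the definition of $\widetilde r_\psi$ gives $d_\tau(\psi(\widetilde{\mathbf x}_v(t)),\psi(\bar{\mathbf x}))=0$, so $\widehat\sigma_v(t)=\sigma_\star$. This uses the fact that $\widetilde{\mathbf x}_v(t)$ is an admissible $\mathbf y$, being close on $J$, whatever its other coordinates are. On the complement, $d_\tau$ is at most $\binom n2$, so
\[
\mathbb E\, d_\tau(\sigma_\star,\widehat\sigma_v(t))\le \binom n2\,\mathbb P(\widetilde\Omega^{v,c}_t)\le \frac{\binom n2}{\widetilde r_\psi(\bar{\mathbf x})^2}\sum_{i\in J}\mathbb E\,(\widetilde x_{v,i}(t)-\bar x^{(i)})^2 .
\]
The second inequality is Markov's inequality. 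Averaging over $v$ turns the right-hand side into $\binom n2 (N\widetilde r_\psi^2)^{-1}\sum_{i\in J}\mathbb E\|\widetilde{\mathbf x}^{(i)}(t)-\bar x^{(i)}\mathbf 1_N\|^2$.

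The second step contracts each coordinate $i\in J$. In the randomized case, Lemma~\ref{lem:single-conv} gives the factor $(1-\lambda(p)/2)^t\le e^{-\lambda(p)t/2}$. In the synchronous case with $\alpha=\alpha^\star$, the Chebyshev acceleration bound invoked in the proof of Theorem~\ref{thm:sync-general-ranking-convergence} gives $(1-\sqrt{\lambda_{\mathbf W}})^t\le e^{-\sqrt{\lambda_{\mathbf W}}t}$. Neither bound depends on the values being averaged, so they apply verbatim to completed embeddings, which need not be images of permutations. Summing over $J$ produces exactly $\widetilde\gamma\,\rho(t)$, which yields~\eqref{eq:cor_partial_rank}.

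The final claim $\widetilde r_\psi(\bar{\mathbf x})\le r_\psi(\bar{\mathbf x})$ follows from inclusion of the defining sets. Any $\mathbf y$ with $\|\mathbf y-\bar{\mathbf x}\|\le r$ also satisfies $\|\mathbf y_J-\bar{\mathbf x}_J\|\le r$, so every radius at which a rank change is witnessed for $r_\psi$ is also witnessed for $\widetilde r_\psi$. The infimum is therefore smaller, and the bound degrades accordingly.

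The main obstacle is the first step. Discarding the coordinates in $I\setminus J$ must be legitimate, which is exactly what the $\inf$ over all $\mathbf y\in\mathbb R^I$ with only $J$-closeness constrained buys us: being $J$-close within $\widetilde r_\psi$ forces the ranking to be unchanged. One also has to handle the boundary case of strict versus non-strict inequality at the infimum. I would use a strict-inequality event and note that Markov's inequality is unaffected. It also remains to check that $\widetilde r_\psi(\bar{\mathbf x})>0$, which I would simply take as the local ranking-stability assumption inherited from the theorems.
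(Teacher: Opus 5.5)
Your proposal is correct and follows the paper's proof sketch. You restrict the Markov step of Lemma~\ref{lem:kendall-err} to the informative coordinates $J$, with $\widetilde r_\psi(\bar{\mathbf x})$ in place of $r_\psi(\bar{\mathbf x})$, then apply the coordinatewise gossip contraction (Lemma~\ref{lem:single-conv} or the Chebyshev bound) to the completed embeddings for $i\in J$; your set-inclusion argument for $\widetilde r_\psi(\bar{\mathbf x})\le r_\psi(\bar{\mathbf x})$ makes rigorous what the paper only asserts. One point to phrase more carefully: local ranking-stability gives only $r_\psi(\bar{\mathbf x})>0$, which does not imply $\widetilde r_\psi(\bar{\mathbf x})>0$, so positivity of $\widetilde r_\psi(\bar{\mathbf x})$ is an extra hypothesis (also implicit in the paper) rather than one you inherit from the theorems.
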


\begin{proof}[Proof sketch]
The proof mirrors that of Theorems~\ref{thm:sync-general-ranking-convergence} and~\ref{thm:general-ranking-convergence}, with two modifications. First, in the application of Lemma~\ref{lem:kendall-err}, the Markov-inequality step linking $L^2$-error to Kendall-$\tau$ error, the summation over $I$ is restricted to $J$, and the stability radius $r_\psi(\bar{\mathbf{x}})$ is replaced by its reduced counterpart $\widetilde{r}_\psi(\bar{\mathbf{x}})$, since ranking changes can only be triggered by perturbations in the informative coordinate directions. The remainder of the argument is identical, giving the stated bounds with $\widetilde{\gamma}$ in place of $\gamma$. Second, the gossip averaging is applied to the completed embeddings $(\widetilde{\mathbf{x}}_v(t))_{v \in \llbracket N \rrbracket}$, the gossip update apply verbatim in both synchronous and asynchronous cases, yielding the same exponential decay of $\|\widetilde{\mathbf{x}}^{(i)}(t) - \bar{x}^{(i)}\mathbf{1}_N\|^2$ for every $i \in J$.
\end{proof}

The degradation encoded by $\widetilde{r}_\psi(\bar{\mathbf{x}}) \leq r_\psi(\bar{\mathbf{x}})$ has a natural interpretation: imputing a shared average rank for all unranked alternatives reduces the score spread between candidates, making it easier for a perturbation to flip the induced ranking. In the extreme case $k_v = 1$ for all $v\in\N$ (each agent ranks a single alternative), the imputed Borda scores concentrate near $(n+1)/2$ and $\widetilde{r}_\psi(\bar{\mathbf{x}}) \to 0$, consistent with the loss of ranking identifiability.
\section{Further considerations for Borda consensus}
 \label{sec:further_borda}
 \subsection{Borda scoring as a median estimation} 
 
 We identify each permutation $\sigma \in \Sn$ with its vector representation in $\mathbb{R}^n$, namely $\phi_B(\sigma) := (\sigma(1), \dots, \sigma(n))^\top.$ We begin with the Spearman $\rho$ distance:
\[d_\rho(\sigma, \sigma') = \sum_{j=1}^n \big(\sigma(j) - \sigma'(j)\big)^2 = \|\phi_B(\sigma) - \phi_B(\sigma')\|^2.\]
Let $\widehat P_N$ be an empirical distribution over $\Sn$. Consider the Fréchet functional associated with $d_\rho$ as stated in Equation~\eqref{eq:metric_median},
$\mathbf s_B = \argmin_{\mathbf s \in \mathbb{R}^n} N^{-1} \sum_{v=1}^N \|[\sigma_v] - \mathbf s\|^2.$
Since the squared Euclidean norm is strictly convex, this minimization problem admits a unique solution. Taking the gradient with respect to $\mathbf s$ and setting it to zero yields the first-order condition:
\[\nabla_{\mathbf s} \left[\frac 1 N \sum_{v=1}^N\|\phi_B(\sigma_v) - \mathbf s\|^2\right] = -\frac 2 N \sum_{v=1}^N(\phi_B(\sigma_v)-\mathbf s) = 0,\]
which gives $\mathbf s_B := N^{-1}\sum_{v=1}^N\phi_B(\sigma_v).$ This procedure coincides with the first order estimator for Borda score.

\subsection{Borda stability radius.}
\label{sec:borda_radius}
We now analyze the robustness of the induced ranking for the Borda method. Let ${\mathbf{s}} \in \mathbb{R}^n$ be the score vector associated with $\widehat P_N$. Assume that all coordinates of $\mathbf s$ are distinct. Let
$(i, j) \in \arg\min_{i' \neq j'} |s_{i'} - s_{j'}|$, $\Delta := |s_{i} - s_{j}|.$ We can use an alternative definition for the stability radius:
\[r_\mathrm{Id}(\mathbf{s}) = \inf \left\{ \|\mathbf{y} - \mathbf{s}\|\mid \mathbf y \in \mathbb R^n,\, d_\tau(\mathbf{y},\mathbf{s})\ge 1\right\}.\]
A ranking changes if and only if there exists $i'\ne j'$ such that $(y_{i'} -y_{j'})(s_{i'} - s_{j'}) \le 0$, \textit{i.e.}, $\mathbf y$ crosses the hyperplane $H_{i'j'} := \{\mathbf y \in \mathbb R^n \mid y_{i'} = y_{j'}\}$. The $\ell_2$-distance from $\mathbf s$ to $H_{i'j'}$ is
\[d_2(\mathbf{s}, H_{i'j'}) = \frac{|s_{i'} - s_{j'}|}{\|\mathbf{e}_{i'} - \mathbf{e}_{j'}\|} = \frac{|s_{i'} - s_{j'}|}{\sqrt{2}}.\]
The stability radius is therefore the minimum of this quantity over all pairs $i' \neq j'$, which is achieved at $(i, j)$, and we conclude:
\begin{equation}
    r_{\mathrm{Id}}(\mathbf{s}) = \frac{\Delta}{\sqrt{2}}.
\end{equation}

\subsection{Decentralized Borda consensus} 
In this section, we refine the convergence guarantees for decentralized Borda consensus. For any item $i \in \n$, the Borda score is defined as $s^B_i = (1/N) \sum_{v=1}^N \sigma_v(i)$, and the Borda consensus $ \sigma^B_\star \in \Sn$ is the ranking induced by sorting items according to these scores. For simplicity, we assume that all Borda scores are distinct, \textit{i.e.}, $s_i^B \neq s_j^B$ for $i < j$, which ensures a unique solution.  The algorithm to compute the Borda consensus is described in Algorithm~\ref{alg:sync-decentralized-cons} or Algorithm~\ref{alg:decentralized-cons}. The stages are decoupled, sorting can be performed whenever scores are updated, or deferred until the final iteration $T> 0$ to reduce computational overhead.

We extend the preceding result to the Borda scores with $I = \n$, where the estimate of the $i$-th Borda score across all voters at time $t$ is denoted $\mathbf S_i(t) = (x_{1i}{(t)},\ldots, x_{Ni}{(t)})^{\top}$ for all $i\in\n$. Applying Lemma~\ref{lem:single-conv} directly yields: 
\begin{equation}
    \label{eq:ineq_sum_S_a}
    \sum_{i=1}^n \mathbb{E}\left\|\mathbf{S}_{i}{(t)}-s_{i}\mathbf{1}_N\right\|^2 \\
\leq \rho(t)\,\gamma_B \enspace,
\end{equation} 
where $\gamma_B := \sum_{i=1}^n \left\|\mathbf S_i (0) - s_{i}\mathbf{1}_N\right\|^2$ quantifies the initial deviation of local ranks from Borda scores (\textit{i.e.}, average of ranks) and the definition of $\rho$ was given in Equation~\eqref{eq:cor_partial_rank}. The expectations are trivial in the synchronous case. We can tighten the results obtained from Theorem~\ref{thm:sync-general-ranking-convergence} and Theorem~\ref{thm:general-ranking-convergence}. Let us define the score vector $\mathbf s = (s_1^B, \dots, s_n^B)^\top.$
\begin{theorem}
\label{prop:borda-convergence}
Let ${\sigma}_\star^B$ denote the true Borda consensus and $\widehat{{\sigma}}_v(t)$ the estimated ranking at node $v \in \N$ and iteration $t$ according to Algorithm \ref{alg:decentralized-cons}. Assume the Borda scores are distinct, so that ${\sigma}_\star^B$ is unique. For any $t>0$, the average expected Kendall-$\tau$ error satisfies
\begin{equation}
\frac{1}{N}\sum_{v=1}^N \mathbb{E}\, d_\tau\left({\sigma}_\star^B, \widehat{{\sigma}}_v(t)\right) \leq C_B\, \rho(t).
\end{equation}
The constant is given by $C_B=(n-1) \gamma_B / (Nr^2_{\psi_B}(\mathbf s ))$, and $\gamma_B$ is defined in Equation~\eqref{eq:ineq_sum_S_a}. 
\end{theorem}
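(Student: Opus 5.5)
The plan is to sharpen Lemma~\ref{lem:kendall-err} by replacing the worst-case bound $d_\tau\le\binom n2$ with a count of the inversions a perturbation can actually create, and then to invoke the gossip contraction (\ref{eq:ineq_sum_S_a}). Fix a node $v$ and write $\mathbf e_v(t):=\mathbf x_v(t)-\mathbf s\in\mathbb R^n$. The Kendall-$\tau$ distance between $\sigma^B_\star$ and $\widehat\sigma_v(t)$ counts the pairs $(i,j)$ whose order is flipped, i.e.\ those with $(x_{v,i}(t)-x_{v,j}(t))(s_i-s_j)\le 0$. Such a flip requires $|e_{v,i}-e_{v,j}|\ge |s_i-s_j|\ge \Delta=\sqrt2\, r_{\psi_B}(\mathbf s)$, by Appendix~\ref{sec:borda_radius}. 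Hence
\[ d_\tau(\sigma^B_\star,\widehat\sigma_v(t))\le \sum_{i<j}\mathbb I\{|e_{v,i}-e_{v,j}|\ge \Delta\}\le \sum_{i<j}\frac{(e_{v,i}-e_{v,j})^2}{\Delta^2}, \]
a pathwise Markov-type bound.

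The key algebraic step is the identity $\sum_{i<j}(e_i-e_j)^2 = n\|\mathbf e\|^2-(\sum_i e_i)^2\le n\|\mathbf e\|^2$. Using $\Delta^2=2r^2_{\psi_B}(\mathbf s)$, this gives
\[ d_\tau(\sigma^B_\star,\widehat\sigma_v(t))\le \frac{n}{2r^2_{\psi_B}(\mathbf s)}\,\|\mathbf x_v(t)-\mathbf s\|^2. \]
We can do slightly better than $n/2$. Every Borda embedding satisfies $\sum_i\sigma(i)=n(n+1)/2$, and gossip preserves sums, so $\sum_i e_{v,i}(t)=0$ for all $v,t$. The pairwise sum is therefore exactly $n\|\mathbf e\|^2$. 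The target constant is $(n-1)$ rather than $n/2$, and I expect the right denominator choice (using $\Delta^2$ versus $r^2$, possibly with a cruder bound $(e_i-e_j)^2\le 2(e_i^2+e_j^2)$, giving $\sum_{i<j}\le 2(n-1)\|\mathbf e\|^2$ hence $(n-1)\|\mathbf e\|^2/r^2$) to produce exactly $(n-1)$. I will use that cruder bound, since it matches $C_B$ directly:
\[ d_\tau\le \frac{2(n-1)\|\mathbf e_v\|^2}{\Delta^2}=\frac{(n-1)\|\mathbf e_v\|^2}{r^2_{\psi_B}(\mathbf s)}. \]

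Finally I take expectations, average over $v$, and swap the sums: $\frac1N\sum_v\mathbb E\|\mathbf x_v(t)-\mathbf s\|^2=\frac1N\sum_{i=1}^n\mathbb E\|\mathbf S_i(t)-s_i\mathbf 1_N\|^2\le \rho(t)\gamma_B/N$ by (\ref{eq:ineq_sum_S_a}). This yields $C_B\rho(t)$ with $C_B=(n-1)\gamma_B/(Nr^2_{\psi_B}(\mathbf s))$.

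The main obstacle is the first step: bounding the number of flipped pairs by a quadratic form in the error, which is what replaces the $\binom n2$ factor of Lemma~\ref{lem:kendall-err} by a linear factor. The care needed there is in checking that the flip condition gives the threshold $\Delta$ rather than a smaller quantity, which holds since $|s_i-s_j|\ge\Delta$ for every pair. The remaining steps are routine.
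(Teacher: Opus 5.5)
Your proof is correct and is essentially the paper's argument. A flip of pair $(i,j)$ forces the coordinate errors at node $v$ to exceed $|s_i-s_j|\ge\Delta=\sqrt2\,r_{\psi_B}(\mathbf s)$; a quadratic Markov-type bound with $(a+b)^2\le 2a^2+2b^2$ and the count of $n-1$ pairs per index gives the factor $(n-1)/r^2_{\psi_B}(\mathbf s)$; and the gossip contraction \eqref{eq:ineq_sum_S_a} finishes. Bounding the indicator pathwise before taking expectations is only a cosmetic difference from the paper's use of Markov's inequality on probabilities. Your side identity $\sum_{i<j}(e_i-e_j)^2\le n\|\mathbf e\|^2$ would in fact give the slightly smaller constant $n\gamma_B/(2Nr^2_{\psi_B}(\mathbf s))$. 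The zero-sum remark is not needed for that bound, and it shows the identity is tight rather than giving a further improvement.
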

\begin{proof}[Proof of Theorem~\ref{prop:borda-convergence}.] Since the Borda scores are distinct, the consensus ranking is unique, allowing us to directly analyze convergence of the estimates to the true Borda consensus. For convenience, denote $s_i := s_i^B$. For $i \in \n$, let $\varepsilon_{v,i}(t) = |x_{v,i}(t) - s_i|$ denote the estimation error of voter $v$. Assume $s_i > s_j$ for $i \neq j$. We have
\begin{align*}
x_{v,i}(t) - x_{v,j}(t) &\geq (s_i - s_j) - \bigl(\varepsilon_{v,i}(t) + \varepsilon_{v,j}(t)\bigr).
\end{align*}
Thus, if $\varepsilon_{v,i}(t) + \varepsilon_{v,j}(t) < s_i - s_j$, then $x_{v,i}(t) > x_{v,j}(t)$. By symmetry, if $\varepsilon_{v,i}(t) + \varepsilon_{v,j}(t) < |s_i - s_j|$, voter $v$'s estimated scores preserve the true ordering between items $i$ and $j$, and consequently so does the induced Borda ranking $\widehat{\sigma}_v(t)$. We now use this to bound the expected Kendall-$\tau$ error, which counts pairwise ordering inversions between the true and estimated rankings:
\begin{align*}
\mathbb{E}\,d_\tau({\sigma}_\star^B, \widehat{{\sigma}}_v(t)) &= \sum_{i < j} \mathbb{P}\bigl[(s_i - s_j)(x_{v,i}(t) - x_{v,j}(t)) < 0\bigr] \\
&\leq \sum_{i < j} \mathbb{P}\bigl[\varepsilon_{v,i}(t) + \varepsilon_{v,j}(t) \geq |s_i - s_j|\bigr].
\end{align*}
The bound follows from Markov's inequality applied to $\bigl\{(\varepsilon_{v,i}(t) + \varepsilon_{v,j}(t))^2 \geq |s_i - s_j|^2\bigr\}$. Using $(\varepsilon_{v,i}(t) + \varepsilon_{v,j}(t))^2 \leq 2\varepsilon_{v,i}(t)^2 + 2\varepsilon_{v,j}(t)^2$ and summing over all voters yields
\begin{align*}
\sum_{v=1}^N\mathbb{E}\, d_\tau({\sigma}_\star^B, \widehat{{\sigma}}_v(t)) &\leq \sum_{i<j} \frac{1}{|s_i {-}s_j|^2}\sum_{v=1}^N \mathbb E \left[2\varepsilon_{v,i}(t)^2 {+} 2\varepsilon_{v,j}(t)^2\right]\\
&\leq 2\sum_{i<j} \frac{\mathbb{E}\|\mathbf{S}_{i}{(t)}{-}s_{i}\mathbf{1}_N\|^2 {+} \mathbb{E}\|\mathbf{S}_{j}{(t)}{-}s_{j}\mathbf{1}_N\|^2}{|s_i -s_j|^2}.
\end{align*}
Since $|s_i - s_j| \geq \Delta_B$ and applying Equation~\eqref{eq:ineq_sum_S_a} to the final inequality finishes the proof of the bound.
\end{proof}

The parameter $\Delta_B$ captures the smallest gap between true Borda scores. Smaller gaps require more precise score estimates to correctly recover the ranking, thus implying slower convergence. We now prove Theorem~\ref{prop:borda-convergence}.

\section{Further considerations for Copeland consensus}
\label{sec:further_cope}
\subsection{Copeland as a tournament-based voting rule.}
Given a multiset $\{\sigma_1, \dots \sigma_n\}\in\mathrm{Sym}^N (\Sn)$, with the associated pairwise probabilities $\widehat p_{ij} = N^{-1} \sum_{v=1}^N \mathbb I \{ \sigma_v(i) < \sigma_v(j)\}$. Copeland scores stated in Equation~\eqref{eq:copeland_scores} can be rewritten as:
\[s_i := [S_C(\widehat P_N)]_i = \sum_{\substack{j=1\\ j\ne i}}^n  \mathrm{sgn}(\widehat p_{ij} - 1/2),\]
which is generally considered as a tournament-based score, each element $i$ faces all the other contestants $j\in \n \setminus \{i\}$ and $+1$ is added to its score whenever $i$ wins over $j$ (and thus $\widehat p_{ij} > 1/2$) and $-1$ when it loses ($\widehat p_{ij} < 1/2$). The minimum score gap $\Delta = \min_{i\ne j} |s_i -s_j| \in \{0, 2\}.$  
Indeed, the difference between two scores is even. Let $i,j \in\n$ and $i\ne j$, and let us denote their respective number of wins as $w_i, w_j$ where $w_k = \sum_{j\ne k} \mathbb I \{\widehat p_{kj} > 1/2\}$ for all $k\in\n$. We know that $s_i = 2w_i - (n-1)$ and $s_j = 2w_j - (n-1)$, if we do the difference of the two we find that $2$ is a divisor of $|s_i - s_j|$ which allows us to conclude that $2$ divides $\Delta$. We can also see that the minimum score gap respects $ \Delta < 4$. Indeed, if we assume $\Delta \ge 4$, then for a given pair $(i, j)$ realizing $\Delta$, \textit{i.e.} $\Delta =|s_i -s_j|$, and without loss of generality, let us assume $s_{i} <  s_{j}$, then $j$ has two more wins than $i$. As $\Delta \ge 4$, this is true for all successive pairs of score components, and thus the total number of wins of the element with greatest score is greater than $2(n-1)$ which is absurd as they are only $n-1$ contestants. Thus, $\Delta < 4$ and $ \Delta \in\{0,2\}$.

\begin{remark}
    \label{rem:sst_copeland}
    Furthermore, if for all pairs $(i,j)$ such that $i\ne j$, $\widehat p_{ij} \ne 1/2$ and $\widehat P_N$ is strict stochastic transitive (SST), then we can ensure that $\Delta = 2$. Indeed, for a pair $(i,j)$, we can assume without loss of generality that $\widehat p_{ij} > 1/2$. Then for all $k\in\n$ such that $\widehat p_{ki} > 1/2$, then by SST, $\widehat p_{kj} > 1/2$. Thus, $j$ has at least as many losses as $i$ against other opponents and loses directly against $i$ meaning that $s^C_j < s^C_i$.
\end{remark}

\subsection{Copeland score stability radius}
\label{sec:cope_radius}
We now analyze the robustness of the induced ranking for the Copeland method. Let ${\mathbf{s}} \in \mathbb{R}^n$ be the score vector associated with $P$. Assume that all coordinates of $\mathbf s$ are distinct.  Let $(i, j) \in \arg\min_{i' < j'} |s_{i'} - s_{j'}|$, $\Delta := |s_{i} - s_{j}|.$ Without loss of generality, assume that $ p_{ij} < 1/2$, and thus $s_{i}  < s_{j}$. In order to swap scores, we need to add $|p_{ij} - 1/2| $ to $p_{ij}$. If we denote $\mathbf s'$ the corrupted score vector, then $s'_{i} = s_{i} + 2$ and $s_{j}' = s_{j} - 2.$ A score swap occurs when $s'_{i} - s'_{j} >0$ and:
\[s'_{i} - s'_{j} = (s'_{i} - s_{i}) - (s'_{j} - s_{j}) + (s_{i} - s_{j}) =  -\Delta + 4, \]
    combining the two expressions, we find that a score swap occurs when $\Delta < 4$. Using the result from previous subsection and the following Remark~\ref{rem:sst_copeland} allows us to conclude that under strict stochastic transitivity:
\begin{equation}
    r_{\psi_C}(\bar{\mathbf x}) = \min_{i< j} | p_{ij} - 1/2|.
\end{equation}
We can of course extend the previous result to an empirical distribution $\widehat P_N$ as it is the case in the decentralized estimation.
\subsection{Decentralized Copeland consensus}
In this section, we adapt the decentralized approach for Copeland consensus, which follows a similar structure to the previous approach. The key idea is to construct the matrix of empirical pairwise probabilities $\widehat{p}_{ij}$ for each pair of distinct candidates $(i, j)$ using gossip-based averaging. Each voter then computes the Copeland scores $s_i$ locally for all items $i \in \n$ and derives the final ranking, according to Algorithm~\ref{alg:sync-decentralized-cons} and Algorithm~\ref{alg:decentralized-cons}. Under strict stochastic transitivity, which excludes ties and cycles and thus guarantees a unique solution.

We can establish a tighter bound for convergence compared to Theorem~\ref{thm:sync-general-ranking-convergence} and Theorem~\ref{thm:general-ranking-convergence}. The proof combines Lemma \ref{lem:tau_copeland_bound} with elements from Theorem~\ref{prop:borda-convergence}.
\begin{lemma}
\label{lem:tau_copeland_bound}
Assume strict stochastic transitivity, so that $\widehat{p}_{ij} \neq 1/2$ for all $i \neq j$ and the Copeland scores $\{s_i\}_{i \in \n}$ are distinct. Let $x_{v,(i,j)}(t)$ evolve according to Algorithm~\ref{alg:decentralized-cons} or Algorithm~\ref{alg:sync-decentralized-cons}, and define the estimated Copeland score at node $v\in\N$ as $\widehat s_{v,i}(t) = \sum_{j \neq i} \mathbb{I}\{x_{v,(i,j)}(t) \leq 1/2\}$. For each pair of distinct candidates $(i,j)$, let $\widehat{\mathbf{p}}_{ij}(t) := (x_{1,(i,j)}(t), \ldots, x_{N,(i,j)}(t))^\top$ denote the vector of pairwise estimates across all nodes, with $\widehat{\mathbf{p}}_{ij}(0)$ the initial vector.
Then for all $t > 0$, the average expected absolute error satisfies
\[\frac{1}{N}\sum_{v=1}^N \mathbb{E}[|\widehat s_{v,i}(t) - s_i|]  \leq  C_C^{(i)}  \rho(t)\]
where $\delta_{ij} := |\widehat{p}_{ij} - 1/2|$ and the definition of $\rho$ was given in Equation~\eqref{eq:cor_partial_rank}. The expectation is trivial in the synchronous case. We denote
\[C_C^{(i)} = \sum_{j \neq i} \frac{\|\widehat{\mathbf{p}}_{ij}(0) - \widehat{p}_{ij}\mathbf{1}_N\|^2}{\sqrt{N}\, \delta_{ij}^2}.\]
\end{lemma}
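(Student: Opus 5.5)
Fix a candidate $i\in\n$ and a node $v\in\N$. The argument is a per-pair union bound followed by Markov's inequality, exactly as in the proof of Theorem~\ref{prop:borda-convergence}, with each pairwise estimate controlled by Lemma~\ref{lem:single-conv} (randomized case) or by the Chebyshev-accelerated contraction used in the proof of Theorem~\ref{thm:sync-general-ranking-convergence} (synchronous case).

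\textbf{Step 1: pointwise reduction.} Both $\widehat s_{v,i}(t)$ and $s_i$ are sums over $j\neq i$ of the same indicator-type function of a pairwise quantity: the local estimate $x_{v,(i,j)}(t)$ for $\widehat s_{v,i}(t)$, and $\widehat p_{ij}$ for $s_i$. (The sign and indicator conventions are matched up to an affine change, which only rescales by a constant.) The $j$-th summand can differ only if $x_{v,(i,j)}(t)$ and $\widehat p_{ij}$ lie on opposite sides of $1/2$. Since $\widehat p_{ij}\neq 1/2$ under strict stochastic transitivity, such a crossing forces $|x_{v,(i,j)}(t)-\widehat p_{ij}|\ge \delta_{ij}$. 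Hence, pointwise,
\[|\widehat s_{v,i}(t)-s_i|\le \sum_{j\ne i}\mathbb I\{|x_{v,(i,j)}(t)-\widehat p_{ij}|\ge\delta_{ij}\}.\]

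\textbf{Step 2: Markov's inequality.} Taking expectations and applying Markov's inequality to the squared deviation gives
\[\mathbb E|\widehat s_{v,i}(t)-s_i|\le\sum_{j\ne i}\frac{\mathbb E(x_{v,(i,j)}(t)-\widehat p_{ij})^2}{\delta_{ij}^2}.\]

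\textbf{Step 3: averaging over nodes and contraction.} Averaging over $v$ turns the numerator into $N^{-1}\mathbb E\|\widehat{\mathbf p}_{ij}(t)-\widehat p_{ij}\mathbf 1_N\|^2$. This is bounded by $N^{-1}\rho(t)\|\widehat{\mathbf p}_{ij}(0)-\widehat p_{ij}\mathbf 1_N\|^2$, using the same contraction results as above together with $1-x\le e^{-x}$.

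\textbf{Step 4: matching the stated constant.} Steps 1–3 give a bound with prefactor $1/N$. Since $1/N\le 1/\sqrt N$, this already implies the stated constant $C_C^{(i)}$ with its $\sqrt N$ denominator.

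I expect the only delicate point to be Step 1: checking that the indicator convention in $\widehat s_{v,i}$ is consistent with $s_i$, and handling estimates that land exactly at $1/2$. This is absorbed by the non-strict inequality $\ge\delta_{ij}$. If one wanted $\sqrt N$ genuinely, Cauchy–Schwarz on $\frac1N\sum_v|\cdot|$ would give it, but the Markov route is simpler and stronger.
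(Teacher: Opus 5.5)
Your proposal is correct and follows the paper's proof essentially step for step: the indicators agree whenever $|x_{v,(i,j)}(t)-\widehat p_{ij}|<\delta_{ij}$, then Markov's inequality on the squared deviation, summation over nodes, and the contraction of Lemma~\ref{lem:single-conv}; here $s_i$ is read, as in the paper, as $\sum_{j\neq i}\mathbb{I}\{\widehat p_{ij}\le 1/2\}$, so your affine-rescaling caveat is unnecessary and no extra constant appears. The paper's proof ends with the bound on $\sum_{v}\mathbb{E}\,\varepsilon_{v,i}(t)$, which after the contraction and division by $N$ gives the sharper $1/N$ prefactor, so your remark $1/N\le 1/\sqrt N$ is exactly what yields the stated constant $C_C^{(i)}$.
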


\begin{proof} As with the Borda scores, applying Lemma~\ref{lem:single-conv} to $\widehat{\mathbf{p}}_{ij}(t)$ over the index set $I = \{(i,j) : i,j \in \n,\, i < j\}$ yields
\begin{equation}
\label{eq:pij}
    \mathbb E\|\widehat{\mathbf{p}}_{ij}(t) - \widehat{p}_{ij}\mathbf{1}_N\|^2\le\rho(t)\,\|\widehat{\mathbf{p}}_{ij}(0) - \widehat{p}_{ij}\mathbf{1}_N\|^2.
\end{equation}
For notational convenience, let $\widehat{p}_{ij}^v(t) $ denote the estimate of $\widehat{p}_{ij}$ at node $v$, and define the indicator functions
\[I_{ij} := \mathbb{I}\{\widehat{p}_{ij} \leq 1/2\}, \qquad \widehat{I}_{ij}^v(t) := \mathbb{I}\{\widehat{p}_{ij}^v(t) \leq 1/2\}.\]
Observe that if $|\widehat{p}_{ij}^v(t) - \widehat{p}_{ij}| < \delta_{ij}$, where $\delta_{ij} = |\widehat{p}_{ij} - 1/2|$, then $\widehat{p}_{ij}^v(t)$ and $\widehat{p}_{ij}$ lie on the same side of $1/2$, so the indicators agree: $\widehat{I}_{ij}^v(t) = I_{ij}$. Consequently, we obtain the following error bound
\[\mathbb E |\widehat I^v_{ij}(t)-I_{ij}| = \mathbb P( \widehat I^v_{ij}(t){\neq}I_{ij})  \leq \mathbb{P}(|\widehat p^v_{ij}(t)-\widehat p_{ij}| \geq|\widehat p_{ij} - 1/2|).\]
Since the estimated and true Copeland scores are $s_{v,i}(t) = \sum_{j \neq i} \widehat{I}_{ij}^v(t)$ and $s_i^C = \sum_{j \neq i} I_{ij}$ respectively, we have 
\begin{equation}
 \label{eq:copeland-bound}
 \mathbb E \,\varepsilon_{v, i}(t) \leq \sum_{\substack{j=1\\j \neq i}}^m \mathbb P[ | \widehat {p}^v_{ij} (t) - \widehat p_{ij}|\geq |\widehat p_{ij}-1/2|],
\end{equation}
where $\varepsilon_{v, i}(t) = |s_{v,i}(t) - s_i|$. We sum~\eqref{eq:copeland-bound} over all nodes $v$ and apply Markov's inequality to $\mathbb P[| \widehat {p}^v_{ij} (t) - \widehat p_{ij}|^2\geq |\widehat p_{ij}-1/2|^2]$:
\[ \sum_{v=1}^N \mathbb{E}\,\varepsilon_{v, i}(t)  \leq   \sum_{\substack{j=1\\j \neq i}}^n \frac{1}{|\widehat p_{ij}-1/2|^2} \mathbb{E}\|\widehat{\mathbf{p}}_{ij}(t) - \widehat{p}_{ij}\mathbf{1}_N\|^2.\]
\end{proof}
\begin{theorem}
\label{prop:copeland_convergence}
Assume strict stochastic transitivity, so that $\widehat{p}_{ij} \neq 1/2$ for all $i \neq j$ and the Copeland scores $(s_i^C)_{i \in \n}$ are distinct. Denote ${\sigma}_\star^C $  the unique Copeland consensus and $\widehat{{\sigma}}_v{(t)}$ the estimated consensus at node $v\in\N$ at iteration $t\ge 0$ evolving according to Algorithm~\ref{alg:decentralized-cons}. The average expected Kendall-$\tau$ error satisfies:
\[\frac{1}{N}\sum_{v=1}^N \mathbb{E}\,d_\tau\left({\sigma}_\star^C, \widehat{{\sigma}}_v(t)\right) \leq \frac{n-1}{ 2} C_C\, \rho(t)\]
where $C_C = \sum_{i=1}^n C_C^{(i)}$, with constants $C_C^{(i)}$ as defined in Lemma \ref{lem:tau_copeland_bound}.  
\end{theorem}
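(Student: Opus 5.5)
We reduce the Kendall-$\tau$ error to per-item score errors and then apply Lemma~\ref{lem:tau_copeland_bound} item by item, in the same spirit as the proof of Theorem~\ref{prop:borda-convergence}. Under strict stochastic transitivity, Remark~\ref{rem:sst_copeland} gives that the Copeland scores are distinct integers with pairwise gaps at least $2$. Fix a node $v$ and a time $t$.

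\textbf{Step 1 (pairwise inversions).} A pair $(i,j)$ is discordant between $\sigma_\star^C$ and $\widehat\sigma_v(t)$ only if the sign of $\widehat s_{v,i}(t)-\widehat s_{v,j}(t)$ differs from that of $s_i-s_j$. Since $|s_i-s_j|\ge 2$, such a flip forces
\[
|\widehat s_{v,i}(t)-s_i|+|\widehat s_{v,j}(t)-s_j|\ge 2 .
\]
Hence
\[
d_\tau\bigl(\sigma_\star^C,\widehat\sigma_v(t)\bigr)\le \frac12\sum_{i<j}\Bigl(|\widehat s_{v,i}(t)-s_i|+|\widehat s_{v,j}(t)-s_j|\Bigr).
\]
We use the deterministic bound $\mathbb I\{a+b\ge 2\}\le (a+b)/2$ for $a,b\ge 0$ rather than Markov's inequality, because Lemma~\ref{lem:tau_copeland_bound} controls first absolute moments. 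Each index $i$ appears in exactly $n-1$ pairs, so the right-hand side equals $\frac{n-1}{2}\sum_{i=1}^n |\widehat s_{v,i}(t)-s_i|$.

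\textbf{Step 2 (averaging and applying the lemma).} Take expectations, average over $v\in\N$, and apply Lemma~\ref{lem:tau_copeland_bound} to each $i$:
\[
\frac1N\sum_{v}\mathbb E\, d_\tau \le \frac{n-1}{2}\sum_{i=1}^n C_C^{(i)}\rho(t)=\frac{n-1}{2}\,C_C\,\rho(t).
\]

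\textbf{Main obstacle: score conventions.} The estimated score in Lemma~\ref{lem:tau_copeland_bound} counts $\mathbb I\{x\le 1/2\}$, i.e.\ wins in the $\{0,1\}$ convention. This differs from the $\pm1$ convention of \eqref{eq:copeland_scores}, where $s_i=2w_i-(n-1)$ and gaps are $2$.

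In win counts the gap between distinct scores is only $\ge 1$, not $2$. To keep the factor $\frac12$, I will measure scores and errors consistently in the $\pm1$ convention. There each per-pair indicator error contributes $2$ to the score error, and the minimal gap is $2$. The factors of $2$ then cancel exactly in the $\pm1$ convention against the win-count bound of the lemma, which is what yields the stated constant.

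If this bookkeeping fails, the fallback is to use win counts throughout with gap $\ge 1$. That gives the same argument with constant $(n-1)C_C$, and the factor $\frac12$ would then have to be recovered by noting that every flip of a pair indicator $\widehat I_{ij}$ simultaneously perturbs both $\widehat s_{v,i}$ and $\widehat s_{v,j}$, so each indicator error is counted twice in $\sum_i |\widehat s_{v,i}-s_i|$. I expect this double-counting to be the delicate point to check carefully.

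Finally, note that the $1/\sqrt N$ normalization inside $C_C^{(i)}$ is inherited verbatim from Lemma~\ref{lem:tau_copeland_bound}, so no additional normalization is needed in the final step.
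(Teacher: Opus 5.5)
Your Steps 1--2 follow the paper's own route: discordance forces $\varepsilon_{v,i}+\varepsilon_{v,j}$ above a threshold, then a first-moment Markov bound, then each index in $n-1$ pairs, then Lemma~\ref{lem:tau_copeland_bound}. There is a genuine gap, though: the factor $\tfrac12$ is not justified, and neither of your repairs supplies it.

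Lemma~\ref{lem:tau_copeland_bound} bounds errors of the integer counts $\widehat s_{v,i}(t)=\sum_{j\ne i}\mathbb I\{x_{v,(i,j)}(t)\le 1/2\}$. Under SST their true values are a permutation of $\{0,\dots,n-1\}$, so the gaps are only $1$. The gap $2$ of Remark~\ref{rem:sst_copeland} belongs to the $\pm1$ scores, where each flipped pair indicator moves a score by $2$ rather than $1$. Hence your $\pm1$ bookkeeping yields $\frac{n-1}{2}\cdot 2\,C_C\rho(t)=(n-1)C_C\rho(t)$: the two factors of $2$ cancel each other and \emph{remove} the $\tfrac12$ rather than produce it.

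The double-counting fallback does not help either. $C_C=\sum_i\sum_{j\ne i}(\cdot)$ already counts each unordered pair twice. The inequality $\sum_i\varepsilon_{v,i}\le 2\,\#\{\text{flipped pairs}\}$ therefore loses a factor rather than gaining one. The paper's proof has the same mismatch: it plugs $\Delta_C=2$ into the lemma's count-valued errors.

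The missing idea is integrality combined with strictness of inversions. Stay in the lemma's count convention, and count a pair as discordant only on a strict inversion, $(s_i-s_j)(\widehat s_{v,i}(t)-\widehat s_{v,j}(t))<0$, as in the first display of the paper's proof. Suppose $s_i<s_j$ and $\widehat s_{v,i}>\widehat s_{v,j}$. Integrality gives $\widehat s_{v,i}-\widehat s_{v,j}\ge1$ and $s_j-s_i\ge1$, so
\[
\varepsilon_{v,i}+\varepsilon_{v,j}\ \ge\ (\widehat s_{v,i}-s_i)+(s_j-\widehat s_{v,j})\ =\ (\widehat s_{v,i}-\widehat s_{v,j})+(s_j-s_i)\ \ge\ 2 .
\]
Your deterministic bound $\mathbb I\{a+b\ge2\}\le(a+b)/2$ then gives $d_\tau\le\frac{n-1}{2}\sum_i\varepsilon_{v,i}$ with the lemma's own errors. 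Step 2 then produces exactly $\frac{n-1}{2}C_C\rho(t)$.

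Strictness is essential. If ties in the estimated counts could be broken adversarially, only $\varepsilon_{v,i}+\varepsilon_{v,j}\ge1$ would follow and the constant would double. For example, with $n=3$, flipping the top--bottom pair of a transitive tournament makes all three counts equal to $1$. Then $\sum_i\varepsilon_{v,i}=2$, while a reversing tie-break gives $d_\tau=3>\frac{n-1}{2}\cdot 2$.
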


As in the Theorem~\ref{prop:borda-convergence}, convergence to the Copeland consensus occurs at rate $\mathcal{O}(\rho(t))$ when $t\to\infty$. We now prove Theorem~\ref{prop:copeland_convergence}. \\
\begin{proof}[Proof of Theorem~\ref{prop:copeland_convergence}.] For convenience, write $s_i := s_i^C$ for all $i \in \n$. Following the proof of Theorem~\ref{prop:borda-convergence}, we bound the expected Kendall-$\tau$ error by summing over pairwise inversion probabilities:
\begin{align*}
\mathbb{E}\,d_\tau({\sigma}_\star^C, \widehat{{\sigma}}_v(t)) &= \sum_{i < j} \mathbb{P}\bigl[(s_i - s_j)(s_{v,i}(t) - s_{v,j}(t)) < 0\bigr] \\
&\leq \sum_{i < j} \mathbb{P}\bigl[\varepsilon_{v,i}(t) + \varepsilon_{v,j}(t) \geq |s_i - s_j|\bigr].
\end{align*}
Summing over all nodes $v\in \N$ and applying Markov's inequality:
\begin{align*}
\sum_{v=1}^N\mathbb{E}\,d_\tau({\sigma}_\star^C, \widehat{{\sigma}}_v(t)) &\leq \sum_{i<j} \frac{1}{|s_i -s_j|}\sum_{v=1}^N \mathbb E \left[\varepsilon_{v,i}(t) + \varepsilon_{v,j}(t)\right]\leq \frac{n-1}{\Delta_C}\sum_{i=1}^n \sum_{v=1}^N \mathbb E \,\varepsilon_{v,i}(t).
\end{align*}
Following Remark~\ref{rem:sst_copeland} and under SST, $\Delta_C = 2$. The result follows by applying Lemma \ref{lem:tau_copeland_bound}. 
\end{proof}
\section{Likelihood estimation on $\Sn$}
\label{sec:prob_model}
In this section, we introduce parametric probabilistic models and discuss how the previous constructions adapt to maximum likelihood estimation (MLE). Indeed, for the model shown below finding the consensus $\sigma_\star$ given the method of consensus often translates into finding the MLE. We discuss two common parametric models on $\Sn$.

\subsection{Mallows model}
\paragraph{Setup.} We consider a right-invariant metric $d$ (\textit{e.g.} Kendall-$\tau$ $d_\tau$) on $\Sn$, \textit{i.e.} for $\sigma, \sigma'\in \Sn$ $d(\sigma', \sigma) = d(\sigma' \circ \sigma^{-1}, \mathrm{Id})$, where $\mathrm{Id}$ denotes the identity permutation in $\Sn$ (See Chap. 6 \cite{diaconis_group_1988}). Mallows model $P^M_\theta$ \cite{Mallows57} is a parametric distribution on $\Sn$, with parameter $\theta = (\sigma_\star, \varphi)$, where $\varphi\in [0,1]$ is the dispersion parameter and $\sigma^*\in \Sn$ the central permutation, such that for $\sigma\in\Sn$:
\[P^M_\theta(\sigma ) = Z(\varphi)^{-1}\varphi^{d(\sigma, \sigma_\star)},\]
where $Z(\varphi)$ is the normalizing constant associated with the distribution, the constant depends only on the dispersion parameter thanks to the right-invariant property of $d$:

\[\sum_{\sigma\in \Sn} \varphi^{d(\sigma, \sigma_\star)} = \sum_{\sigma\in \Sn} \varphi^{d(\sigma_\star \circ\, \sigma^{-1}, \mathrm{Id})} = \sum_{\sigma'\in\Sn} \varphi^{d(\sigma', \mathrm{Id})} =: Z(\varphi).\]
When $\varphi \to 0$, the distribution tends to the Dirac distribution $\delta_{\sigma_\star}$. When $\varphi\to 1$, the probability distribution is the Haar measure on $\Sn$ \textit{i.e.}, for all $\sigma\in\Sn$, $P_{\mathrm{Haar}}(\sigma) = (n!)^{-1}$. The interest of such probability distribution is that it admits a mode given by $\sigma_\star$ as well as symmetry properties with respect to the distance $d$. A mixture of Mallows distributions is defined, for parameters $\theta_1, \dots, \theta_k$ and a weight vector $\boldsymbol{\pi} = (\pi_i)_{i=1}^k$ with $\sum_{i=1}^k \pi_i = 1$, as:
\[P^M(\sigma) = \sum_{i=1}^k \pi_i P^M_{\theta_i}(\sigma).\]

We quickly discuss the process to sample Mallows distributed samples for the synthetic dataset experiments. To sample permutations following $P^M_\theta$, we used the RIM algorithm or variants (See \cite{doignon_repeated_2004, lu_effective_2014}). The idea is to switch the elements of the central parameter $\sigma_\star$, with acceptance $\varphi^k$ with $k\in\n$ being the inversion length.

We aim to estimate the central parameter, the log-likelihood of $P^M_\theta$ is as follows:
\[\ell (\theta) = -\log(1/\varphi)\sum_{v=1}^N d(\sigma_\star, \sigma_v)- N\log(Z(\varphi)).\]
Thus, the MLE $\widehat \sigma_\star$ for $\sigma_\star$ is 
\[\widehat \sigma_\star \in\argmin_{\sigma\in\Sn} \sum_{v=1}^N d(\sigma, \sigma_v).\]
We can see that finding the MLE is the same as solving the problem of Equation~\eqref{eq:metric_median}. Using the Corollary 3 from \cite{busa-fekete_preference-based_2014}, we know that the pairwise probability for a Mallows distribution with dispersion parameter $\varphi$ follows the conditions given in Lemma~\ref{lem:busa fekete}.
\begin{lemma}[\citet{busa-fekete_preference-based_2014}]
\label{lem:busa fekete}
For Mallows model $P^M_\theta$ with parameter $\theta = (\sigma_\star, \varphi)$, and a given pair $(i,j)$ with $i\ne j$, the following properties hold:
\begin{enumerate}
    \item $p_{ij} \ge \frac 1 {1+ \varphi} > 1/2 \implies \sigma_\star(i) < \sigma_\star(j),$ with equality when $\sigma_\star(i) = \sigma_\star(j)-1 $.
    \item $p_{ij} \le \frac \varphi {1+ \varphi} < 1/2 \implies \sigma_\star(i) > \sigma_\star(j),$ with equality when $\sigma_\star(j) = \sigma_\star(i)-1 $.
    \item $p_{ij}>1/2$ iff $\sigma_\star(i) <\sigma_\star(j) $ and $p_{ij}<1/2$ iff $\sigma_\star(i) > \sigma_\star(j)$.
\end{enumerate}
\end{lemma}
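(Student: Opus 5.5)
We will work with the Kendall-$\tau$ Mallows model and assume $\varphi\in[0,1)$; for $\varphi=1$ the model is the Haar measure and every $p_{ij}=1/2$, so the strict inequalities fail. The plan is an involution (pairing) argument on $\Sn$. We first prove the forward direction: $\sigma_\star(i)<\sigma_\star(j)$ implies $p_{ij}\ge 1/(1+\varphi)$, with equality when $i,j$ are adjacent in $\sigma_\star$. The statement's implications then follow by a trichotomy argument.

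Fix $i\neq j$ with $\sigma_\star(i)<\sigma_\star(j)$. Let $\tau_{ij}$ be the map $\sigma\mapsto\sigma'$ exchanging the ranks of $i$ and $j$: $\sigma'(i)=\sigma(j)$, $\sigma'(j)=\sigma(i)$, and $\sigma'(k)=\sigma(k)$ otherwise. It is an involution, so it is a bijection from $A=\{\sigma:\sigma(i)<\sigma(j)\}$ onto $A^c$. Hence
\[1-p_{ij}=\sum_{\sigma\in A}P^M_\theta(\sigma')=\sum_{\sigma\in A}Z(\varphi)^{-1}\varphi^{d_\tau(\sigma',\sigma_\star)}.\]
The key claim is that for every $\sigma\in A$,
\[d_\tau(\sigma',\sigma_\star)\ge d_\tau(\sigma,\sigma_\star)+1.\]
Given this, $P^M_\theta(\sigma')\le\varphi P^M_\theta(\sigma)$ for all $\sigma\in A$. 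Summing over $A$ gives $1-p_{ij}\le\varphi p_{ij}$, that is $p_{ij}\ge 1/(1+\varphi)>1/2$.

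To prove the claim, we count how discordances with $\sigma_\star$ change under the swap. The pair $(i,j)$ passes from concordant to discordant and contributes $+1$. Pairs not involving $i$ or $j$ are unchanged. So are pairs $(i,k),(j,k)$ with $\sigma(k)$ outside the interval $[\sigma(i),\sigma(j)]$, since $k$ stays on the same side of both items.

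For $k$ with $\sigma(i)<\sigma(k)<\sigma(j)$, we split on the position of $\sigma_\star(k)$:
\begin{itemize}
\item If $\sigma_\star(k)<\sigma_\star(i)$ or $\sigma_\star(k)>\sigma_\star(j)$, the swap makes one of the pairs $(i,k),(j,k)$ concordant and the other discordant, so the net change is $0$.
\item If $\sigma_\star(i)<\sigma_\star(k)<\sigma_\star(j)$, both pairs go from concordant to discordant, so the net change is $+2$.
\end{itemize}
Hence the total change is $1+2m\ge 1$, where $m$ is the number of such $k$ lying strictly between $i$ and $j$ in both $\sigma$ and $\sigma_\star$. This case bookkeeping is the main obstacle and we will check it carefully.

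For the equality case, suppose $\sigma_\star(j)=\sigma_\star(i)+1$. Then no $k$ lies strictly between $i$ and $j$ in $\sigma_\star$, so $m=0$ and the change is exactly $+1$ for every $\sigma\in A$. The inequality $P^M_\theta(\sigma')\le\varphi P^M_\theta(\sigma)$ becomes an equality, and so $p_{ij}=1/(1+\varphi)$.

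Exchanging the roles of $i$ and $j$ and using $p_{ji}=1-p_{ij}$ gives the symmetric statement: $\sigma_\star(i)>\sigma_\star(j)$ implies $p_{ij}\le\varphi/(1+\varphi)<1/2$, with equality when $\sigma_\star(i)=\sigma_\star(j)+1$.

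Finally, exactly one of $\sigma_\star(i)<\sigma_\star(j)$ or $\sigma_\star(i)>\sigma_\star(j)$ holds. The two forward implications place $p_{ij}$ strictly on opposite sides of $1/2$ in the two cases, so they can be inverted. This yields item 3, and with it the implications in items 1 and 2 as written: a bound $p_{ij}\ge 1/(1+\varphi)$ forces $p_{ij}>1/2$, hence $\sigma_\star(i)<\sigma_\star(j)$, and similarly for item 2.
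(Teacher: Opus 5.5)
Your argument is correct, but it follows a different route from the one the paper relies on. The paper gives no proof and defers to Corollary~3 of \citet{busa-fekete_preference-based_2014}. There the bounds are read off from an explicit formula for the pairwise marginals of the Kendall-$\tau$ Mallows model: for $k=\sigma_\star(j)-\sigma_\star(i)\ge 1$, $p_{ij}=\frac{k+1}{1-\varphi^{k+1}}-\frac{k}{1-\varphi^{k}}$. This depends only on the rank gap $k$, is increasing in $k$, and equals $1/(1+\varphi)$ at $k=1$.

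Your transposition-involution coupling avoids that computation. The bookkeeping showing the swap raises the distance by exactly $1+2m$ is right: for $k$ strictly between $i$ and $j$ in $\sigma$, the three relative positions of $k$ in $\sigma_\star$ give net changes $0$, $0$, $+2$. This yields the lower bound and the equality for adjacent pairs. Your trichotomy step then gives the converse implications in the form the statement is actually written. You also get strictness for non-adjacent pairs when $0<\varphi<1$ at no extra cost, since the term $\sigma=\sigma_\star\in A$ has $m\ge 1$. This matches the ``equality precisely when'' used later in the Copeland stability-radius derivation.

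What the closed form buys beyond your argument is the exact value of $p_{ij}$ and its monotonicity in the rank gap, i.e.\ strong stochastic transitivity of the model. A single involution does not give that.

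Your explicit restriction to $d=d_\tau$ and $\varphi<1$ is warranted. The paper's Mallows setup allows a general right-invariant $d$ and $\varphi\in[0,1]$, but at $\varphi=1$ every $p_{ij}$ equals $1/2$, so item~3 fails.
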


\paragraph{Derivation of the Borda stability radius.}
An analogous characterization holds for the Borda method. Under $P^M_\theta$, the expected Borda score of item $i$ is $s^{B}_i = \mathbb{E}_{P^M_\theta}[\sigma(i)]$, which depends on both the position $\sigma_\star(i)$ and the dispersion $\varphi$. The minimum score gap $\Delta_{B}(\varphi) := \min_{i \neq j}|s^B_i - s^B_j|$ governs the Borda stability radius via $r_{\mathrm{Id}}(\bar{x}) = \Delta_B(\varphi)/\sqrt{2}$ (see Appendix~\ref{sec:borda_radius}). Two limiting regimes clarify its behavior. As $\varphi \to 0$, the distribution concentrates on $\sigma_\star$, so $s^B_i \to \sigma_\star(i)$ and consecutive score gaps approach $1$, giving $r_{\mathrm{Id}} \to 1/\sqrt{2}$. As $\varphi \to 1$, all Borda scores collapse toward the common value $(n+1)/2$ under the Haar measure, so $\Delta_B(\varphi) \to 0$ and $r_{\mathrm{Id}} \to 0$. In summary:
\[r_{\mathrm{Id}}(\bar{\mathbf x}) = \frac{\Delta_{B}(\varphi)}{\sqrt{2}}, \qquad \text{with } \Delta_{B}(\varphi) \xrightarrow{\varphi \to 0} 1 \quad\text{and}\quad \Delta_{B}(\varphi) \xrightarrow{\varphi \to 1} 0.\]

\paragraph{Derivation of the Copeland stability radius.}
From Lemma~\ref{lem:busa fekete}, for any pair $(i,j)$ with $\sigma_\star(i) < \sigma_\star(j)$, we have
$p_{ij} \geq \frac{1}{1+\varphi}$, so that
\[\left|p_{ij} - \tfrac{1}{2}\right| \;\geq\; \frac{1}{1+\varphi} - \frac{1}{2} = \frac{1-\varphi}{2(1+\varphi)},\]
with equality precisely when $\sigma_\star(i) = \sigma_\star(j)-1$, \textit{i.e.}, for adjacent pairs in $\sigma_\star$. Since Appendix~\ref{sec:cope_radius} establishes that $r_{\psi_C}(\bar{x}) = \min_{i < j}|p_{ij} - 1/2|$ under strict stochastic transitivity, taking the minimum over all pairs yields the claimed expression:
\[r_{\psi_C}\left(\mathbb{E}_{P^M_\theta} \phi_C\right) = \frac{1-\varphi}{2(1+\varphi)}.\]
This quantity is strictly positive for $\varphi \in [0,1)$ and tends to zero as $\varphi \to 1$ (\textit{i.e.}, as the distribution approaches the Haar measure), reflecting the fact that the consensus ranking becomes unidentifiable in the uniform regime.

\paragraph{Comparison of breakdown thresholds.}\label{comp:rad}
Substituting these stability radii into the general lower bound of Lemma~\ref{thm:bound_breakdown}, and using the mixed seminorms $\trinorm{\phi_{C}} = \sqrt{n(n-1)/2}$ and $\trinorm{\phi_{B}} = \sqrt{n(n+1)(2n+1)/6}$, the method-specific breakdown thresholds under a Mallows model with dispersion factor $\varphi$ read:
\[\varepsilon^-_{C} = \frac{1-\varphi}{2(1+\varphi)\sqrt{2n(n-1)}}, \qquad \varepsilon^-_{B} = \frac{\Delta_{B}(\varphi)}{2 \sqrt{n(n+1)(2n+1)/3}}.\]
The ratio $r_\psi(\bar{x})/\trinorm{\phi}$, which controls the breakdown threshold, grows as $O(n^{-\beta})$ and $\beta =1$ for Copeland and $\beta = 3/2$ for Borda as $n \to \infty$. This confirms analytically the empirical advantage of Copeland over Borda visible in Figure~\ref{fig:breakdown}: at fixed $\varphi$ and large $n$, the pairwise structure of Copeland scores yields a larger spectral ratio, and hence a higher breakdown threshold in the stationary regime. We note however that both thresholds vanish as $\varphi \to 1$, consistently with the loss of identifiability of $\sigma_\star$ under near-uniform preferences.

\section{Approximated metric-based consensus.}
\label{sec:gossip-other}
We are interested in problems that follow a metric-based minimization problem as described in \eqref{eq:metric_median}. A correspondence can be made between barycentric (metric-based) and score-based approaches. Indeed, minimizing the sum of Spearman-$\rho$ correlations $d_\rho({ \sigma}, { \sigma}^{\prime})=\sum_{i=1}^n({\sigma}(i)-{ \sigma}^{\prime}(i))^2$ leads to the Borda consensus. Spearman's footrule $d_1(\sigma, \sigma') := \sum_{i=1}^n |\sigma(i) - \sigma'(i)|$, in turn, gives rise to the median aggregation rule \cite{fagin2003efficient}. If the coordinate-wise median ranks across voters form a valid permutation, then this permutation minimizes the total footrule distance \cite{fagin2003efficient}. We first describe techniques to make the computations of specific medians more tractable, before giving specific details about a greedy approach for Kemeny consensus, we first describe heuristic methods to estimate the minimum of a median estimation problem for Spearman Footrule distance $d_1$ and the use of Local Kemenization to enforce robustness with respect to $d_\tau$.

\subsection{Spearman Footrule relaxation}
In this section, we present a gossip-based method for the Footrule consensus (see Alg. \ref{alg:footrule}) using AsylADMM for decentralized median estimation \cite{van2026robust}. We also introduce a decentralized implementation of Local Kemenization (see Alg. \ref{alg:local-kem}), which enables closer approximation to the Kemeny consensus while remaining computationally tractable.

\begin{algorithm}[ht]
\caption{Decentralized Footrule Consensus \cite{fagin2003efficient}.}
\label{alg:footrule}
\begin{algorithmic}[1]
\STATE \textbf{Init}: Each voter $v$ initializes its estimate as $\mathbf{x}_v = \sigma_v$.\\ Initialize median estimator: $\texttt{asyladmm.init()}$. 
\FOR{$t=1, 2, \ldots, T$}
\STATE Select an edge $e = (u,v)\in E$ with probability $p_e$.
\STATE $\mathbf{x}_u, \mathbf{x}_v \leftarrow \texttt{asyladmm.update}({\mathbf{x}_u,  \mathbf{x}_v})$.
\ENDFOR 
\STATE \textbf{Output:} For all node $v\in\N$, return the Footrule (median rank) estimate $\widehat{ \sigma}_v$ by sorting the components of $\mathbf{x}_v$ in ascending order.
\end{algorithmic}
\end{algorithm}
\subsection{Local Kemenization}
Beyond global optimality, local Kemeny optimality \cite{dwork2001rank} offers a computationally tractable alternative that satisfies the extended Condorcet property, ensuring that Condorcet winners are ranked highly while Condorcet losers are placed near the bottom. Every Kemeny optimal solution is locally Kemeny optimal, but the converse does not hold, making local optimality an attractive relaxation that approximates the Kemeny consensus. Any initial ranking can be transformed into a locally Kemeny optimal solution via the following iterative procedure, known as local Kemenization \cite{dwork2001rank}: \textit{(i)} initialize with an aggregation from a tractable method, such as Borda or Copeland, \textit{(ii)} For each pair of adjacent candidates $(i, j)$ in the current ranking, check whether a majority of voters prefer $j$ over $i$ (\textit{i.e.}, $\hat{p}_{ij} < 1/2$). If so, swap $i$ and $j$; \textit{(iii)} Repeat step b until no further swaps occur. This procedure converges in polynomial time and inherits robustness properties against \textit{spam} or noisy inputs.\\
\begin{algorithm}[ht]
\caption{Decentralized Local Kemenization \cite{dwork2001rank}.}
\label{alg:local-kem}
\begin{algorithmic}[1]
\STATE Select aggregation method $\texttt{consensus}$  (\textit{e.g.}, Copeland, Borda).
\STATE \textbf{Init}: Each voter $v$ initializes its estimate as $\mathbf{x}_v = \left(\widehat p^v_{i j}\right)_{i < j}$, where $\widehat p^v_{ij}=\mathbb{I}\!\left[\sigma_v(i) < \sigma_v(j)\right]$. Do $\texttt{consensus.init()}$.
\FOR{$t=1, 2, \ldots, T$}
\STATE Select an edge $e = (u,v)\in E$ with probability $p_e$.
\STATE Update pairwise scores: $\mathbf{x}_u, \mathbf{x}_v \leftarrow ({\mathbf{x}_u + \mathbf{x}_v})/2$.
\STATE Update consensus scores: \texttt{consensus.update()}.
\ENDFOR
\STATE Compute preliminary ranking: $\widehat{ \sigma}_v\gets \texttt{consensus.sort()}$. Apply Local Kemenization on $\widehat{ \sigma}_v$ using $x_{v,(i,j)}$ in place of $\widehat{p}_{ij}$; following the procedure described above.
\STATE \textbf{Output:} For all nodes $v\in\N$, return the locally Kemeny optimal aggregation $\widehat{ \sigma}_v$.
\end{algorithmic}
\end{algorithm}

We note that for $\sigma_\star$ is a solution of Equation~\eqref{eq:metric_median}, then the optimal loss can be denoted as $L^\star_{\widehat P_N} :=L_{\widehat P_N}(\sigma_\star)$. The excess risk can be defined for a given consensus $\sigma\in\Sn$ as:
\begin{equation}\label{eq:excess_risk_kem}
    \Delta L_{\widehat P_N}(\sigma) = L_{\widehat P_N}(\sigma) - L^\star_{\widehat P_N}.
\end{equation}

This method stems from the fact that the Kendall-$\tau$ distance is minimized when the minimal number of inversions occurs.

\section{Other experiments}
\label{sec:other_exp}
In this section, we present several experiments not included in Section~\ref{sec:numerical_exp} notably on synthetic data models especially with probabilistic models as detail in Appendix~\ref{sec:prob_model}, as well as considerations for metric-based consensus as mentioned in Appendix~\ref{sec:gossip-other}. As in Section~\ref{sec:numerical_exp}, for all settings, the gossip communication follows a uniform distribution over edges, \textit{i.e.}, $p = (1/|E|)_{e \in E}$
\subsection{Convergence study on synthetic data}\label{sec:synth-data-conv}
\paragraph{Experimental setup.} 

We generate synthetic ranking data according to the Mallows model $P^M_\theta$ with parameter $\theta= (\sigma_\star, \varphi)$. We set the number of agents to $N = 151$, the number of items to $n = 8$, and the dispersion parameter to $\varphi = 0.5$, with a central permutation $\sigma_\star$ drawn uniformly at random from $\Sn$ and fixed across all trials. In each trial, the rankings are sampled independently from the Mallows distribution with parameters $\theta$. Communication graphs are generated using three standard random graph models: complete graphs, Watts–Strogatz graphs, and random geometric graphs. For geometric graphs we set the connectivity radius $r=\sqrt{(\log(N) +16)/(N\pi)}$, and for Watts–Strogatz graphs we set $k= \max(\lfloor N/500\rfloor, 6)$. All generated graphs are verified to have a connectivity constant $\lambda(p) \in (0,1)$, ensuring they are connected and non-bipartite, as required by standard convergence assumptions. We report results averaged over $100$ independent trials, with each trial running for $2,000$ iterations. To apply the AsylADMM algorithm, we consider the step size $\rho =1$ and refer to \cite{van2026robust} for more details on their significance. \\

\begin{figure*}[ht]
    \centering
    \begin{subfigure}{0.32\textwidth}
        \centering
        \includegraphics[height=\myheight, width=\mywidth]{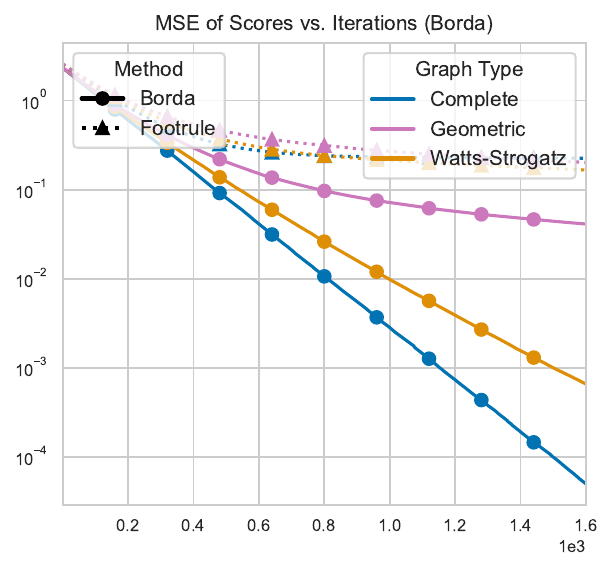}
        \caption{MSE of Borda and Footrule scores}
    \end{subfigure}\hfill
    \begin{subfigure}{0.32\textwidth}
        \centering
        \includegraphics[height=\myheight, width=\mywidth]{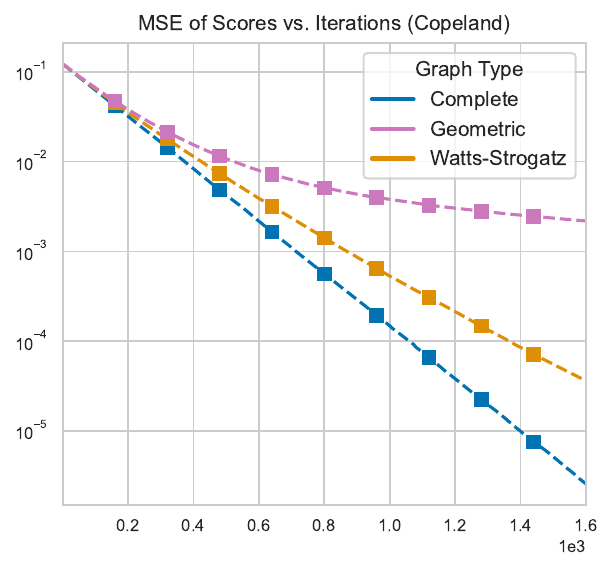}
        \caption{MSE of pairwise probabilities (Copeland's mathod)}
    \end{subfigure}\hfill
    \begin{subfigure}{0.32\textwidth}
        \centering
        \includegraphics[height=\myheight, width=\mywidth]{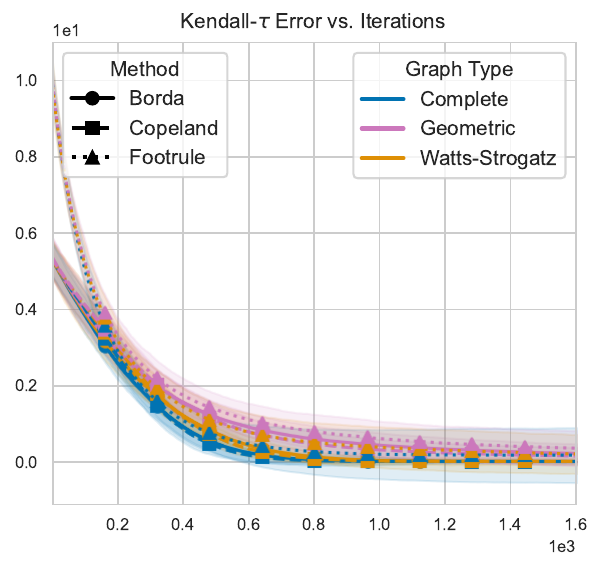}
        \caption{Average $d_\tau$ error to respective consensus}
    \end{subfigure}
    \caption{Convergence of consensus methods (Borda, Copeland, Footrule) on rankings sampled from a Mallows model ($N=151$ agents, $n=8$ items, $\varphi=0.5$) across different graph topologies. Results show mean $\pm$ standard deviation over $100$ trials.}
    \label{fig:mallow_res}
\end{figure*}

\noindent\textbf{Results.} Figure~\ref{fig:mallow_res} presents the convergence behavior of the proposed gossip-based decentralized consensus methods under the Mallows model. The first plot shows the evolution of the mean squared error (MSE) of the Borda scores (\textit{i.e.}, mean of ranks) and Footrule scores (\textit{i.e.}, median of ranks), averaged over all node estimates relative to their true scores. Both methods converge rapidly across all graph topologies, achieving an MSE below $0.5$ in fewer than $1000$ iterations. The Footrule score converges slightly more slowly than the Borda score, which is expected given that estimating the median is inherently more challenging than estimating the mean. The second plot shows the convergence of the pairwise scores underlying the Copeland consensus, which similarly shows fast and stable convergence. The third plot reports the average Kendall-$\tau$ distance between the estimated consensus rankings (Borda, Copeland, and Footrule) and their respective ground-truth consensus rankings (obtained via standard centralized computation). Convergence is again rapid across all methods, with the complete graph converging fastest, followed by the Watts–Strogatz and geometric graphs, consistent with their respective degrees of connectivity.\\

\subsection{Convergence study on Mallows model with varying $\varphi$}\label{sec:ablation_mallows}
\paragraph{Experimental setup.}
We consider a gossip-based preference aggregation setting with $N = 1000$ voters and $n = 7$ alternatives.
Preferences are drawn from a Mallows model with a fixed central ranking $\sigma_\star$ (chosen uniformly at random) and dispersion parameter $\varphi \in \{0.3, 0.5, 0.7, 0.9\}$, where small values of $\varphi$ correspond to rankings concentrated near $\sigma_\star$ (low noise) and large values correspond to near-uniform preferences (high noise). We evaluate on the four graph topologies previously considered to govern the gossip communication structure, all graphs are configured to have $N = 1000$ nodes. We measure performance along two axes: the mean squared error (MSE) between each agent's estimated Borda (resp.\ Copeland) score vector and the ground-truth score vector induced by $\sigma_\star$, and the Kendall-$\tau$ distance between the consensus ranking decoded from the aggregated scores and the central ranking $\sigma_\star$.
All results are averaged over $200$ independent trials, and error bands denote one standard deviation. The gossip process is run for up to $10{,}000$ iterations, with metrics recorded at logarithmically spaced checkpoints.
\paragraph{Results.}
The results are presented in Figure~\ref{fig:disp_mallows_plots}. Across all values of $\varphi$, graph topology is the dominant factor governing convergence: the Complete graph achieves the fastest decay in both MSE and Kendall-$\tau$ error, while the 2D Grid is consistently the slowest, with Watts-Strogatz and Geometric graphs occupying intermediate positions. As $\varphi$ increases from $0.3$ to $0.9$, preferences become more dispersed and convergence degrades for all topologies, with the 2D Grid suffering most severely, at $\varphi = 0.9$, its Kendall-$\tau$ error remains well above zero after $10000$ iterations while the Complete graph still converges to near zero. Borda and Copeland exhibit nearly identical ranking recovery across all settings, indicating that the convergence bottleneck lies in the gossip mixing time of the graph rather than the choice of aggregation rule.
\begin{figure*}[t]
\centering
\setlength{\tabcolsep}{2pt}
\begin{tabular}{ccccc}
\subcaptionbox{Borda MSE, $\varphi =0.3$ (low noise)\label{fig:a1}}{\includegraphics[width=0.24\linewidth]{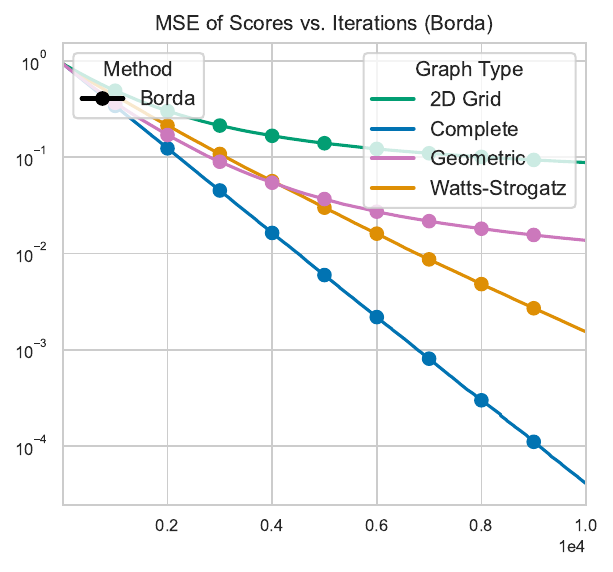}} &
\subcaptionbox{Borda MSE, $\varphi =0.5$\label{fig:b1}}{\includegraphics[width=0.24\linewidth]{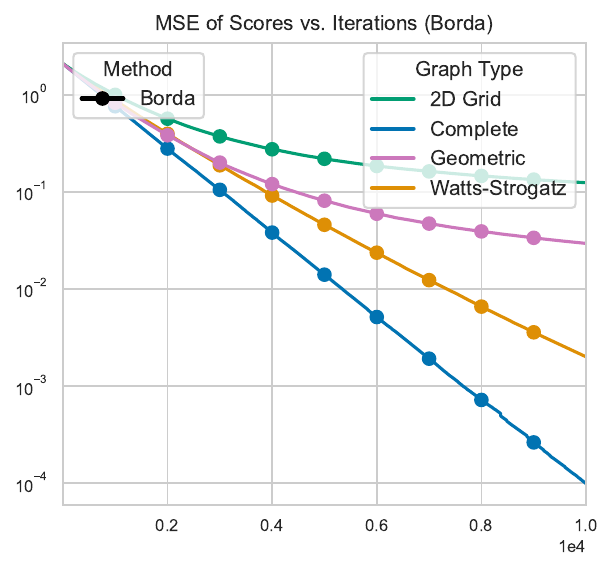}} &
\subcaptionbox{Borda MSE, $\varphi =0.7$\label{fig:c1}}{\includegraphics[width=0.24\linewidth]{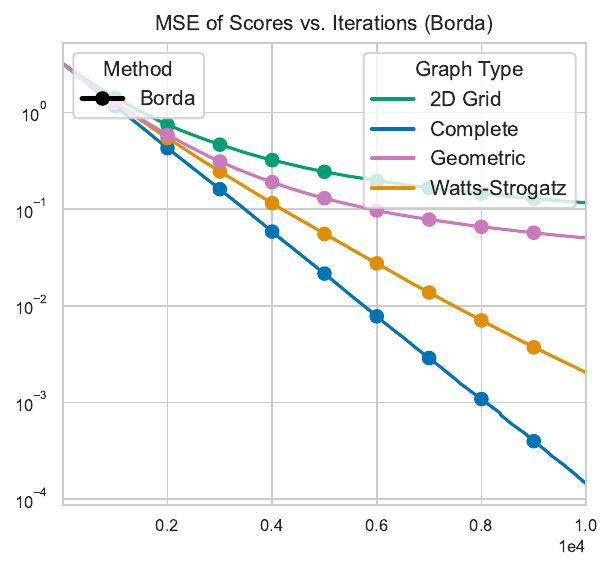}} &
\subcaptionbox{Borda MSE, $\varphi =0.9$ (high noise)\label{fig:e1}}{\includegraphics[width=0.24\linewidth]{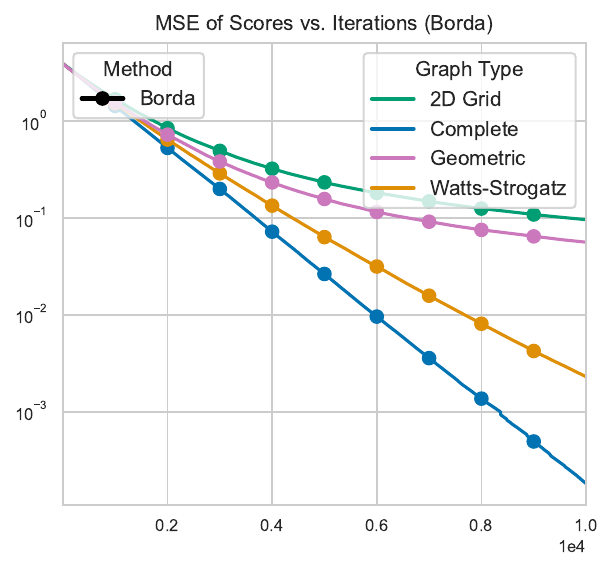}} \\
\subcaptionbox{Copeland MSE, $\varphi =0.3$ (low noise)\label{fig:a2}}{\includegraphics[width=0.24\linewidth]{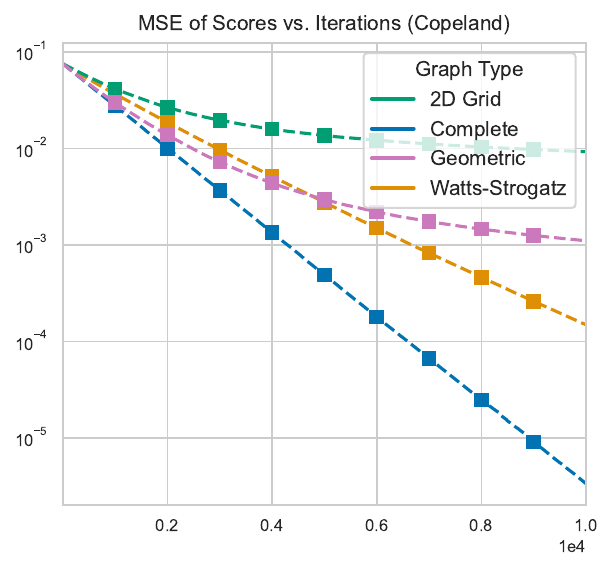}} &
\subcaptionbox{Copeland MSE, $\varphi =0.5$\label{fig:b2}}{\includegraphics[width=0.24\linewidth]{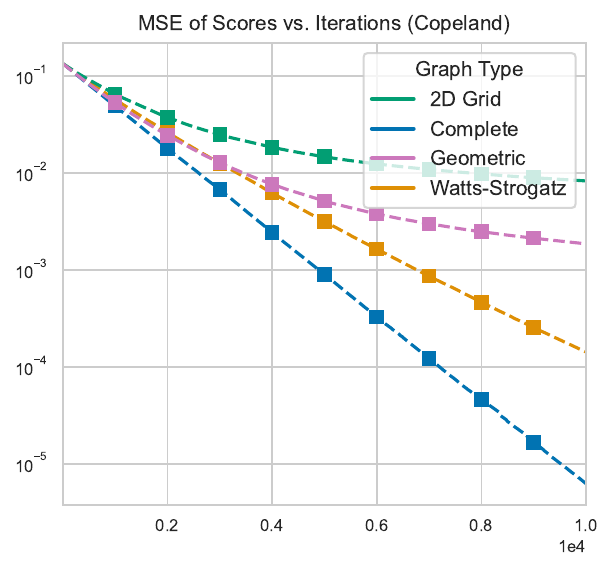}} &
\subcaptionbox{Copeland MSE, $\varphi =0.7$\label{fig:c2}}{\includegraphics[width=0.24\linewidth]{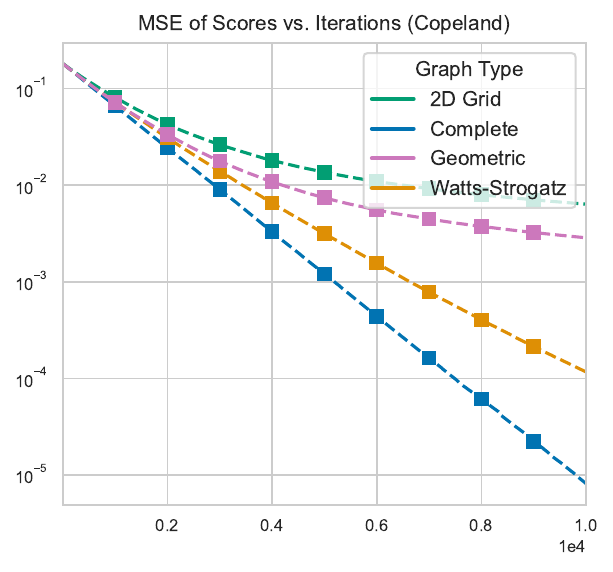}} &
\subcaptionbox{Copeland MSE, $\varphi =0.9$ (high noise)\label{fig:d2}}{\includegraphics[width=0.24\linewidth]{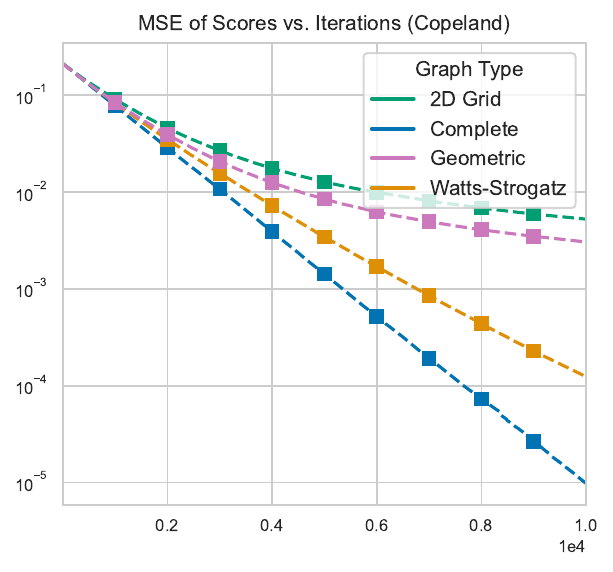}}\\
\subcaptionbox{Borda vs.\ Copeland consensus, $\varphi =0.3$\label{fig:a3}}{\includegraphics[width=0.24\linewidth]{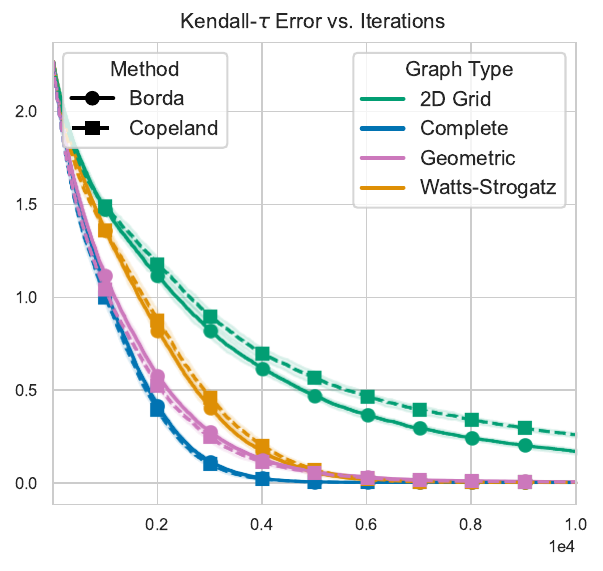}} &
\subcaptionbox{Borda vs.\ Copeland consensus, $\varphi =0.5$\label{fig:b3}}{\includegraphics[width=0.24\linewidth]{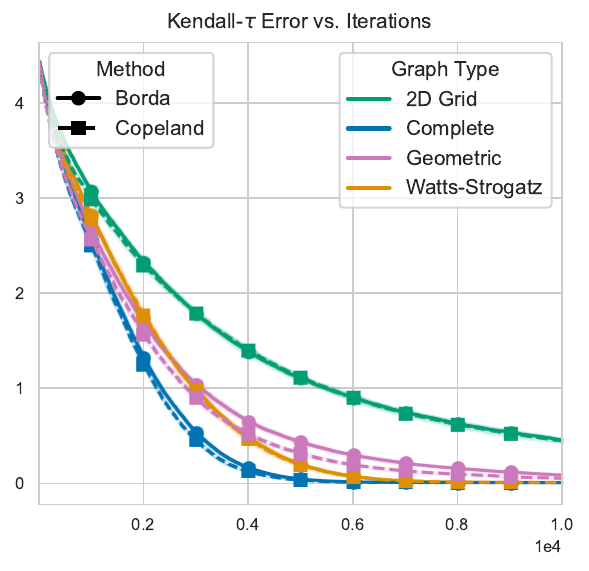}} &
\subcaptionbox{Borda vs.\ Copeland consensus, $\varphi =0.7$\label{fig:c3}}{\includegraphics[width=0.24\linewidth]{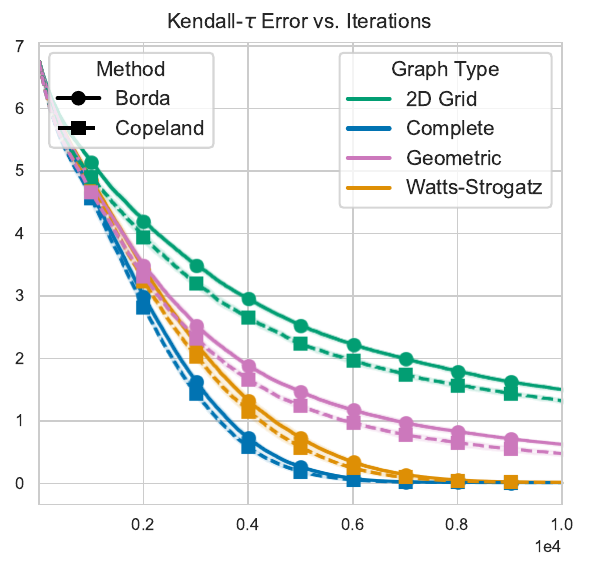}} &
\subcaptionbox{Borda vs.\ Copeland consensus, $\varphi =0.9$\label{fig:d3}}{\includegraphics[width=0.24\linewidth]{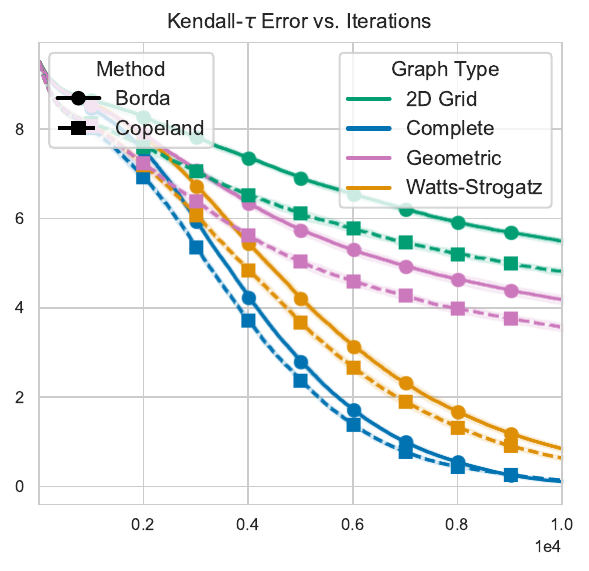}} \\
\end{tabular}
\caption{Score estimation error and consensus quality under the Mallows model ($N=1000$ voters, $n=7$ alternatives) across four dispersion levels $\varphi \in \{0.3, 0.5, 0.7, 0.9\}$ (columns, left to right). Small $\varphi$ indicates low noise (rankings concentrated near the central order); large $\varphi$ indicates high noise (near-uniform preferences). Rows 1-2 show the MSE of estimated Borda and Copeland scores, respectively, as a function of the number of sampled rankings. Row 3 compares the consensus rankings produced by the two rules, measured by their distance to the ground-truth central ranking.}
\label{fig:disp_mallows_plots}
\end{figure*}
\subsection{Local Kemenization as a Kemeny consensus approximation}\label{sec:local_kem_exp}
\paragraph{Experimental setup.} We further evaluate the robustness of classical consensus methods under contamination, as reported in Tables \ref{tab:contaminated_mallows} and \ref{tab:adversarial_mallows}. For simplicity, the analysis is conducted in a centralized setting, though it extends naturally to the decentralized case. For each table, we report for each method the mean Kendall-tau distance $d_{\tau}(\cdot, \sigma_\star)$ to the true reference ranking of the Mallows distribution (the target we aim to recover), the excess loss $\Delta L_{\widehat P_N}$, and the effect of local Kemenization on $\Delta L_{\widehat P_N}$ as a post-processing step to approximate the Kemeny consensus. We consider two contamination distributions for the corrupted rankings $(\sigma_w)_{w \in B}$. In the random contamination setting, corrupted rankings are sampled i.i.d.\ from the Haar measure $P_{\mathrm{Haar}}$ on $\Sn$. In the adversarial contamination setting, they are sampled i.i.d.\ from a Mallows model $P^{M}_{\theta}$ with $\theta =(\sigma^{\mathrm{rev}},0)$ centered at the reversed ranking $\sigma^{\mathrm{rev}}$.

\paragraph{Results.} Our results as illustrated in Table~\ref{tab:contaminated_mallows} and Table~\ref{tab:adversarial_mallows} show that Kemeny, followed by Copeland, is the most robust consensus method under both contamination models. Notably, applying local Kemenization to either the Borda or Copeland consensus consistently recovers the Kemeny consensus, suggesting that local Kemenization is a promising and lightweight strategy for improving the robustness of simpler consensus methods.
\begin{table}
\centering
\begin{tabular*}{\columnwidth}{@{\extracolsep{\fill}}@{\hspace{0.5em}}lccc}
\toprule
\textbf{Consensus} & $d_{\tau}(\cdot, \sigma_\star)$ & $\Delta L_{\widehat P_N}$ w/o Local Kemenization & $\Delta L_{\widehat P_N}$ w/ Local Kemenization \\
\midrule
Borda    & $0.25 \pm 0.46$ & $0.01 \pm 0.03$ & $0.00 \pm 0.00$ \\
Copeland & $0.07 \pm 0.26$ & $0.00 \pm 0.00$ & $0.00 \pm 0.00$ \\
Footrule & $1.04 \pm 0.81$ & $0.17 \pm 0.17$ & $0.00 \pm 0.00$ \\
Kemeny   & $0.06 \pm 0.24$ & N/A\tablefootnote{For the global Kemeny consensus, the system is already locally Kemeny optimal.} & N/A\footnotemark[\value{footnote}] \\
\bottomrule
\end{tabular*}

\vspace{0.3em}
\caption{Robustness of consensus methods for recovering the central ranking $\sigma_\star$ under random contamination. Rankings are sampled from a Mallows model ($N=100$, $n=8$ items, $\varphi=0.5$) where $\varepsilon=0.3$ of samples are replaced with uniform random permutations. We report the Kendall-tau distance to the ground truth $d_{\tau}(\cdot, \sigma_\star)$ and the excess risk $\Delta L_{\widehat P_N}$ follows the definition in Equation~\eqref{eq:excess_risk_kem}, where $L_{\widehat P_N}^\star$ is the Kemeny optimal value for the empirical distribution $\widehat P_N$. Results show mean $\pm$ standard deviation over $100$ trials.}
\label{tab:contaminated_mallows}
\end{table}

\begin{table}
\centering
\begin{tabular*}{\columnwidth}{@{\extracolsep{\fill}}@{\hspace{0.5em}}lccc}
\toprule
\textbf{Consensus} & $d_{\tau}(\cdot, \sigma_\star)$ & $\Delta L_{\widehat P_N}$ w/o Local Kemenization & $\Delta L_{\widehat P_N}$ w/ Local Kemenization \\
\midrule
Borda    & $0.91 \pm 0.85$ & $0.03 \pm 0.04$ & $0.00 \pm 0.00$ \\
Copeland & $0.59 \pm 0.71$ & $0.00 \pm 0.02$ & $0.00 \pm 0.01$ \\
Footrule & $1.26 \pm 0.99$ & $0.13 \pm 0.11$ & $0.01 \pm 0.02$ \\
Kemeny   & $0.58 \pm 0.67$ & N/A\footnotemark[\value{footnote}] & N/A\footnotemark[\value{footnote}] \\
\bottomrule
\end{tabular*}
\vspace{0.3em}
\caption{Robustness of consensus methods under adversarial contamination. Setup is identical to Table~\ref{tab:contaminated_mallows}, except contaminated samples are drawn from a Mallows model centered at the reversed ranking $\sigma_\mathrm{rev}$ rather than uniform random permutations.}
\label{tab:adversarial_mallows}
\end{table}

\section{Experiments compute resources}
\label{sec:compute}
All experiments were run on a single compute node equipped with one NVIDIA P100 16GB GPU, 30GB RAM, and 20 CPU cores. Despite GPU availability, all computations are CPU-bound (Numpy/Numba), with parallelism limited to 8 workers across trial configurations. In Table~\ref{tab:compute_time}, we describe approximately the time to execute the different numerical simulations.
\begin{table}
\centering
\begin{tabular}{lc}
\toprule
\textbf{Experiments} & Time to compute \\
\midrule
Experiment~\ref{sec:gossip_conv_exp} (Debian) & $< 5$min\\
Experiment~\ref{sec:gossip_conv_exp} (Sushi) & $\sim 1$h\\
Experiment~\ref{sec:gossip_conv_exp} (Mixture of Mallows) & $\sim 15$min\\
Experiment~\ref{sec:breakdown_expe} (Breakdown study) & $\sim10$h\\
Experiment~\ref{sec:synth-data-conv} (Synthetic data with Spearman Footrule) & $\sim 1$h\\
Experiment~\ref{sec:ablation_mallows} (Varying dispersion parameter) & $\sim 1$h$05$\\
Experiment~\ref{sec:local_kem_exp} (Local Kemenization study) & $\sim 30$min\\
\bottomrule
\end{tabular}
\vspace{0.3em}
\caption{Time to compute per simulation.}
\label{tab:compute_time}
\end{table}


\end{document}